\definecolor{lb}{rgb}{0.231, 0.459, 0.686}
\def\eqref#1{equation~\ref{#1}}
\def\1{\bm{1}}
\def\rW{{\textnormal{W}}}
\def\rX{{\textnormal{X}}}
\def\rY{{\textnormal{Y}}}
\def\veps{{\bm{\varepsilon}}}
\def\vs{{\bm{s}}}
\def\vx{{\bm{x}}}
\def\vy{{\bm{y}}}
\def\vz{{\bm{z}}}
\def\mA{{\bm{A}}}
\def\mI{{\bm{I}}}
\DeclareMathAlphabet{\mathsfit}{\encodingdefault}{\sfdefault}{m}{sl}
\SetMathAlphabet{\mathsfit}{bold}{\encodingdefault}{\sfdefault}{bx}{n}
\newcommand{\E}{\mathbb{E}}
\DeclareMathOperator*{\argmax}{arg\,max}
\newtheorem{theorem}{Theorem}
\newtheorem{lemma}{Lemma}
\newtheorem{prop}{Proposition}
\newtheorem{corollary}{Corollary}
\newcommand{\colorscore}[1]{\tcbox[on line,boxsep=0pt,left=2pt,right=2pt,top=2pt,bottom=2pt,colback=orange!18,colframe=white]{\ensuremath{#1}}}
\newcommand{\colornet}[1]{\tcbox[on line,boxsep=0pt,left=2pt,right=2pt,top=2pt,bottom=2pt,colback=blue!10,colframe=white]{\ensuremath{#1}}}
\newcommand{\colordelta}[1]{\tcbox[on line,boxsep=0pt,left=2pt,right=2pt,top=2pt,bottom=2pt,colback=red!15,colframe=white]{\ensuremath{#1}}}
\def\div{\operatorname{div}}
\def\E{\mathbb{E}}
\title{Diffusion Models as Cartoonists:\\
The Curious Case of High Density Regions}
\author{Rafał Karczewski, Markus Heinonen\\ 
Aalto University\\
\texttt{\{rafal.karczewski, markus.o.heinonen\}@aalto.fi} 
 \And
 Vikas Garg \\
 YaiYai Ltd and Aalto University \\
 \texttt{vgarg@csail.mit.edu}
 }
\begin{document}

\maketitle

\begin{abstract}
We investigate what kind of images lie in the high-density regions of diffusion models. We introduce a theoretical mode-tracking process capable of pinpointing the exact mode of the denoising distribution, and we propose a practical high-density sampler that consistently generates images of higher likelihood than usual samplers. Our empirical findings reveal the existence of significantly higher likelihood samples that typical samplers do not produce, often manifesting as cartoon-like drawings or blurry images depending on the noise level. Curiously, these patterns emerge in datasets devoid of such examples. We also present a novel approach to track sample likelihoods in diffusion SDEs, which remarkably incurs no additional computational cost. Code is available at \url{https://github.com/Aalto-QuML/high-density-diffusion}
\end{abstract}

\section{Introduction}

Recently, \citet{karras2024guiding} attributed the empirical success of guided diffusion models to their ability to limit outliers, i.e. samples $\vx_0 \sim p_0$ with low likelihood $p_0(\vx_0)$.
We argue that these models in fact also elude samples with very high likelihoods.  Our assertion stems from investigating a key question: what manifests if we bias the sampler towards high-density regions of $p_0$?
However, an immediate hurdle is the inability of stochastic diffusion models to track their own likelihood \citep{song2020score,song2021maximum}.

\begin{figure}[b!]
    \centering\includegraphics[width=0.99\linewidth]{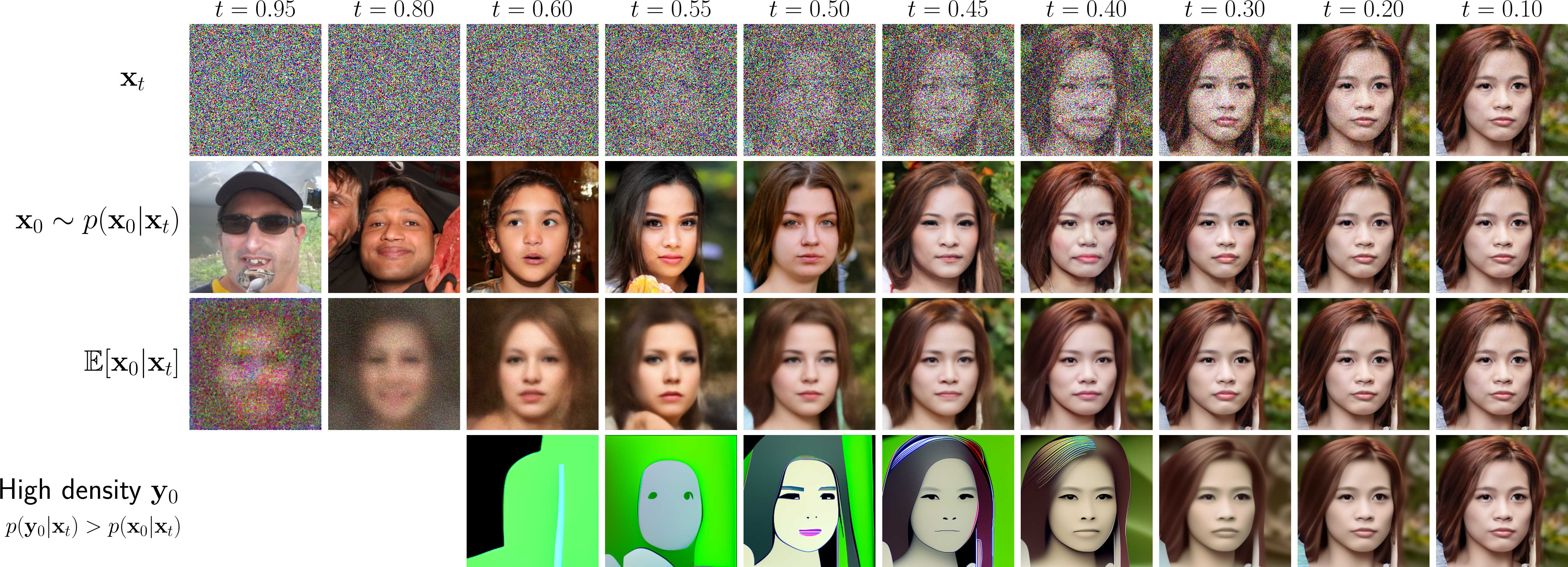}
    \caption{\textbf{High-density samples $\vy_0|\vx_t$ resemble cartoon drawings.} This is in contrast to regular denoising samples $\vx_0 \sim p(\vx_0|\vx_t)$ or expectations $\E[\vx_0|\vx_t]$. The data contains no cartoons.}
    \label{fig:ffhq-samples}
\end{figure}

First, we show that likelihood can be tracked in diffusion models with novel augmented stochastic differential equations (SDE), which govern the evolution of a sample with its log-density $\log p_t(\vx_t)$ under the optimal (unknown) model. For approximate models, we provide a formula for the bias in the log-density estimate. The evaluation of the log-density estimate comes at no additional cost, and can be used with any pretrained model without further tuning.

Then, we introduce a theoretical mode-tracking process, which finds the exact mode of the denoising distribution $p(\vx_0|\vx_t)$ under some technical assumptions, albeit at a high computational cost. We propose an approximative \emph{high density sampler} whose images almost always have higher likelihood than regular samples. 

We leverage these findings to give insights into the diffusion model probability landscape. We find that dramatically higher likelihood samples exist that a regular sampler never returns in practise. We observe that the high density samples tend to be (i) blank images for high noise levels, (ii) cartoon drawings for moderate noise, and (iii) blurry images for low noise (See \autoref{fig:ffhq-samples}). 
\emph{This is despite the datasets not containing any cartoon drawings.}
Curiously, on FFHQ-256 we observe 97\% correlation between model's likelihood estimates and the amount of information in the image.

We summarize our contributions in \autoref{fig:contrib}.

\begin{figure}[t]
\begin{center}
\begin{minipage}{11cm}
\begin{tcolorbox}[tab2, tabularx={ll},toptitle=1.4mm,title={Summary of our contributions},boxrule=1pt,top=0.9ex,bottom=0.9ex,colbacktitle=lb!10!white,colframe=lb!70!white]
\textbf{Augmented SDEs (\autoref{sec:exact_likelihood} + \autoref{sec:accuracy})} \\ \addlinespace[0.3ex]
\quad Exact $\log p_t(x_t)$ when $\nabla \log p_t(x)$ and $p_T$ are known & \autoref{sec:stochastic-trajectories}
\\ \addlinespace[0.3ex]
\quad Estimated $\log p^{\mathrm{SDE}}_t(x_t)$ for approximate $\nabla \log p_t(x)$ & \autoref{sec:approx-dynamics}
\\ \addlinespace[0.3ex]
\quad Variational gap $\log p_0^{\mathrm{SDE}}(\vx_0) - \mathrm{ELBO}(\vx_0)$ formula & \autoref{eq:elbo}
\\ \addlinespace[0.3ex]
\quad Estimation error analysis & \autoref{sec:bias-bound}
\\ \addlinespace[0.3ex] \midrule
\textbf{Mode-tracking (\autoref{sec:mode-estimation}):}
\\ \addlinespace[0.3ex]
\quad Evolution of the mode of $p(x_s | x_t)$ & \autoref{th:mode-ode}
\\ \addlinespace[0.3ex]
\quad High-Density ODE approximation & \autoref{rem:gaussian-ode}
\\ \addlinespace[0.3ex]
\quad General instantaneous change of variables & \autoref{lem:gen-insta}
\\ \addlinespace[0.3ex] \midrule
\textbf{Diffusion probability landscape (\autoref{sec:diff-prob-landscape}):} \\ \addlinespace[0.3ex]
\quad Unrealistic images have the highest likelihoods & \autoref{fig:ffhq-samples}
\\ \addlinespace[0.3ex]
\quad Blurring an image increases its likelihood & \autoref{fig:blurring-logp-ffhq}
\\ \addlinespace[0.3ex]
\quad Likelihood estimates correlate with PNG size & \autoref{fig:logp-vs-png-ffhq}
\end{tcolorbox}
\end{minipage}
\end{center}
    \caption{Our contributions.}
    \label{fig:contrib}
\end{figure}

\section{Likelihood estimation in SDE models}\label{sec:exact_likelihood}

We model the data distribution with a diffusion process \citep{song2020score}, which gradually transforms the data into a Gaussian. We define a forward process $p(\vx_t | \vx_0) = \mathcal{N}(\vx_t;\alpha_t \vx_0, \sigma_t^2 \mI_D)$, where $D$ is the dimensionality of the data, $\alpha_t$ and $\sigma_t$ are schedule parameters with a decreasing signal-to-noise ratio
    $\text{SNR}(t) = \alpha_t^2 / \sigma_t^2 = e^{\lambda_t}$. 
\citet{song2020score} showed an equivalence to a linear stochastic differential equation (SDE) with $\vx_0 \sim p_0$ - data distribution and
\begin{equation}\label{eq:sde}
    d\vx_t = f(t) \vx_t dt + g(t) d\rW_t,
\end{equation}
with drift $f(t) = \frac{d \log \alpha_t}{dt}$, diffusion $g^2(t) = -\frac{d \lambda_t}{dt} \sigma_t^2$, and $\{\rW_t\}$ a Wiener process.
Most of our theoretical results also hold for non-linear SDEs such as the one proposed by \citet{bartosh2024neural}. We discuss the impact of the linearity of the drift in \autoref{sec:discussion} and provide the general formulas in the appendices.
\citet{anderson1982reverse} showed that there exists a corresponding (sharing the distribution over trajectories) \textit{reverse-time} SDE
\begin{equation}\label{eq:rev-sde}
    d\vx_t = \Big(f(t)\vx_t - g^2(t) \colorscore{\nabla_\vx \log p_t(\vx_t)} \Big) dt + g(t) d\overline{\rW}_t,
\end{equation}
where $p_t$ is the marginal density of the forward process (\eqref{eq:sde}), and $\{\overline{\rW}_t\}$ is a Wiener process going backwards in time.
Training of diffusion models centers on efficient estimation of the \emph{score} $\nabla_\vx \log p_t(\vx)$ \citep{hyvarinen2005estimation, vincent2011connection, song2020sliced, kingma2024understanding}.

\subsection{Diffusion ODE}\label{sec:smooth-trajectories}

We are interested in sampling high-density regions of diffusion models and for that, we need ability to calculate the likelihood $p_0(\vx)$ of a sample image $\vx \sim p_0$.
As the reverse-SDE (\eqref{eq:rev-sde}) does not track the likelihood evolution $d\log p_t(\vx_t)$, a typical method for obtaining sample likelihoods is instead via the Probability-Flow ODE \citep{chen2018neural,song2020score}
\begin{equation}\label{eq:augmented-ode-dynamics}
    d\begin{bmatrix}
        \vx_t \\ \log p_t(\vx_t)
    \end{bmatrix} = \begin{bmatrix}
        f(t)\vx_t - \frac{1}{2}g^2(t) \colorscore{\nabla_\vx \log p_t(\vx_t)} \\ 
        -f(t)D + \frac{1}{2}g^2(t) \colordelta{\div_\vx} \colorscore{\nabla_\vx \log p_t(\vx_t)}
    \end{bmatrix}dt,
\end{equation}
where $\div_\vx=\sum_i \frac{\partial}{\partial x_i}$ is the divergence operator. The PF-ODE is a continuous normalizing flow (CNF) that shares marginal distributions $p_t(\vx_t)$ with both forward (\ref{eq:sde}) and reverse SDE (\ref{eq:rev-sde}), when they share $p_0$ \citep{song2020score}.
The PF-ODE tracks and returns both the sample $\vx_0$ and its likelihood $\log p_0(\vx_0)$ from $\vx_T \sim p_T$. It requires second-order derivatives $\div \nabla$, which are roughly twice as expensive as the score to evaluate \citep{hutchinson1989stochastic,grathwohl2018ffjord}.

\subsection{Augmented stochastic dynamics}\label{sec:stochastic-trajectories}

\begin{figure}[t]
    \centering
    \includegraphics[width=1.\linewidth]{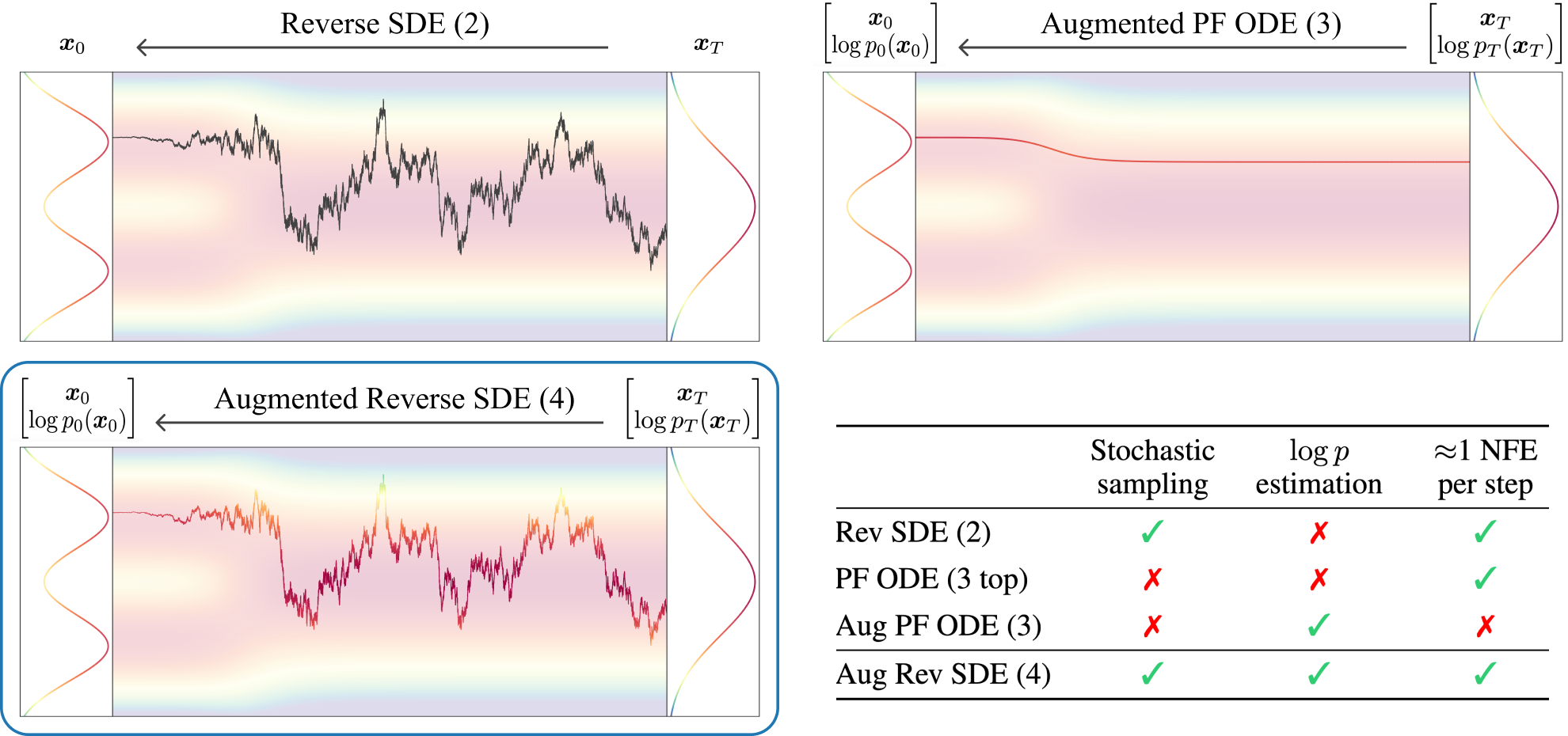}
    \caption{\textbf{Tracking stochastic sampling likelihood.} Estimation of $\log p_t(\vx_t)$ (\gradientRGB{colored trajectory}{0,255,0}{255,0,0}) for stochastic sampling via Augmented Reverse SDE (\eqref{eq:rev-sde-logp-dynamics}) on a Gaussian mixture with known $\nabla_\vx \log p_t(\vx)$ and $p_T$. Evaluation of $d\log p_t(\vx_t)$ requires only the score function.}
    \label{fig:sde-logp}
\end{figure}

It has been reported that a stochastic sampler (\ref{eq:rev-sde}) has superior sample quality to the PF-ODE (\ref{eq:augmented-ode-dynamics}) \citep{song2020score, karras2022elucidating}. However, the absence of density-augmented SDEs forbids the stochastic sampler from describing the likelihood of the samples it yields \citep{song2021maximum, lu2022maximum, zheng2023improved, lai2023fp}. 
We bridge this gap by generalizing \eqref{eq:augmented-ode-dynamics} to account for the stochastic evolution of $\vx$.
\begin{restatable}[Augmented reverse SDE]{theorem}{revsdelikelihood}\label{th:rev-sde-likelihood}
    Let $\vx$ be a random process defined by \eqref{eq:rev-sde}. Then
    \begin{equation}\label{eq:rev-sde-logp-dynamics}
        d\begin{bmatrix}
        \vx_t \\ \log p_t(\vx_t)
    \end{bmatrix} = \begin{bmatrix}
        f(t)\vx_t - g^2(t) \colorscore{\nabla_\vx \log p_t(\vx_t)} \\ -f(t)D - \frac{1}{2}g^2(t) \| \colorscore{\nabla_\vx \log p_t(\vx_t)} \|^2
    \end{bmatrix}dt + g(t)\begin{bmatrix}
        \mI_D \\ \colorscore{\nabla_\vx \log p_t(\vx_t)}^T
    \end{bmatrix}d\overline{\rW}_t.
    \end{equation}
\end{restatable}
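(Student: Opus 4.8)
The plan is to obtain the dynamics of $\log p_t(\vx_t)$ by applying Itô's formula to $\phi(\vx,t):=\log p_t(\vx)$ along the reverse SDE \eqref{eq:rev-sde}, organizing the computation around the observation that the $\vx$-component of \eqref{eq:rev-sde} is the probability-flow ODE drift appearing in \eqref{eq:augmented-ode-dynamics} augmented by an extra drift $-\tfrac12 g^2(t)\nabla_\vx\log p_t(\vx_t)$ and an extra diffusion $g(t)\,d\overline{\rW}_t$. Itô's formula then expands $d\phi(\vx_t,t)$ into three pieces: (i) the contribution of $\partial_t\phi$ together with the PF-ODE drift, which is exactly the second coordinate of \eqref{eq:augmented-ode-dynamics}, i.e. $\big(-f(t)D + \tfrac12 g^2(t)\,\div_\vx\nabla_\vx\log p_t(\vx_t)\big)dt$; (ii) the contribution of the extra drift, which by the chain rule is $-\tfrac12 g^2(t)\,\|\nabla_\vx\log p_t(\vx_t)\|^2\,dt$; and (iii) the contribution of the extra noise, namely the martingale $g(t)\,\nabla_\vx\log p_t(\vx_t)^\top d\overline{\rW}_t$ plus a second-order Itô correction. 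Summing (i)--(iii), the Itô correction from (iii) cancels the divergence term in (i), leaving precisely \eqref{eq:rev-sde-logp-dynamics}.

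The delicate point -- and the step I expect to be the main obstacle -- is pinning down the sign of that Itô correction, since \eqref{eq:rev-sde} runs backward in time. The cleanest way to be rigorous is to substitute $\tau = T-t$ and set $\vy_\tau := \vx_{T-\tau}$: by \citet{anderson1982reverse}, $\vy_\tau$ solves an ordinary forward Itô SDE with drift $-f(T{-}\tau)\vy + g^2(T{-}\tau)\nabla\log p_{T-\tau}(\vy)$ and diffusion coefficient $g(T{-}\tau)$, so the standard Itô formula applies to $\log p_{T-\tau}(\vy_\tau)$ with the usual $+\tfrac12 g^2\,\div_\vx\nabla_\vx\log p_t$ correction; transforming back via $d\tau = -dt$ flips this to $-\tfrac12 g^2\,\div_\vx\nabla_\vx\log p_t\,dt$, which is what annihilates the divergence term from (i). This computation also needs $\partial_t\log p_t$, which I would read off from the Fokker--Planck equation $\partial_t p_t = -\div_\vx\!\big(f(t)\vx\,p_t\big) + \tfrac12 g^2(t)\,\div_\vx\nabla_\vx p_t$ for the forward marginals of \eqref{eq:sde}: dividing by $p_t$ and using $\div_\vx\nabla_\vx p_t / p_t = \div_\vx\nabla_\vx\log p_t + \|\nabla_\vx\log p_t\|^2$ gives $\partial_t\log p_t = -f(t)D - f(t)\vx\cdot\nabla_\vx\log p_t + \tfrac12 g^2(t)\big(\div_\vx\nabla_\vx\log p_t + \|\nabla_\vx\log p_t\|^2\big)$. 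Plugging this into the Itô expansion, the $\vx\cdot\nabla_\vx\log p_t$ terms cancel against the transport term, one copy of $\tfrac12 g^2\,\div_\vx\nabla_\vx\log p_t$ cancels against the Itô correction, and $-f(t)D - \tfrac12 g^2(t)\|\nabla_\vx\log p_t\|^2$ survives -- recovering \eqref{eq:rev-sde-logp-dynamics} back in the original time.

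The conceptual content of the result is exactly this cancellation: along the deterministic PF-ODE the density picks up the second-order term $\div_\vx\nabla_\vx\log p_t$, but along the stochastic reverse trajectory the backward Itô correction has the opposite sign and cancels it, leaving a formula that depends on the score alone -- hence evaluable at no extra cost. The remaining ingredients are routine and I would simply state them: $p_t$ is a Gaussian convolution of $p_0$, hence strictly positive and smooth in $\vx$ for every $t>0$, so $\phi=\log p_t\in C^{2,1}$ and Itô's formula is licensed; the stochastic-integral term is a well-defined (local) martingale; and linearity of the drift is used only to split $\div_\vx(f(t)\vx\,p_t) = f(t)D\,p_t + f(t)\vx\cdot\nabla_\vx p_t$ -- for a general state-dependent drift $f(\vx,t)$ the identical argument produces the non-linear version deferred to the appendix.
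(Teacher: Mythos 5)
Your proposal is correct and matches the paper's proof in all essentials: both substitute $\tau = T-t$ to recast the reverse SDE as an ordinary forward Itô SDE, apply Itô's lemma to $\log p_{T-\tau}$ with $\partial_t \log p_t$ read off from the Fokker--Planck equation, observe that the $\tfrac12 g^2 \Delta_\vx \log p_t$ Itô correction exactly cancels the corresponding divergence term, and transform back. Your framing around the PF-ODE (piece (i)) is a slightly different bookkeeping of the same algebra, and your explicit attention to the sign of the second-order correction under time reversal is precisely the subtlety the paper's $\tau$-substitution is designed to handle.
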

The proof (\autoref{app:rev-sde-log-likelihood-proof}) consists of applying Itô's lemma and the Fokker-Planck equation. 
Note that $d\log p_t(\vx_t)$ is economical to track as it only needs access to the first-order score.
\autoref{fig:sde-logp} visualises a particle following a stochastic reverse trajectory while tracking its log-density.
This result contributes a new useful and economical tool to generate a sample together with its density estimate, which previously was done using PF-ODE \citep{jing2022torsional}.
Similarly to the reverse SDE, we also introduce density-tracking forward SDE.
\begin{restatable}[Augmented forward SDE]{theorem}{fwdsdelikelihood}\label{th:fwd-sde-logp}
Let $\vx$ be a random process defined by \eqref{eq:sde}. Then
    \begin{equation}\label{eq:fwd-sde-logp-dynamics}
        d\begin{bmatrix}
        \vx_t \\ \log p_t(\vx_t)
    \end{bmatrix} =  \begin{bmatrix}
        f(t)\vx_t\\F(t, \vx_t) 
    \end{bmatrix}dt + g(t)\begin{bmatrix}
        \mI_D \\ \colorscore{\nabla_\vx \log p_t(\vx_t)}^T
    \end{bmatrix}d\rW_t,
    \end{equation}
    where    
    \begin{equation*}
        F(t, \vx_t) = -\div_\vx \left(f(t)\vx_t - g^2(t)  \colorscore{\nabla_\vx \log p_t(\vx_t)} \right)  + \frac{1}{2}g^2(t) \| \colorscore{\nabla_\vx \log p_t(\vx_t)} \|^2.
    \end{equation*}
\end{restatable}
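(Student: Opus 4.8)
The plan is to derive the augmented forward SDE directly from the augmented reverse SDE of \autoref{th:rev-sde-likelihood}, exploiting the fact that \eqref{eq:sde} and \eqref{eq:rev-sde} induce the same law on trajectories and the same marginals $p_t$. First I would write down the forward SDE for $\vx_t$ together with the Itô dynamics of the scalar process $\log p_t(\vx_t)$: since $p_t$ solves the Fokker--Planck equation $\partial_t p_t = -\div_\vx(f(t)\vx\, p_t) + \tfrac12 g^2(t)\,\Delta_\vx p_t$, applying Itô's lemma to the (time-dependent) function $(t,\vx)\mapsto \log p_t(\vx)$ along \eqref{eq:sde} gives
\begin{equation*}
  d\log p_t(\vx_t) = \left(\frac{\partial_t p_t(\vx_t)}{p_t(\vx_t)} + \big(f(t)\vx_t\big)\!\cdot\!\nabla_\vx\log p_t(\vx_t) + \tfrac12 g^2(t)\,\Delta_\vx\log p_t(\vx_t)\right)dt + g(t)\,\nabla_\vx\log p_t(\vx_t)^T d\rW_t,
\end{equation*}
and the martingale (noise) term has exactly the coefficient $g(t)\nabla_\vx\log p_t(\vx_t)^T$ claimed, matching the second block of the noise vector in \eqref{eq:fwd-sde-logp-dynamics}. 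It remains to identify the drift with $F(t,\vx_t)$.

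The key algebraic step is to substitute the Fokker--Planck equation for $\partial_t p_t/p_t$ and convert the raw second derivatives $\Delta_\vx p_t/p_t$ and $\Delta_\vx\log p_t$ into the ``score-only'' form. Using the identity $\Delta_\vx\log p_t = \frac{\Delta_\vx p_t}{p_t} - \|\nabla_\vx\log p_t\|^2$ (equivalently $\div_\vx\nabla_\vx\log p_t = \frac{\Delta_\vx p_t}{p_t} - \|\nabla_\vx\log p_t\|^2$), the two second-order contributions combine: the $\tfrac12 g^2 \Delta_\vx p_t/p_t$ coming from Fokker--Planck and the $\tfrac12 g^2 \Delta_\vx\log p_t$ coming from Itô's lemma add up, after cancellation, to leave precisely $+\tfrac12 g^2(t)\|\nabla_\vx\log p_t(\vx_t)\|^2$ plus the $\div_\vx(g^2(t)\nabla_\vx\log p_t)$ piece. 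Collecting the remaining first-order terms, the $f$-dependent pieces assemble into $-\div_\vx(f(t)\vx_t) + (f(t)\vx_t)\cdot\nabla_\vx\log p_t(\vx_t)$, and one checks these are exactly what $-\div_\vx(f(t)\vx_t - g^2(t)\nabla_\vx\log p_t(\vx_t))$ expands to once the $g^2\div_\vx\nabla_\vx\log p_t$ term is accounted for. The end result is $F(t,\vx_t) = -\div_\vx(f(t)\vx_t - g^2(t)\nabla_\vx\log p_t(\vx_t)) + \tfrac12 g^2(t)\|\nabla_\vx\log p_t(\vx_t)\|^2$, as stated. An alternative, arguably cleaner route is to note that \eqref{eq:fwd-sde-logp-dynamics} is just \eqref{eq:rev-sde-logp-dynamics} read with time reversed: writing the reverse SDE drift as $f(t)\vx_t - g^2(t)\nabla\log p_t$ and its $\log p$-drift as $-f(t)D - \tfrac12 g^2\|\nabla\log p_t\|^2$, a time-reversal (Anderson) argument flips the sign structure and produces $F$; I would present whichever is shorter, likely the direct Itô/Fokker--Planck computation to keep it self-contained and parallel to the proof of \autoref{th:rev-sde-likelihood}.

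The main obstacle I anticipate is purely bookkeeping rather than conceptual: carefully tracking which $\div_\vx$ acts on which argument (the divergence in $F$ is over $\vx$ at fixed $t$, acting on the full drift vector field, not on the score alone), and making sure the substitution $\Delta_\vx p_t/p_t = \div_\vx\nabla_\vx\log p_t + \|\nabla_\vx\log p_t\|^2$ is applied consistently so that no spurious factor of $\tfrac12$ or sign survives. A secondary point worth stating explicitly is a regularity assumption — $p_t \in C^{1,2}$ and strictly positive, with enough integrability/growth control to justify Itô's lemma and to ensure the stochastic integral is a genuine (local) martingale — which holds under the standard Gaussian-convolution smoothing of the forward process for $t>0$; I would flag this in a one-line remark and defer the full argument, together with the non-linear-drift generalization, to the appendix.
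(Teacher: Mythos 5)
Your proposal is correct and matches the paper's proof: both apply Itô's lemma to $h(t,\vx)=\log p_t(\vx)$ along the forward SDE, substitute the Fokker--Planck equation for $\partial_t\log p_t$, and observe that the $\tfrac12 g^2\Delta_\vx\log p_t$ terms combine (and the $f(t)\vx_t\cdot\nabla\log p_t$ terms cancel) to leave exactly $F(t,\vx_t)$, with the martingale part $g(t)\nabla_\vx\log p_t(\vx_t)^T d\rW_t$ read off directly from Itô's lemma. The paper works with the log-form Fokker--Planck identity up front rather than converting $\Delta p_t/p_t$ at the end, but that is a surface bookkeeping choice, not a different argument.
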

Proof is similar to the reverse case and can be found in \autoref{app:fwd-sde-logp-proof}.
The forward augmented SDE will prove useful for estimating the density of an arbitrary input point $\vx$ (not necessarily sampled from the model).

\citet{karras2022elucidating} proposed a more general forward SDE than \eqref{eq:sde} for which we also derive the dynamics of $\log p_t(\vx_t)$ in both directions (\autoref{app:general_sde}).
Interestingly, we show that \eqref{eq:sde} is the only case for which the dynamics of $\log p_t(\vx_t)$ in the reverse direction do not involve any higher order derivatives of $\log p_t(\vx_t)$.

We can now track the likelihood for forward and reverse SDEs under the theoretical true $\nabla_\vx \log p_t(\vx)$. 
However, under score approximation $\vs(t, \vx) \approx \nabla_\vx \log p_t(\vx)$ the density solutions $\log p_t(\vx_t)$ of Theorems 1 and 2 become biased, which we study next.

\subsection{Approximate reverse dynamics}\label{sec:approx-dynamics}

We can substitute the approximate score $\vs(t, \vx) \approx \nabla_\vx \log p_t(\vx)$ and assume $p_T^{\mathrm{ODE}}=p_T^{\mathrm{SDE}}=\mathcal{N}(\mathbf{0}, \sigma_T^2\mI_D)$.
The resulting SDE and ODE models are no longer equivalent, i.e. $p_t^{\mathrm{ODE}}\neq p_t^{\mathrm{SDE}}$ \citep{song2021maximum, lu2022maximum}.
The PF-ODE becomes
\begin{equation}\label{eq:approx-pf-ode}
    d\begin{bmatrix}
        \vx_t \\ \log p_t^{\mathrm{ODE}}(\vx_t)
    \end{bmatrix} = \begin{bmatrix}
        f(t)\vx_t - \frac{1}{2}g^2(t) \colornet{\vs(t,\vx_t)} \\ 
        -f(t)D + \frac{1}{2}g^2(t) \colordelta{\div_\vx} \colornet{\vs(t,\vx_t)}
    \end{bmatrix}dt,
\end{equation}
which can track its marginal log-density $\log p_t^{\mathrm{ODE}}(\vx_t)$ exactly.
However, substituting the true score with $\vs(t,\vx)$ in the augmented reverse SDE \ref{eq:rev-sde-logp-dynamics} incurs estimation error in $\log p_0^{\mathrm{SDE}}(\vx_0)$.
We characterise the error formally in a novel theorem:
\begin{restatable}[Approximate Augmented Reverse SDE]{theorem}{approxrevaugsde}\label{th:tractable-approx-rev-aug-sde}
    Let $\vs(t, \vx)$ be an approximation of the score function. Let $\vx_T \sim p_T$ and define an auxiliary process $r$ starting at $r_T=\log p^{\mathrm{SDE}}_T(\vx_T)$. If
    \begin{equation}\label{eq:tractable-approx-rev-sde-logp-dynamics}
        d\begin{bmatrix}
        \vx_t \\ r_t
    \end{bmatrix} = \begin{bmatrix}
        f(t)\vx_t - g^2(t) \colornet{\vs(t, \vx_t)} \\ -f(t)D - \frac{1}{2}g^2(t) \|\colornet{\vs(t, \vx_t)}\|^2
    \end{bmatrix}dt + g(t)\begin{bmatrix}
        \mI_D \\ \colornet{\vs(t, \vx_t)}^T
    \end{bmatrix}d\overline{\rW}_t,
    \end{equation}
    then $\vx_0 \sim p_0^{\mathrm{SDE}}(\vx_0)$ and
    \begin{equation}\label{eq:r}
        r_0 = \log p_0^{\mathrm{SDE}}(\vx_0) + \rX,
    \end{equation}
    where $\rX$ is a random variable such that the bias of $r_0$ is given by
    \begin{equation}\label{eq:X}
        \E\rX = \frac{T}{2}\E_{t \sim \mathcal{U}(0,T), \vx_t \sim p_t^{\mathrm{SDE}}(\vx_t)} \Big[ g^2(t) \| \colornet{\vs(t, \vx_t)} - \colorscore{\nabla_\vx \log p_t^{\mathrm{SDE}}(\vx_t)} \|^2 \Big] \geq 0.
    \end{equation}
\end{restatable}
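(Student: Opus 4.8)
The plan is to derive the It\^o dynamics of $\log p_t^{\mathrm{SDE}}(\vx_t)$ along the \emph{approximate} reverse trajectory \eqref{eq:tractable-approx-rev-sde-logp-dynamics}, compare them term-by-term with the auxiliary dynamics of $r_t$, integrate from $T$ down to $0$, and take expectations. The claim $\vx_0\sim p_0^{\mathrm{SDE}}$ is immediate, since $p_t^{\mathrm{SDE}}$ is by definition the marginal law at time $t$ of $d\vx_t=\big(f(t)\vx_t-g^2(t)\vs(t,\vx_t)\big)dt+g(t)\,d\overline{\rW}_t$ started from $\vx_T\sim p_T$; only the identity \eqref{eq:X} for $\E\rX$ requires work.

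The key — and the place I expect the real work to be — is that, because $\vs\neq\nabla_\vx\log p_t^{\mathrm{SDE}}$, the approximate reverse SDE is \emph{not} the time reversal of the forward diffusion \eqref{eq:sde}, so \autoref{th:rev-sde-likelihood} cannot be applied verbatim; the augmented dynamics must be re-derived. First I would record that the marginals $p_t^{\mathrm{SDE}}$ solve the Fokker--Planck equation $\partial_t p_t^{\mathrm{SDE}}=-\div_\vx\!\big((f(t)\vx-g^2(t)\vs(t,\vx))\,p_t^{\mathrm{SDE}}\big)-\tfrac12 g^2(t)\,\Delta_\vx p_t^{\mathrm{SDE}}$, obtained by the substitution $\tau=T-t$ (which turns the reverse SDE into an ordinary forward SDE), writing its Kolmogorov forward equation, and changing variables back. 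Applying the time-inhomogeneous (backward) It\^o formula to $\phi(t,\vx)=\log p_t^{\mathrm{SDE}}(\vx)$ along $\vx_t$ and simplifying the drift with this Fokker--Planck identity — exactly as in the proof of \autoref{th:rev-sde-likelihood}, but keeping the mismatch between $\vs$ and $\nabla_\vx\log p_t^{\mathrm{SDE}}$ — I expect to obtain
\begin{equation*}
\begin{aligned}
d\log p_t^{\mathrm{SDE}}(\vx_t)={}&\Big(-f(t)D+g^2(t)\,\div_\vx\vs(t,\vx_t)-g^2(t)\,\Delta_\vx\log p_t^{\mathrm{SDE}}(\vx_t)\\
&{}-\tfrac12 g^2(t)\,\|\nabla_\vx\log p_t^{\mathrm{SDE}}(\vx_t)\|^2\Big)\,dt+g(t)\,\nabla_\vx\log p_t^{\mathrm{SDE}}(\vx_t)^{\!\top}d\overline{\rW}_t,
\end{aligned}
\end{equation*}
which reduces to the formula of \autoref{th:rev-sde-likelihood} when $\vs=\nabla_\vx\log p_t^{\mathrm{SDE}}$ (the second-order terms $g^2\div_\vx\vs$ and $g^2\Delta_\vx\log p_t^{\mathrm{SDE}}$ then cancel).

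Since $r_T=\log p_T^{\mathrm{SDE}}(\vx_T)$ by construction, subtracting the $r_t$-dynamics of \eqref{eq:tractable-approx-rev-sde-logp-dynamics} from the display above and integrating over $t\in[0,T]$ makes the initial conditions cancel and yields
\begin{equation*}
\rX=r_0-\log p_0^{\mathrm{SDE}}(\vx_0)=\int_0^T g^2(t)\Big(\tfrac12\|\vs(t,\vx_t)\|^2+\div_\vx\vs(t,\vx_t)-\Delta_\vx\log p_t^{\mathrm{SDE}}(\vx_t)-\tfrac12\|\nabla_\vx\log p_t^{\mathrm{SDE}}(\vx_t)\|^2\Big)dt+M,
\end{equation*}
where $M$ is the backward It\^o integral of $g(t)\big(\vs(t,\vx_t)-\nabla_\vx\log p_t^{\mathrm{SDE}}(\vx_t)\big)$, a zero-mean martingale under an $L^2$ integrability assumption on the score error. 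Taking expectations removes $M$; then for each fixed $t$ I integrate by parts against $p_t^{\mathrm{SDE}}$, using $\E_{p}[\div_\vx\vs]=-\E_{p}[\vs^{\!\top}\nabla_\vx\log p]$ and $\E_{p}[\Delta_\vx\log p]=-\E_{p}\|\nabla_\vx\log p\|^2$ (valid under mild decay of $p_t^{\mathrm{SDE}}$ and $\vs$), and the integrand collapses to $\tfrac12\|\vs(t,\cdot)-\nabla_\vx\log p_t^{\mathrm{SDE}}\|^2\ge 0$. Rewriting $\int_0^T(\cdot)\,dt=T\,\E_{t\sim\mathcal{U}(0,T)}[\cdot]$ gives \eqref{eq:X}, with nonnegativity manifest.

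The main obstacle is the middle step: establishing the correct Fokker--Planck equation for $p_t^{\mathrm{SDE}}$ and running the backward It\^o calculus for the explicitly time-dependent function $\log p_t^{\mathrm{SDE}}$, since here (unlike in \autoref{th:rev-sde-likelihood}) the second-order terms survive pathwise and only vanish \emph{in expectation} after integration by parts. The remaining subtleties are the (standard, mild) regularity conditions ensuring no boundary terms in those integrations by parts and that the stochastic integral $M$ is a genuine martingale.
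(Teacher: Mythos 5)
Your proposal is correct and follows essentially the same route as the paper: derive the Itô dynamics of $\log p_t^{\mathrm{SDE}}(\vx_t)$ along the approximate reverse trajectory via the Fokker--Planck equation for $p_t^{\mathrm{SDE}}$, subtract the $r_t$-dynamics, integrate, kill the martingale in expectation, and integrate by parts against $p_t^{\mathrm{SDE}}$ to collapse the drift mismatch into $\tfrac12\|\vs-\nabla\log p_t^{\mathrm{SDE}}\|^2$. The only cosmetic difference is how you obtain the Fokker--Planck equation for $p_t^{\mathrm{SDE}}$: you derive it directly by time-reversal and the Kolmogorov forward equation, whereas the paper instead invokes Lu et al.\ (2022) for an equivalent \emph{forward} SDE sharing the marginals $p_t^{\mathrm{SDE}}$ and applies the forward FPE to it; the two yield the identical PDE and the remainder of the argument coincides with yours term for term, including the intermediate formula for $d\log p_t^{\mathrm{SDE}}(\vx_t)$ and the decomposition of $\rX$ into a drift integral plus a zero-mean stochastic integral.
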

See \autoref{app:approximate-dynamics} for the proof and the definition of $\rX$.
Intuitively, the true density evolution of $p_t^{\mathrm{SDE}}(\vx_t)$ is induced by $d\vx_t$ in equation \ref{eq:tractable-approx-rev-sde-logp-dynamics}, and the auxiliary variable $r_t$ does not follow it perfectly. Since the equation \ref{eq:X} has an intractable score, we seek more practical alternatives to measuring the accuracy of $r_0$ in the next Section. 

\begin{tcolorbox}
The new augmented SDE of \eqref{eq:tractable-approx-rev-sde-logp-dynamics} can be used with any score-based model without further tuning to provide sample likelihood estimates $\log p_0^\mathrm{SDE}(\vx_0)$ for no extra cost. 
\end{tcolorbox}

\section{Accuracy of the density estimation}\label{sec:accuracy}

We analyse the accuracy of the $\log p_0^{\mathrm{SDE}}(\vx_0)$ estimates in equation \ref{eq:r}. We begin with the approximate forward dynamics which will provide a method for bounding the estimation error of $r_0$.

\subsection{Approximate forward dynamics}

In contrast to \autoref{th:tractable-approx-rev-aug-sde}, when we replace the true score function with $\vs$ in the forward direction, we underestimate $\log p_0^{\mathrm{SDE}}(\vx_0)$ on average.
\begin{restatable}[Approximate Augmented Forward SDE]{theorem}{approxfwdaugsde}\label{th:tractable-approx-fwd-aug-sde}
    Let $\vs(t, \vx_t)$ be the model approximating the score function and $\vx_0 \in \mathbb{R}^D$ given. Define an auxiliary process $\omega$ starting at $\omega_0=0$. If
    \begin{equation}\label{eq:tractable-approx-fwd-sde-logp-dynamics}
        d\begin{bmatrix}
        \vx_t \\ \omega_t
    \end{bmatrix} =  \begin{bmatrix}
        f(t)\vx_t \\ -f(t)D + g^2(t)\Big( \frac{1}{2} \|\colornet{\vs(t, \vx_t)}\|^2 + \colordelta{\div_\vx} \colornet{\vs(t, \vx_t)}\Big)
    \end{bmatrix}dt + g(t)\begin{bmatrix}
        \mI_D \\ \colornet{\vs(t, \vx_t)}^T
    \end{bmatrix}d\rW_t.
    \end{equation}
    Then
    \begin{equation}
        \omega_T = \log p_T^{\mathrm{SDE}}(\vx_T) - \log p_0^{\mathrm{SDE}}(\vx_0) + \rY_{\vx_0},
    \end{equation}
    where $\rY_{\vx_0}$ is a random variable such that
    \begin{equation}\label{eq:fwd-bias}
        \E\rY_{\vx_0} = \frac{T}{2}\E_{t \sim \mathcal{U}(0, T)} \E_{\vx_t \sim p(\vx_t| \vx_0)}  g^2(t)\| \colornet{\vs(t, \vx_t)} - \colorscore{\nabla_\vx \log p_t^{\mathrm{SDE}}(\vx_t)} \|^2 \geq 0.
    \end{equation}
\end{restatable}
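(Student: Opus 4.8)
The plan is to mirror the structure of the proof of Theorem 3 (the reverse case), but now propagating forward in time from $\vx_0$. First I would write down the true density evolution: by Theorem 2 (Augmented Forward SDE), the process $\log p_t^{\mathrm{SDE}}(\vx_t)$ along a forward trajectory $\vx_t$ solving $d\vx_t = f(t)\vx_t\,dt + g(t)\,d\rW_t$ satisfies $d\log p_t^{\mathrm{SDE}}(\vx_t) = F^{\mathrm{SDE}}(t,\vx_t)\,dt + g(t)\,\nabla_\vx\log p_t^{\mathrm{SDE}}(\vx_t)^\top d\rW_t$, where $F^{\mathrm{SDE}}$ uses the \emph{true} score $\nabla_\vx\log p_t^{\mathrm{SDE}}$. (Here $p_t^{\mathrm{SDE}}$ denotes the marginal of the SDE model generated by the approximate score $\vs$ run backward; the forward SDE shares these marginals by construction.) Note the diffusion of $\vx_t$ itself does not involve the score, so the $\vx$-component of \eqref{eq:tractable-approx-fwd-sde-logp-dynamics} is exactly the true forward SDE — this is the key simplification relative to the reverse case.

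Next I would subtract. Define the discrepancy $\rY_t := \omega_t - \big(\log p_t^{\mathrm{SDE}}(\vx_t) - \log p_0^{\mathrm{SDE}}(\vx_0)\big)$, so $\rY_0 = 0$ and the claim is an identity plus a bias formula for $\rY_T = \rY_{\vx_0}$. Taking the stochastic differential, the martingale (Wiener) parts cancel because both $\omega_t$ and $\log p_t^{\mathrm{SDE}}(\vx_t)$ have diffusion coefficient $g(t)\,\vs(t,\vx_t)^\top$ — wait, here is the one subtlety: $\omega$ has diffusion $g(t)\vs(t,\vx_t)^\top$ whereas the true $\log p_t^{\mathrm{SDE}}$ has diffusion $g(t)\nabla_\vx\log p_t^{\mathrm{SDE}}(\vx_t)^\top$, so they do \emph{not} cancel pointwise; instead one gets $d\rY_t = \big[(\text{drift of }\omega) - F^{\mathrm{SDE}}(t,\vx_t)\big]dt + g(t)\big(\vs(t,\vx_t) - \nabla_\vx\log p_t^{\mathrm{SDE}}(\vx_t)\big)^\top d\rW_t$. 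Expanding $F^{\mathrm{SDE}}$ from Theorem 2, the $-\div_\vx(f(t)\vx_t)$ terms match, and the remaining drift difference is $g^2(t)\big[\tfrac12\|\vs\|^2 + \div_\vx\vs\big] - g^2(t)\big[{-}\div_\vx\vs + \tfrac12\|\nabla\log p^{\mathrm{SDE}}\|^2\big]$... I would carefully combine these; the cleanest route is to complete the square so that the drift difference becomes $-\tfrac12 g^2(t)\|\vs - \nabla_\vx\log p_t^{\mathrm{SDE}}\|^2$ plus a divergence-type term whose expectation vanishes, paralleling exactly how \eqref{eq:X} was obtained in Theorem 3. Then integrate from $0$ to $T$: $\rY_{\vx_0} = -\tfrac12\int_0^T g^2(t)\|\vs(t,\vx_t) - \nabla_\vx\log p_t^{\mathrm{SDE}}(\vx_t)\|^2\,dt + (\text{martingale term}) + (\text{vanishing-expectation term})$.

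Finally I would take expectations. The Itô integral $\int_0^T g(t)(\vs - \nabla\log p^{\mathrm{SDE}})^\top d\rW_t$ has zero mean (under the usual integrability assumptions, which I would flag), and the auxiliary divergence term is engineered to have zero expectation against the marginal $p_t(\vx_t|\vx_0)$ — more precisely, against the law of $\vx_t$ given $\vx_0$, which since the $\vx$-dynamics are the true forward SDE started deterministically at $\vx_0$ is exactly $p(\vx_t|\vx_0)=\mathcal N(\vx_t;\alpha_t\vx_0,\sigma_t^2\mI_D)$. Rewriting $t$ as uniform on $(0,T)$ times $T$ gives $\E\rY_{\vx_0} = \tfrac{T}{2}\,\E_{t\sim\mathcal U(0,T)}\E_{\vx_t\sim p(\vx_t|\vx_0)}\,g^2(t)\|\vs(t,\vx_t) - \nabla_\vx\log p_t^{\mathrm{SDE}}(\vx_t)\|^2$, and nonnegativity is immediate. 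The sign is $+$ here (underestimation of $\log p_0^{\mathrm{SDE}}$, since $\omega_T$ estimates $\log p_T^{\mathrm{SDE}} - \log p_0^{\mathrm{SDE}}$ with a positive additive error, inflating the estimate of the decrease), in contrast to the reverse case — I would double-check this sign bookkeeping against Theorem 3, since it is the place most likely to go wrong. The main obstacle is precisely this algebraic manipulation of the drift difference: getting the completion-of-the-square right so that what remains is a clean squared-norm term plus a term that is a total divergence (hence mean-zero under the forward marginal), rather than leaving a stray first-order term. Everything else — Itô's lemma, the Fokker–Planck identity underlying Theorem 2, and the zero-mean property of the stochastic integral — is routine given the earlier results.
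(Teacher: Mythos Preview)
There is a genuine gap: the sentence ``the forward SDE shares these marginals by construction'' is false, and it is exactly what breaks your invocation of Theorem~2. The density $p_t^{\mathrm{SDE}}$ is by definition the marginal of the \emph{approximate} reverse SDE (drift $f(t)\vx - g^2(t)\vs(t,\vx)$), not of the plain forward SDE $d\vx_t = f(t)\vx_t\,dt + g(t)\,d\rW_t$; these coincide only when $\vs \equiv \nabla_\vx\log p_t^{\mathrm{SDE}}$. The forward SDE that \emph{does} preserve $p_t^{\mathrm{SDE}}$ has drift $f(t)\vx + g^2(t)\big(\nabla_\vx\log p_t^{\mathrm{SDE}} - \vs\big)$ (the Anderson reversal of the approximate reverse SDE). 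Consequently the Fokker--Planck equation satisfied by $p_t^{\mathrm{SDE}}$ contains $\vs$ explicitly, and applying It\^o's lemma to $h(t,\vx)=\log p_t^{\mathrm{SDE}}(\vx)$ along the simple forward path yields not Theorem~2's $F$ with $\nabla_\vx\log p_t^{\mathrm{SDE}}$ substituted, but rather
\[
G(t,\vx) \;=\; -\div_\vx\big(f(t)\vx - g^2(t)\vs(t,\vx)\big) + \tfrac12 g^2(t)\|\vs(t,\vx)\|^2 - \tfrac12 g^2(t)\big\|\nabla_\vx\log p_t^{\mathrm{SDE}}(\vx) - \vs(t,\vx)\big\|^2.
\]
This is precisely what the paper isolates as a separate lemma (Theorem~\ref{th:approx-aug-fwd-sde}) before proving the result. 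With the correct $G$, the drift of $\omega_t$ minus $G$ is exactly $\tfrac12 g^2(t)\|\vs - \nabla_\vx\log p_t^{\mathrm{SDE}}\|^2$, the diffusion difference is the mean-zero It\^o integral $\int_0^T g(t)(\vs - \nabla_\vx\log p_t^{\mathrm{SDE}})^{T} d\rW_t$, and no further completion of the square or ``divergence-type term whose expectation vanishes'' is needed at all.

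Your planned algebra would not close. Subtracting your $F^{\mathrm{SDE}}$ leaves a drift residual $g^2(t)\big[\div_\vx(\vs - \nabla_\vx\log p_t^{\mathrm{SDE}}) + \tfrac12\|\vs\|^2 - \tfrac12\|\nabla_\vx\log p_t^{\mathrm{SDE}}\|^2\big]$; integrating by parts against $p(\vx_t\mid\vx_0)$ converts the divergence into $-(\vs-\nabla_\vx\log p_t^{\mathrm{SDE}})^{T}\nabla_\vx\log p_{t|0}$, and what remains differs from the desired $\tfrac12\|\vs-\nabla_\vx\log p_t^{\mathrm{SDE}}\|^2$ by the cross term $(\vs - \nabla_\vx\log p_t^{\mathrm{SDE}})^{T}\big(\nabla_\vx\log p_t^{\mathrm{SDE}} - \nabla_\vx\log p_{t|0}\big)$, which has no reason to have zero mean. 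The overall strategy (subtract the true evolution, identify a squared-norm drift plus a martingale) is right; only the identification of the ``true evolution'' of $\log p_t^{\mathrm{SDE}}(\vx_t)$ needs to be redone using the correct Fokker--Planck equation for $p_t^{\mathrm{SDE}}$.
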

See \autoref{app:approximate-dynamics} for the proof and the definition of $\rY_{\vx_0}$. Due to the drift $\div$ operator, the evaluation of $d \omega_t$ is computationally comparable to $d\log p^{\mathrm{ODE}}_t(\vx_t)$.
Interestingly, \autoref{th:tractable-approx-fwd-aug-sde} completes a known lower bound into a novel identity,
\begin{equation}\label{eq:elbo}
    \log p^{\mathrm{SDE}}_0(\vx_0) = \underbrace{\E\rY_{\vx_0}}_{\geq 0} \underbrace{- \frac{e^{\lambda_{\text{min}}}}{2} \| \vx_0 \|^2 + \frac{T}{2}\E_{t, \veps} \bigg[-\frac{d\lambda_t}{dt} \big\| \sigma_t \colornet{\vs(t, \alpha_t\vx_0 + \sigma_t\veps)} + \veps\big\|^2 \bigg] + C}_{\mathrm{ELBO}(\vx_0) \text{ \citep{song2021maximum, kingma2021variational}}},
\end{equation}
where $t \sim \mathcal{U}(0, T)$, $\veps \sim \mathcal{N}(\mathbf{0}, \mI_D)$ and $C = -\frac{D}{2} \left(1 + \log (2\pi \sigma_0^2)\right)$ (see \autoref{cor:elbo}).
$\mathrm{ELBO}(\vx)$ is the standard tool for estimating the SDE's model likelihood of an arbitrary point $\vx$.

\subsection{Estimation bias of \texorpdfstring{$\log p_0^{\mathrm{SDE}}$}{logp-SDE}}\label{sec:bias-bound}
\begin{wrapfigure}[12]{r}{0.5\textwidth}
  \centering
    \vspace{-20pt}
    \includegraphics[width=0.5\textwidth]{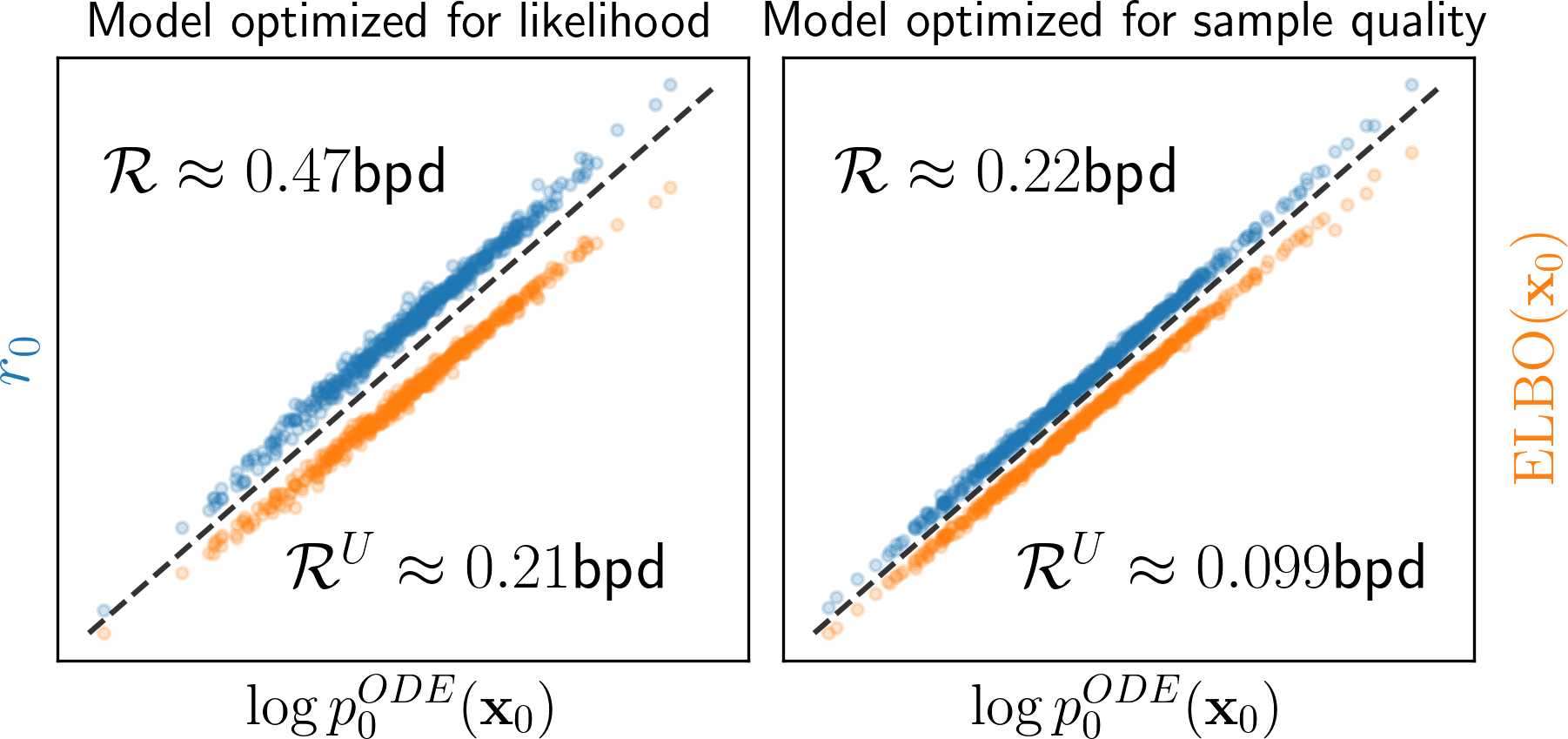}
  \caption{$r_0 > \log p_0^{\mathrm{ODE}}(\vx_0) > \mathrm{ELBO}(\vx_0)$ correlate strongly.}
  \label{fig:kl-vo-ode-cifar}
\end{wrapfigure}
We are interested in using the stochastic sampler to obtain high-quality sample-density pairs $(\vx_0, \log p_t^{\mathrm{SDE}}(\vx_0))$.
We have now discussed two ways of estimating $\log p_0^{\mathrm{SDE}}(\vx_0)$: with $r_0$ (\eqref{eq:tractable-approx-rev-sde-logp-dynamics}) and $\mathrm{ELBO}(\vx_0)$ (\eqref{eq:elbo}).
Their estimation errors $\varepsilon_r(r_0, \vx_0) = r_0 - \log p_0^{\mathrm{SDE}}(\vx_0)$ and $\varepsilon_{\mathrm{ELBO}}(\vx_0) = \log p_0^{\mathrm{SDE}}(\vx_0) - \mathrm{ELBO}(\vx_0)$ are intractable to estimate due to the presence of unknown score $\nabla \log p_t^{\mathrm{SDE}}(\vx_t)$.

However, we can provide an upper bound on the bias of both estimators,
\begin{equation}
    \underbrace{\E_{\vx_0 \sim p_0^{\mathrm{SDE}}(\vx_0)}\left[\varepsilon_{\mathrm{ELBO}}(\vx_0)\right]}_{\geq 0 \: (\text{\eqref{eq:fwd-bias}}) } + \underbrace{\E_{(\vx_0, r_0)}\left[\varepsilon_r(r_0, \vx_0)\right]}_{\geq 0 \: (\text{\eqref{eq:X}}) } = \underbrace{\E_{(\vx_0, r_0)}\Big[r_0 -\mathrm{ELBO}(\vx_0)\Big].}_{\mathcal{R}(\vs) (\text{tractable})}
\end{equation}
We can thus sample $(\vx_0, r_0)$ using \eqref{eq:tractable-approx-rev-sde-logp-dynamics} and the average difference $r_0 - \mathrm{ELBO}(\vx_0)$ gives an upper bound on the bias of both $r_0$ and $\mathrm{ELBO}(\vx_0)$.
This can be useful to assess the accuracy of both $r_0$ and $\mathrm{ELBO}(\vx_0)$ as density estimates for stochastic samples $\vx_0$.
Furthermore, we can estimate how much $p_0^{\mathrm{{SDE}}}$ differs from $p_0^{\mathrm{{ODE}}}$ by providing bounds for $\operatorname{KL}[p^{\mathrm{SDE}}_0 || p^{\mathrm{ODE}}_0]$
\begin{equation}\label{eq:kl-bound}
    \underbrace{\E_{\vx_0 \sim p^{\mathrm{SDE}}(\vx_0)}\Big[\mathrm{ELBO}(\vx_0) - \log p_0^{\mathrm{ODE}}(\vx_0)\Big]}_{\mathcal{R}^{L}(\vs) \text{ (tractable)}} \leq \operatorname{KL}\Big[p_0^{\mathrm{SDE}} || p_0^{\mathrm{ODE}}\Big] \leq \underbrace{\E_{(\vx_0, r_0)}\Big[r_0 - \log p_0^{\mathrm{ODE}}(\vx_0)\Big]}_{\mathcal{R}^{U}(\vs) \text{ (tractable)}}.
\end{equation}
$\mathcal{R}^U$ and $\mathcal{R}^L$ are novel practical tools for measuring the difference between $p_0^{\mathrm{SDE}}$ and $p_0^{\mathrm{ODE}}$.
As a demonstration, we train two versions of a diffusion model on CIFAR-10 \citep{krizhevsky2009learning}, one with maximum likelihood training \citep{kingma2021variational, song2021maximum} and one optimized for sample quality \citep{kingma2024understanding}.
Please see \autoref{app:hyperparams} for implementation details. We then generate 512 samples of $(\vx_0, r_0)$ with \eqref{eq:tractable-approx-rev-sde-logp-dynamics} and for each $\vx_0$ we estimated $\log p_0^{\mathrm{ODE}}(\vx_0)$ using \eqref{eq:approx-pf-ode} and $\mathrm{ELBO}(\vx_0)$ using \eqref{eq:elbo}.

For both models we found very high correlations between $r_0$ and $\log p_0^{\mathrm{ODE}}(\vx_0)$ at $0.996$ and $0.999$.
Surprisingly, it is the model optimized for sample quality that yielded both a higher correlation and lower values of $\mathcal{R}^U(\vs)$ and $\mathcal{R}(\vs)$ suggesting a smaller difference between $p_0^{\mathrm{SDE}}$ and $p_0^{\mathrm{ODE}}$ and a lower bias of $\log p_0^{\mathrm{SDE}}(\vx_0)$ estimation (\autoref{fig:kl-vo-ode-cifar}).
We also estimated $\mathcal{R}^L(\vs)$, which was negative for both models which is a trivial lower bound for $\operatorname{KL}[p_0^{\mathrm{SDE}} || p_0^{\mathrm{ODE}}] \ge 0$. 

\begin{tcolorbox}
The density estimates $r_0 \approx \log p_0^{\mathrm{SDE}}(\vx_0)$ from equation \ref{eq:tractable-approx-rev-sde-logp-dynamics} empirically form an upper bound on $\log p_0^{\mathrm{ODE}}(\vx_0)$ and correlate with it very strongly ($>0.99$).
\end{tcolorbox}

\section{Mode estimation}\label{sec:mode-estimation}

\begin{wrapfigure}[9]{r}{0.40\textwidth}
  \centering
    \vspace{-40pt}
\includegraphics[width=0.40\textwidth]{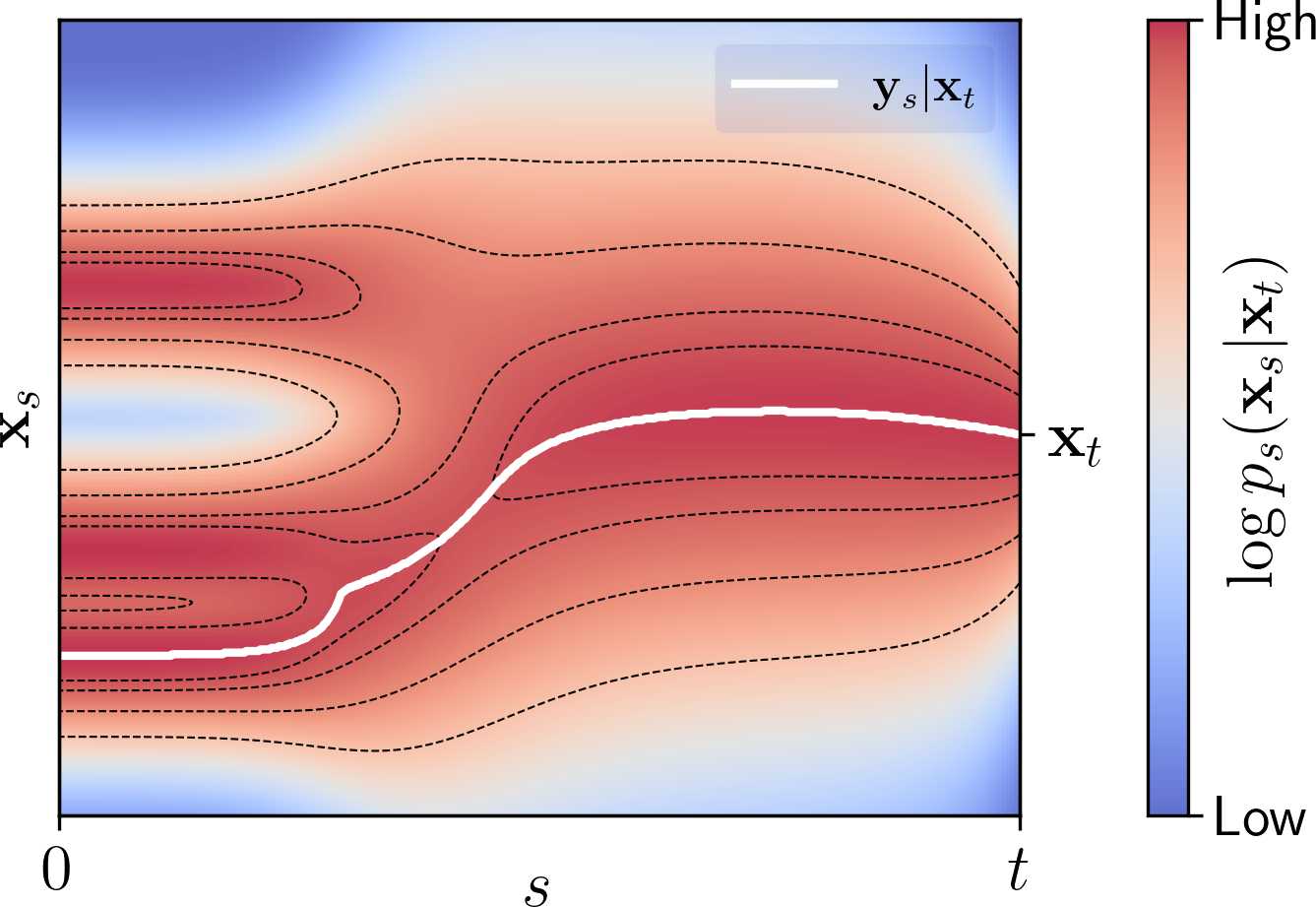}
  \caption{Equation \ref{eq:mode-ode} accurately recovers the mode-tracking curve.}
  \label{fig:denoising-mode-ode}
\end{wrapfigure}

Many diffusion models are trained by implicitly maximizing weighted ELBO \citep{kingma2024understanding}, and can be interpreted as likelihood-based models.
\citet{karras2024guiding} emphasized the role of likelihood by explaining the empirical success of guided diffusion models by their ability to limit low density samples $p_0(\vx_0)$.
We explore this idea further by asking: 
\emph{what if we aim for samples with the highest $p_0(\vx_0)$?}

\subsection{Mode-tracking curve}

We first go back to assuming a perfectly estimated score function $\nabla_\vx \log p_t(\vx)$.
Let $p(\vx_0|\vx_t)$ denote the denoising distribution given by solving the reverse SDE (\eqref{eq:rev-sde}) from $\vx_t$ to $\vx_0$.
If we knew how to estimate the \textit{denoising mode} $\vy_0(\vx_t) = \argmax p(\vx_0|\vx_t)$, we could bias the sampler towards higher likelihood regions by first taking a regular noisy samples $\vx_t \sim p_t(\vx_t)$ at various times $t$, and pushing them to deterministic modes $\vy_0(\vx_t)$.

We approach this by asking a seemingly more difficult question: can we find a \textit{mode-tracking curve}, i.e. $\vy_s$ such that $p(\vy_s|\vx_t) = \max_{\vx_s} p(\vx_s | \vx_t)$ for all $s<t$ (See \autoref{fig:denoising-mode-ode})?
We show that whenever such a smooth curve exists it is given by an ODE:
\begin{restatable}[Mode-tracking ODE]{theorem}{hdrestimationthm}\label{th:mode-ode}
    Let $t \in (0, T]$ and $\vx_t \in \mathbb{R}^D$ a noisy sample. If there exists a smooth curve $s \mapsto \vy_s$ such that $p(\vy_s|\vx_t) = \max_{\vx_s} p(\vx_s|\vx_t)$, then $\vy_t=\vx_t$ and for $s<t$
    \begin{equation}\label{eq:mode-ode}
        \frac{d}{ds}\vy_s = f(s)\vy_s - g^2(s) \colorscore{\nabla_{\vy} \log p_s(\vy_s)} - \frac{1}{2}g^2(s) \colordelta{\mA(s, \vy_s)^{-1}} \colordelta{\nabla_{\vy} \Delta_{\vy}\log p_s(\vy_s)},
    \end{equation}
    where $\mA(s, \vy) = \left(\nabla_\vy^2 \log p_s(\vy) - \psi(s)\mI_D \right)$, $\psi(s) = \frac{1}{\sigma_s^2} \frac{e^{\lambda_{t}}}{e^{\lambda_s} - e^{\lambda_t}}$, and $\Delta_{\vy} = \sum_i \frac{\partial^2}{\partial y_i^2}$ is the Laplace operator. In particular:
    \begin{equation}
        p(\vy_0 | \vx_t) = \max_{\vx_0} p(\vx_0 | \vx_t).
    \end{equation}
\end{restatable}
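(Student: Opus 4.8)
The plan is to derive the ODE by differentiating the first-order optimality condition that characterizes the mode of the denoising distribution along the curve $s \mapsto \vy_s$. The key object is the conditional density $p(\vx_s \mid \vx_t)$, and because the forward process is Gaussian with $p(\vx_t \mid \vx_s) = \mathcal{N}(\vx_t; (\alpha_t/\alpha_s)\vx_s, (\sigma_t^2 - (\alpha_t/\alpha_s)^2\sigma_s^2)\mI_D)$, Bayes' rule gives
\begin{equation*}
    \log p(\vx_s \mid \vx_t) = \log p_s(\vx_s) + \log p(\vx_t \mid \vx_s) - \log p_t(\vx_t),
\end{equation*}
where the middle term is quadratic in $\vx_s$. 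First I would compute $\nabla_{\vx_s} \log p(\vx_s\mid\vx_t)$ explicitly: the gradient of the quadratic term contributes a linear-in-$\vx_s$ piece whose coefficient, after translating between the $(\alpha_t/\alpha_s)$ parametrization and the SNR variables $\lambda_t$, is exactly $\psi(s)$ as defined in the statement (this is where the precise form $\psi(s) = \sigma_s^{-2} e^{\lambda_t}/(e^{\lambda_s}-e^{\lambda_t})$ comes from). The mode condition at parameter $s$ is then $\nabla_{\vx_s}\log p(\vx_s\mid\vx_t)\big|_{\vx_s = \vy_s} = \vzero$, i.e.
\begin{equation*}
    \colorscore{\nabla_{\vy}\log p_s(\vy_s)} + \text{(linear term)} = \vzero .
\end{equation*}

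**Next I would differentiate this identity in $s$ along the curve.** Since the mode condition holds for all $s < t$, its total $s$-derivative vanishes. This produces a relation of the form $\mathrm{(Hessian\ terms)}\cdot \tfrac{d}{ds}\vy_s + \mathrm{(explicit\ }s\text{-derivatives)} = \vzero$. There are two sources of $s$-dependence beyond the curve itself: the score $\nabla_\vy\log p_s(\vy)$ depends on $s$ directly (through the marginal $p_s$), and the coefficient $\psi(s)$ depends on $s$. The direct $s$-derivative of the score is handled via the Fokker--Planck / heat-type equation governing $\log p_s$: $\partial_s \log p_s = -\div(f(s)\vx - \tfrac12 g^2(s)\nabla\log p_s) - \tfrac12 g^2(s)\|\nabla\log p_s\|^2$ (the PF-ODE log-density relation), and then taking $\nabla_\vy$ of this yields the term $\nabla_\vy\Delta_\vy\log p_s(\vy)$ together with lower-order pieces. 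Collecting the coefficient of $\tfrac{d}{ds}\vy_s$ gives precisely the matrix $\mA(s,\vy) = \nabla^2_\vy\log p_s(\vy) - \psi(s)\mI_D$ — the Hessian of $\log p(\vx_s\mid\vx_t)$ in $\vx_s$ — which is invertible exactly because $\vy_s$ is a strict maximum (this is presumably the "technical assumption" alluded to). Solving for $\tfrac{d}{ds}\vy_s$ and simplifying the explicit terms into $f(s)\vy_s - g^2(s)\nabla_\vy\log p_s(\vy_s)$ plus the correction $-\tfrac12 g^2(s)\mA^{-1}\nabla_\vy\Delta_\vy\log p_s(\vy_s)$ should reproduce \eqref{eq:mode-ode}.

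**The initial condition $\vy_t = \vx_t$** is immediate: as $s \to t$, $p(\vx_s\mid\vx_t)$ concentrates at $\vx_t$ (the forward transition kernel degenerates), so the unique mode is $\vx_t$ itself; one should check that $\psi(s)$ blows up appropriately so the degenerate limit is consistent. The final claim $p(\vy_0\mid\vx_t) = \max_{\vx_0} p(\vx_0\mid\vx_t)$ is then just the $s=0$ specialization of the hypothesis $p(\vy_s\mid\vx_t) = \max_{\vx_s}p(\vx_s\mid\vx_t)$, so nothing extra is needed there beyond noting the curve reaches $s=0$.

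**The main obstacle** I anticipate is bookkeeping the $s$-derivative of the score term: one must carefully combine $\partial_s[\nabla_\vy\log p_s](\vy_s)$ with $\nabla^2_\vy\log p_s(\vy_s)\cdot\tfrac{d}{ds}\vy_s$ (the chain-rule contribution from moving along the curve), and show that after using the log-density evolution equation and cancelling the drift/divergence pieces against the quadratic-term derivatives, exactly the stated third-order term $\nabla_\vy\Delta_\vy\log p_s$ survives with coefficient $-\tfrac12 g^2(s)$ under the action of $\mA^{-1}$. Keeping the $\psi(s)$ and $\lambda_t$ algebra straight — especially the sign and the $e^{\lambda_t}$ factor that couples the answer to the \emph{target} time $t$ — is the delicate part; everything else is Gaussian algebra and an implicit-function-style differentiation.
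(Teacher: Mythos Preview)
Your approach is essentially identical to the paper's proof: apply Bayes' rule with the Gaussian forward transition kernel, write the first-order condition as $\nabla_\vy\log p_s(\vy_s)-\psi(s)\vy_s+\phi(s)\vx_t=\vzero$, differentiate this identity in $s$ using the Fokker--Planck equation to evaluate $\partial_s\nabla_\vy\log p_s$, and collect terms so that $\mA(s,\vy_s)\bigl(\dot\vy_s-f(s)\vy_s+g^2(s)\nabla_\vy\log p_s(\vy_s)\bigr)$ appears on one side. One caution for when you carry out the bookkeeping you flagged as delicate: the Fokker--Planck identity you quote is slightly off---for the linear-drift SDE the correct form is $\partial_s\log p_s = -f(s)D + \tfrac12 g^2(s)\Delta_\vy\log p_s - f(s)\vy^\top\nabla_\vy\log p_s + \tfrac12 g^2(s)\|\nabla_\vy\log p_s\|^2$, i.e.\ the sign on the squared-norm term is $+$ and there is an additional $-f(s)\vy^\top\nabla_\vy\log p_s$ drift contribution.
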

The proof and the statement without assuming invertibility of $\mA$ can be found in \autoref{app:mode-ode-proof}. We visualize on a mixture of 1D Gaussians in \autoref{fig:denoising-mode-ode} how the solution of the mode tracking ODE (\eqref{eq:mode-ode}) correctly recovers the denoising mode curve, i.e. the mode of $p(\vx_s|\vx_t)$ for all $s<t$.

\begin{figure}[t]
    \centering
    \includegraphics[width=1\linewidth]{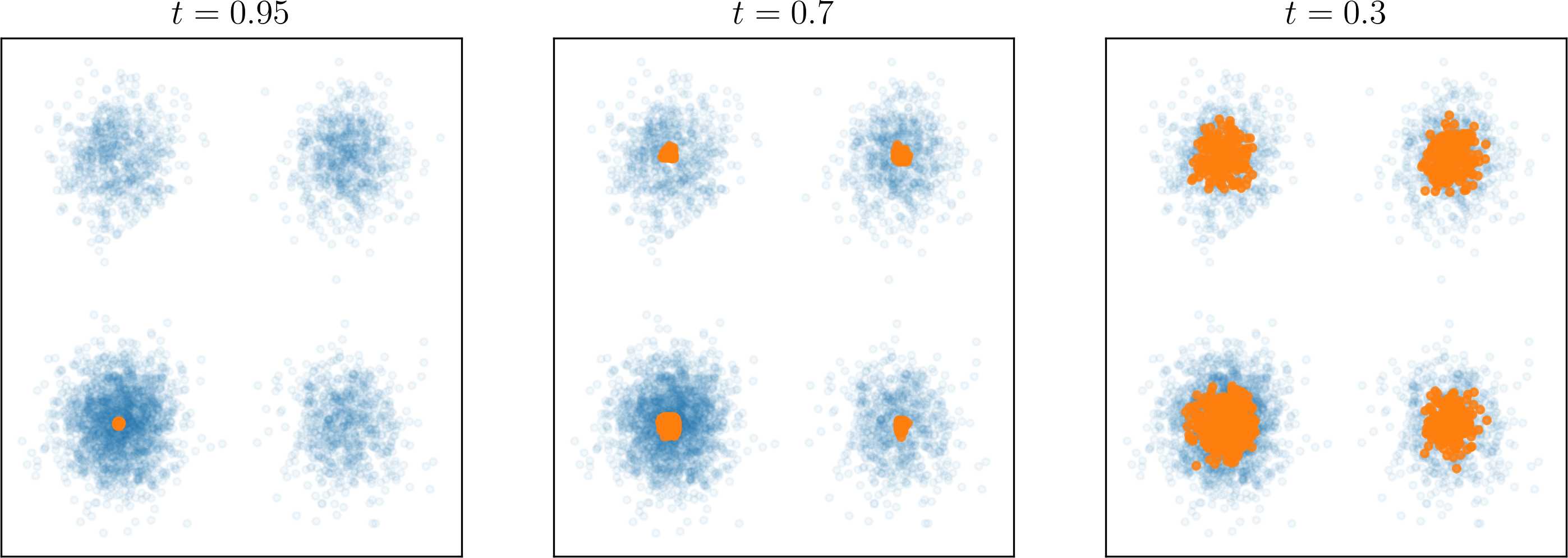}
    \caption{\textbf{Equation \ref{eq:mode-ode} enables controlling the likelihood-diversity tradeoff.} Blue dots are samples from the data distribution, i.e. a mixture of four Gaussians with the bottom left component having the highest weight. Orange samples are generated with Algorithm \ref{alg:high-p-sampling}.}
    \label{fig:outlier-elimination}
\end{figure}

\subsection{High-density seeking ODE}

Using \autoref{th:mode-ode} in high-dimensional data is problematic. A smooth \textit{mode-tracking} curve needs to exist, which is difficult to verify and need not hold (\autoref{app:non-smooth-mode-traj}).
Moreover, \eqref{eq:mode-ode} requires computing and inverting the Hessian and estimating third-order derivatives, which is prohibitively expensive \citep{meng2021estimating}. Please see \autoref{app:mode-tracking-cost} for more details. 
We propose an approximation of \eqref{eq:mode-ode} by noting its high-order terms disappear under Gaussian data.
\begin{restatable}[High-density ODE or HD-ODE]{remark}{gaussianode}\label{rem:gaussian-ode}
    If $p_0$ is Gaussian, then \eqref{eq:mode-ode} becomes
    \begin{equation}\label{eq:gaussian-ode}
        d\vy_s = \left(f(s)\vy_s - g^2(s)\colorscore{\nabla_\vx \log p_s(\vy_s)} \right)ds,
    \end{equation}
    i.e. the drift term of reverse SDE \eqref{eq:rev-sde}. If $\vy_t=\vx_t$ and \eqref{eq:gaussian-ode} holds for $s<t$, then
    \begin{equation}
        \vy_0 = \argmax_{\vx_0} p(\vx_0 | \vx_t) + \mathcal{O}(e^{-\lambda_{\mathrm{max}}}).
    \end{equation}
\end{restatable}
See \autoref{app:gaussian-mode-ode} for the proof. 
Even though \eqref{eq:gaussian-ode} finds the mode of $p(\vx_0|\vx_t)$ only in the Gaussian case (for sufficiently small $e^{-\lambda_{\mathrm{max}}}$), we will empirically show that even for non-Gaussian data it finds points with much higher likelihoods than regular samples.

\begin{wrapfigure}[7]{r}{0.41\textwidth} 
    \vspace{-25pt}
    \begin{minipage}{0.41\textwidth}
        \begin{algorithm}[H]
            \caption{High density sampling}
            \label{alg:high-p-sampling}
            \begin{algorithmic}[1]
                \STATE \textbf{Input:} Threshold $t \in (0, T]$
                \STATE Initial $\vx_T \sim \mathcal{N}(\mathbf{0}, \sigma_T^2 \mI_D)$
                \STATE Sample $\vx_t \sim p_t(\vx_t)$ \hfill eq.  \ref{eq:approx-pf-ode} or \ref{eq:tractable-approx-rev-sde-logp-dynamics}
                \STATE $\vy_0 \gets \text{HD-ODE}(t, 0, \vx_t)$ \hfill eq. \ref{eq:gaussian-ode}
                \STATE Return $\vy_0$
            \end{algorithmic}
        \end{algorithm}
    \end{minipage}
\end{wrapfigure}
In Algorithm \ref{alg:high-p-sampling} we propose a novel high-density sampler that uses \eqref{eq:gaussian-ode} to bias the sampling towards higher likelihood regions. We choose a threshold time $t$ and sample $\vx_t \sim p_t(\vx_t)$, and then estimate $\vy_0(\vx_t)$ by solving \eqref{eq:gaussian-ode} from $s=t$ to $s=0$. \autoref{fig:outlier-elimination} shows how the threshold controls the tradeoff between likelihood and diversity in a toy mixture of Gaussians.

\subsection{Estimating mode densities for Real-world data} \label{sec:real-world-data}

\begin{figure}[tb]
    \centering
    \includegraphics[width=0.95\linewidth]{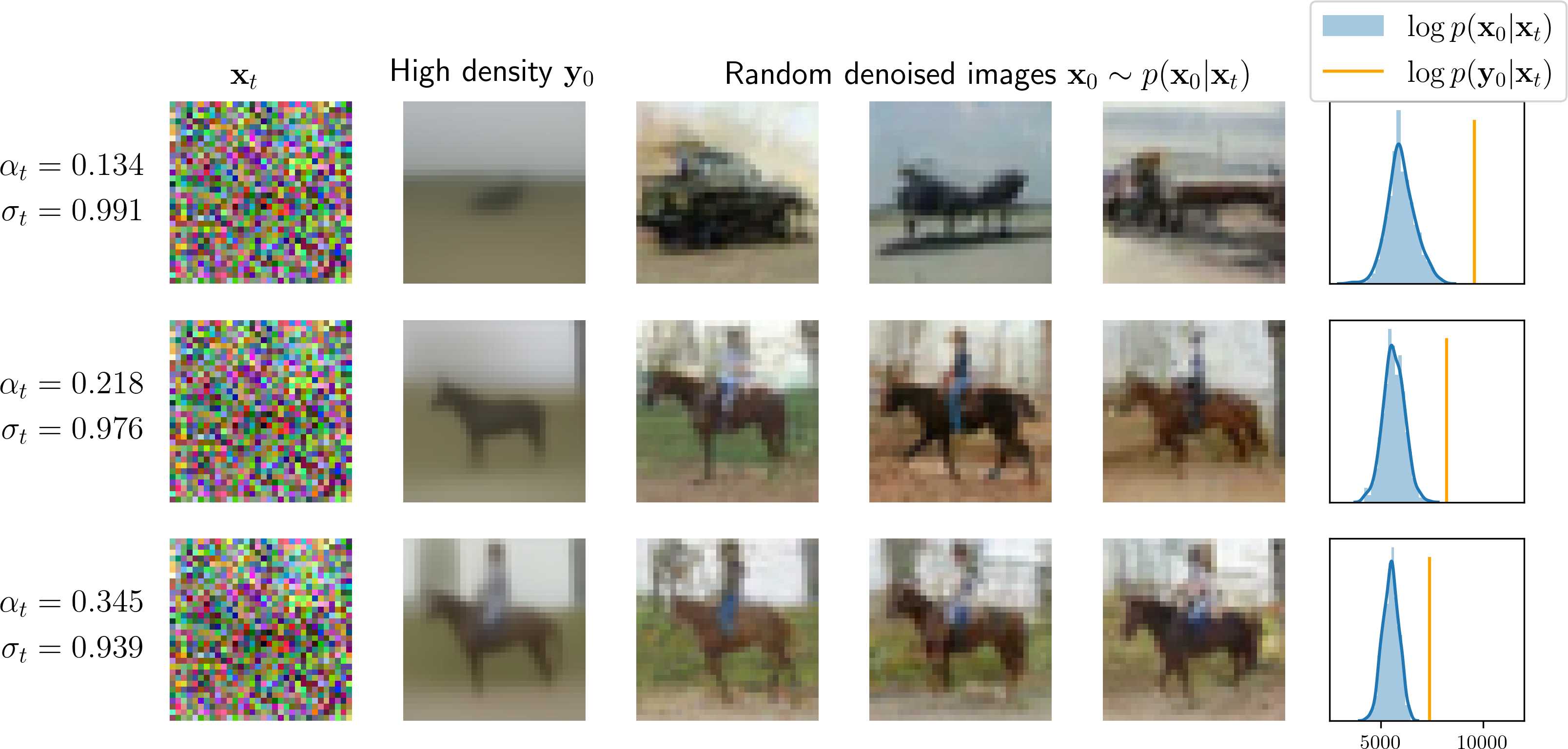}
    \caption{\textbf{Algorithm \ref{alg:high-p-sampling} generates images with higher likelihoods than regular samples.} For different noise levels $t$, compare the high density $\vy_0$ (\eqref{eq:gaussian-ode}) with random samples $\vx_0$ on CIFAR-10. We find that the percentage of samples with higher likelihoods than $\vy_0$ is $0$.}
    \label{fig:cifar-pi-est}
\end{figure}

We will next discuss how to evaluate the density of the high-density samples $\vy_0|\vx_t$ and regular samples $\vx_0|\vx_t$ to empirically show that $p(\vy_0|\vx_t) > p(\vx_0|\vx_t)$ for $\vx_0 \sim p(\vx_0|\vx_t)$. For any $\vx_0$, the denoising likelihood can be decomposed using Bayes' rule
\begin{equation}
    \log p_{0|t}(\vx_0|\vx_t) = \underbrace{\log p_{t|0}(\vx_t|\vx_0)}_{\mathcal{N}(\vx_t | \alpha_t\vx_0, \sigma_t^2\mI_D)} + \log p_0(\vx_0) - \log p_t(\vx_t).
\end{equation}
For $\vx_0 \sim p(\vx_0|\vx_t)$ we can use \eqref{eq:rev-sde-logp-dynamics} from $(\vx_t,r_t = 0)$ to obtain $r_0 = \log p_0(\vx_0) - \log p_t(\vx_t)$.
To estimate $\log p(\vy_0|\vx_t)$ we could use the PF-ODE, but that is inefficient.\footnote{On top of solving HD-ODE we would need to solve the augmented PF-ODE twice: once for $\vy_t$ from $t$ to $T$ and once for $\vy_0$ from $0$ to $T$. Instead, we perform one \textit{augmented} ((\ref{eq:gaussian-ode}) + (\ref{eq:mode-ode-logp})) HD-ODE solve from $t$ to $0$.}
Instead, we can obtain $d \log p_s(\vy_s)$ under HD-ODE with a convenient, and to our knowledge novel, lemma:
\begin{lemma}[General instantaneous change of variables]\label{lem:gen-insta}
    Consider a CNF given by  
        $d\vx_t = f_1(t, \vx_t)dt$
    with prior $p_T$ and marginal distributions $p_t$, and a particle following some different dynamical system
        $d\vz_t = f_2(t, \vz_t)dt$. Then, if $f_1$ and $f_2$ are uniformly Lipschitz in the second argument and continuous in the first, we have:
    \begin{equation}\label{eq:gen-insta}
        \frac{d \log p_t(\vz_t)}{dt} = -\colordelta{\div_\vz} f_1(t, \vz_t) + \big( f_2(t, \vz_t) - f_1(t, \vz_t) \big)^T \colorscore{\nabla_\vz \log p_t(\vz_t)}.
    \end{equation}
\end{lemma}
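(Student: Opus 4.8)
The plan is to derive the formula in two short moves: (i) write down the partial differential equation satisfied pointwise by $\log p_t$, using only that the $p_t$ are the marginals of the CNF; and (ii) apply the multivariate chain rule to the composite map $t \mapsto \log p_t(\vz_t)$ along the $f_2$-trajectory, and then substitute (i).

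First I would invoke the continuity (Liouville) equation for the CNF. Since $f_1$ is uniformly Lipschitz in its second argument and continuous in the first, the flow of $d\vx_t = f_1(t,\vx_t)\,dt$ is well posed and its marginal densities satisfy $\partial_t p_t(\vx) = -\div_\vx\!\big(p_t(\vx)\,f_1(t,\vx)\big)$. Expanding the divergence and dividing by $p_t(\vx) > 0$ yields the Eulerian log-density identity
\begin{equation*}
    \partial_t \log p_t(\vx) = -\div_\vx f_1(t,\vx) - f_1(t,\vx)^T \nabla_\vx \log p_t(\vx),
\end{equation*}
which is just the instantaneous change-of-variables formula written as a PDE; note that $f_2$ has not yet appeared. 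Next, since $f_2$ is uniformly Lipschitz and continuous, $\vz_t$ is a well-defined absolutely continuous curve, so the chain rule gives
\begin{equation*}
    \frac{d}{dt}\log p_t(\vz_t) = \big(\partial_t \log p_t\big)(\vz_t) + \nabla_\vz \log p_t(\vz_t)^T f_2(t,\vz_t).
\end{equation*}
Substituting the PDE identity evaluated at $\vx=\vz_t$ into this expression and grouping the two terms that carry $\nabla_\vz \log p_t(\vz_t)$ produces exactly \eqref{eq:gen-insta}. Taking $f_2 = f_1$ collapses the last term and recovers the classical identity $\tfrac{d}{dt}\log p_t(\vx_t) = -\div f_1$, a useful consistency check.

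The step I expect to require the most care is not the algebra but the regularity bookkeeping around step (i): one needs $p_t$ to be $C^1$ in $(t,\vx)$ and strictly positive along the relevant trajectories so that $\log p_t$ is differentiable and the continuity equation holds classically. The Lipschitz hypothesis on $f_1$ gives a $C^1$ flow map that transports a smooth positive prior (a Gaussian in all our use cases) to smooth positive marginals; I would either cite the standard transport-of-density results for smooth flows, or, for a self-contained version, state the lemma with the mild extra assumption that $p_T$ is smooth and positive and observe that this already covers every application in the paper. Everything else is the elementary substitution described above.
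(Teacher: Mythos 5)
Your proof is correct and follows essentially the same route as the paper's: the paper invokes its Fokker--Planck equation (which in the zero-diffusion case reduces to exactly the Liouville/continuity identity you write down) to get $\partial_t \log p_t = -\div f_1 - f_1^T \nabla \log p_t$, and then applies the chain rule along the $f_2$-trajectory. Your closing remarks on regularity are sound but not part of the paper's argument; they add nothing that changes the substance.
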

The proof is in \autoref{app:gen-insta-proof}. When $f_1=f_2$ we recover the standard formula \citet{chen2018neural}. By setting $f_1=$PF-ODE (\ref{eq:augmented-ode-dynamics}) and $f_2=$HD-ODE (\ref{eq:gaussian-ode}) we augment the HD-ODE with its density evolution
\begin{equation}\label{eq:mode-ode-logp}
    \frac{d \log p_s(\vy_s)}{ds} = - f(s)D + \frac{1}{2}g^2(s) \colordelta{\div_\vy} \colorscore{\nabla_\vy \log p_s(\vy_s)} - \frac{1}{2}g^2(s) \big\| \colorscore{\nabla_\vy \log p_s(\vy_s)} \big\|^2.
\end{equation}
We trained a diffusion model on CIFAR-10 and generated $\vx_t$ for different noise levels $t$ and compared the high-density samples $\vy_0|\vx_t$ (Algorithm \ref{alg:high-p-sampling}) against 512 regular samples $\vx_0 \sim p(\vx_0|\vx_t)$.
We found that $p(\vy_0|\vx_t) > p(\vx_0|\vx_t)$ for all samples across different noise levels $t$ (See \autoref{fig:cifar-pi-est}).

Additionally, we compared the likelihoods of regular samples and the ones obtained with algorithm \ref{alg:high-p-sampling} for different models, and values of the threshold parameter $t$. We found that algorithm \ref{alg:high-p-sampling} samples have higher likelihoods than regular samples in all cases.
For details, please refer to \autoref{app:quantitative-hd-sampling}.

\section{High-resolution diffusion probability landscape}\label{sec:diff-prob-landscape}

\begin{wrapfigure}[13]{r}{0.45\textwidth}
  \centering
  \vspace{-12pt}
    \includegraphics[width=0.45\textwidth]{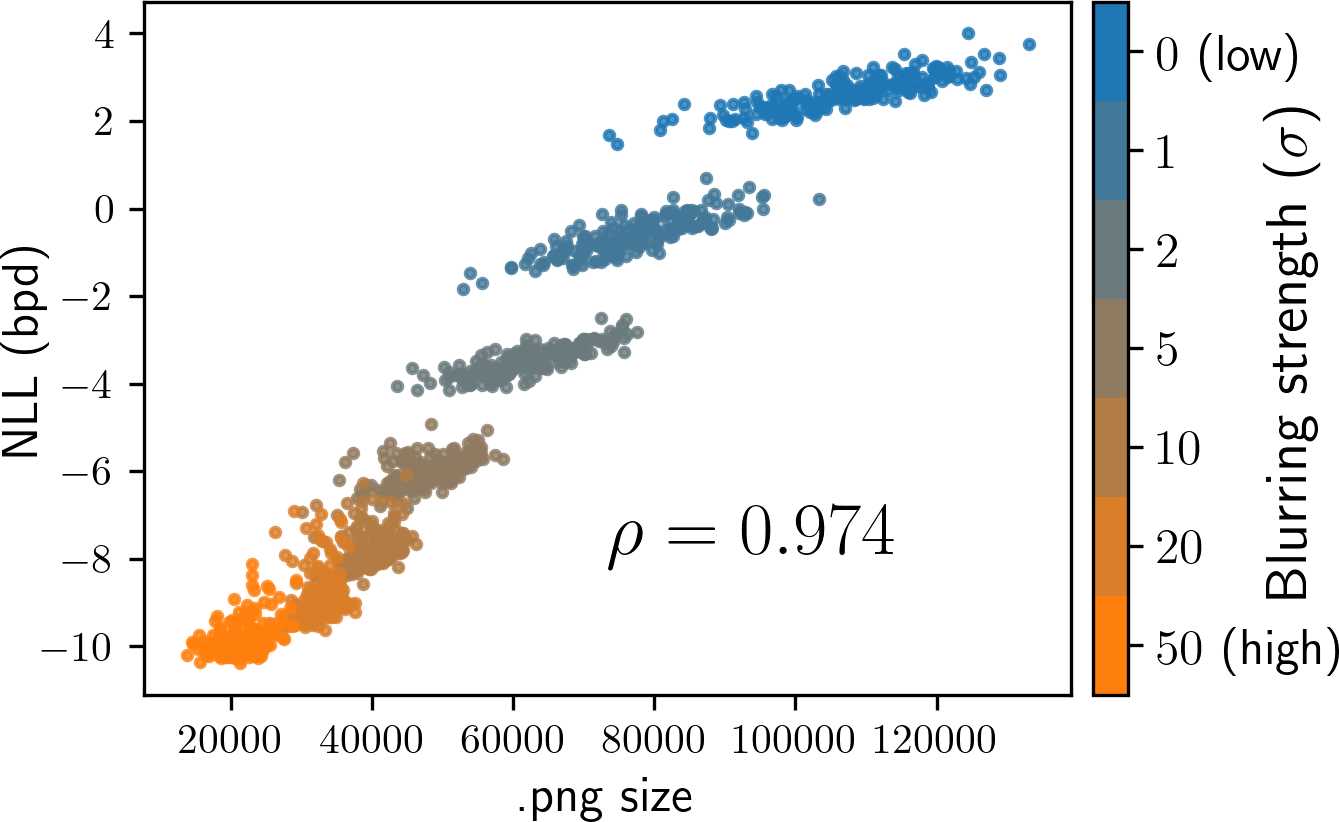}
  \caption{\textbf{Density estimates correlate with information content (\texttt{.jpg} size)}}
  \label{fig:logp-vs-png-ffhq}
\end{wrapfigure}
We demonstrated that algorithm \ref{alg:high-p-sampling} can generate images with much higher likelihoods than regular samples.
However, after visually inspecting the high density samples $\vy_0$ in \autoref{fig:cifar-pi-est} we found that they correspond to blurry images with much less detail than regular samples.

To gain more insight we analyze high-density samples on higher-resolution diffusion models.\footnote{We used FFHQ-256 and Churches-256 models from \url{github.com/yang-song/score_sde_pytorch} and ImageNet-64 from \url{github.com/NVlabs/edm}}
We found that the samples with the highest likelihood were blurry images.
Surprisingly, for higher values of $t$, the samples were unnatural cartoons but still received higher likelihoods than regular samples. The training datasets do not contain any cartoon images. See \autoref{fig:high-res-high-p-sampling}. A similar phenomenon occurs for latent diffusion models \autoref{app:stable_diffusion}.
\begin{figure}[t]
    \centering
    \includegraphics[width=0.99\linewidth]{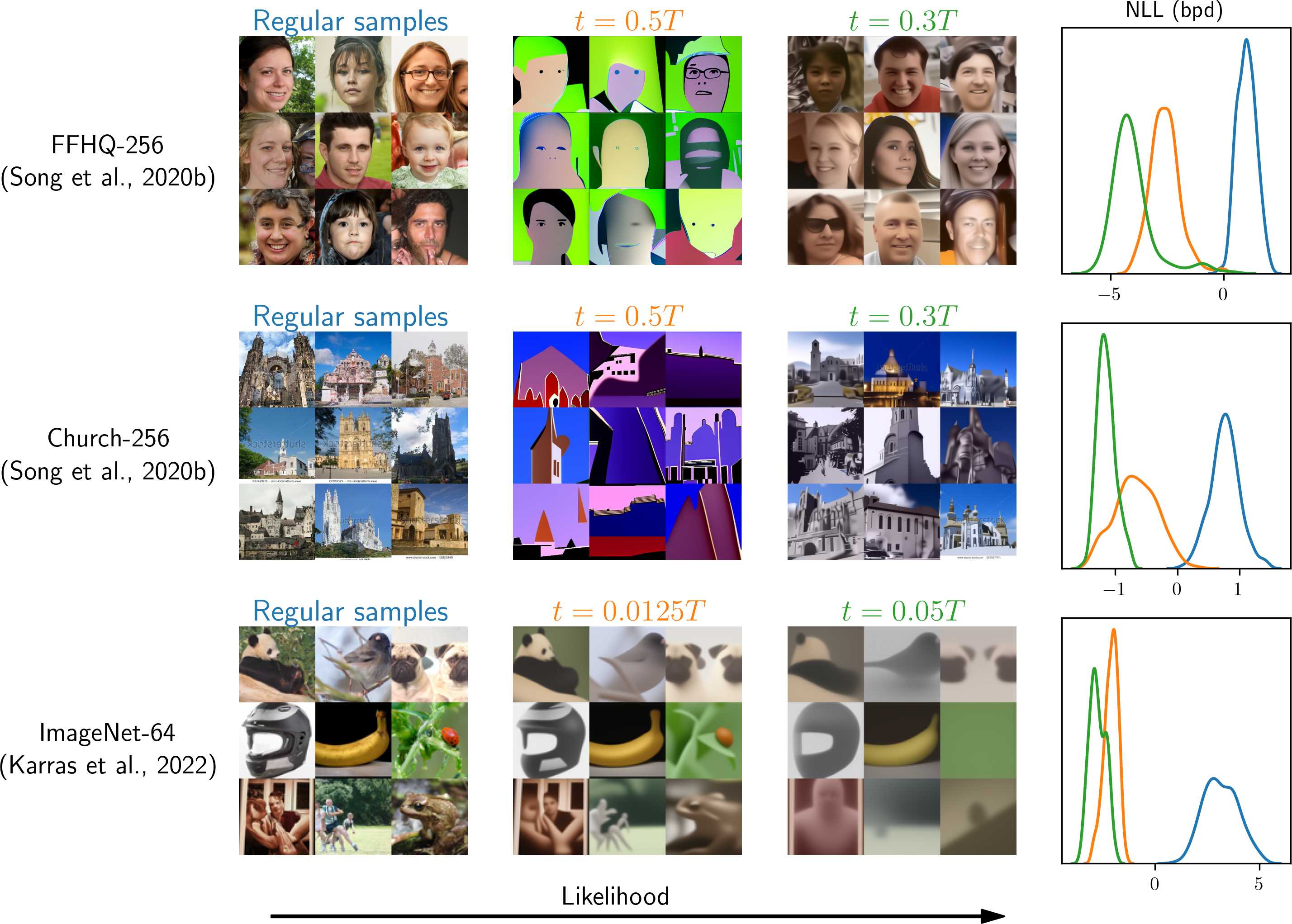}
    \caption{\textbf{Diffusion models assign highest likelihoods to unrealistic images.} Left: High-density samples generated with Algorithm \ref{alg:high-p-sampling} for various values of the threshold parameter $t$. Right: distributions of negative log-likelihood (in bits-per-dim). Across different models and datasets, Algorithm \ref{alg:high-p-sampling} finds unnatural images, which have higher densities than regular samples.}
    \label{fig:high-res-high-p-sampling}
\end{figure}

\begin{figure}[b]
    \centering
    \includegraphics[width=0.85\linewidth]{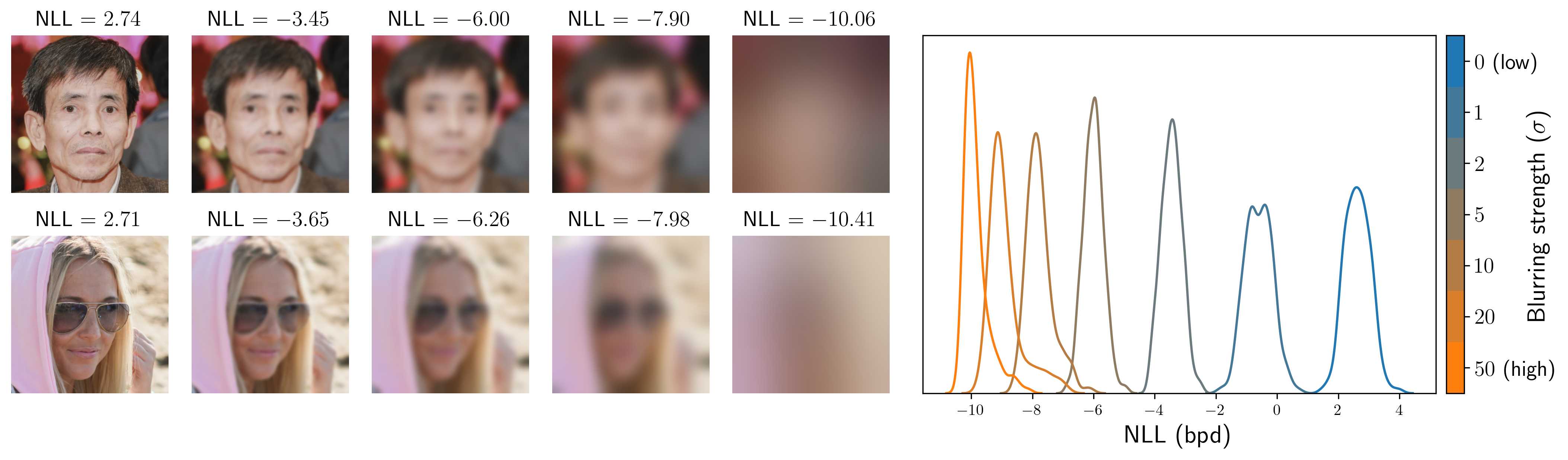}
    \caption{\textbf{Blurring increases likelihood.} Left: Two FFHQ-256 images with different amounts of blur and corresponding negative loglikelihoods (NLL). Right: Distributions of NLL for different amounts of added blur ($\sigma \in \{0, 1, 2, 5, 10, 20, 50\}$, 192 samples each).}
    \label{fig:blurring-logp-ffhq}
\end{figure}

Inspired by the empirical observation that the highest likelihood generated samples are blurry images we performed the following two experiments.
First, we add different amounts of blur to FFHQ-256 test images and measure the likelihood of the distorted image.
We found that blurring always increases the likelihood and that the increase is proportional to the strength of blurring (\autoref{fig:blurring-logp-ffhq}).

Second, we compare the model's likelihood estimation with the image's file size after PNG compression.
The smaller the PNG file size, the less information in the image.
 For FFHQ-256, we found a 97\% correlation between $\log p_0(\vx_0)$ and the amount of information in an image (\autoref{fig:logp-vs-png-ffhq}). This hints at why cartoons and blurry images have the highest densities (\autoref{app:explanation}).
 We used 192 samples for each of 7 blur strengths $\sigma \in \{0, 1, 2, 5, 10, 20, 50\}$, resulting in 7$\cdot$192=1344 images.

\section{Discussion}\label{sec:discussion}
\paragraph{Variance of $\log p_0^{\mathrm{SDE}}(\vx_0)$ estimate.} In \autoref{sec:accuracy} we discussed the accuracy of $r_0$, our novel estimate of $\log p_0^{\mathrm{SDE}}(\vx_0)$. Specifically, we provided tools to estimate the bound of its bias for stochastic samples. Based on the empirically measured correlation between $r_0$ and $\log p_0^{\mathrm{ODE}}(\vx_0)$ at over $0.99$, we hypothesize that the variance of $r_0 - \log p_0^{\mathrm{SDE}}(\vx_0)$ is low whenever $\vs(t, \vx) \approx \nabla \log p_t^{\mathrm{SDE}}(\vx)$. However, we do not provide theoretical guarantees, or empirical estimates.

\paragraph{$p_0^{\mathrm{SDE}}$ vs $p_0^{\mathrm{ODE}}$.} In \eqref{eq:kl-bound} we show that $\mathcal{R}^U(\vs) = \mathbb{E}_{(\vx_0, r_0)}[r_0 - \log p_0^{\mathrm{ODE}}(\vx_0)]$ is an upper bound on $\operatorname{KL}[p^{\mathrm{SDE}}_0 || p^{\mathrm{ODE}}_0] \geq 0$.
In particular, for $(\vx_0, r_0)$ sampled with \eqref{eq:tractable-approx-rev-sde-logp-dynamics} we must have \emph{on average} $ r_0 \geq \log p_0^{\mathrm{ODE}}(\vx_0)$. However, for two different models, we found that $r_0 \geq \log p_0^{\mathrm{ODE}}(\vx_0)$ holds \emph{for every sample}. We hypothesize this is a more widespread phenomenon, but do not prove it.

\paragraph{Towards exact estimates of $\log p_0^{\mathrm{SDE}}(\vx_0)$.} We showed in \eqref{eq:X} that the bias of $r_0$ is given by $\E (r_0 - \log p_0^{\mathrm{SDE}}(\vx_0)) \propto \E_{t, \vx_t } g^2(t) \| \vs(t, \vx_t) - \nabla_\vx \log p_t^{\mathrm{SDE}}(\vx_t) \|^2$ for $\vx_t \sim p_t^{\mathrm{SDE}}(\vx_t)$.
Similarly, \eqref{eq:fwd-bias} shows that $\log p_0^{\mathrm{SDE}}(\vx_0) - \mathrm{ELBO}(\vx_0) \propto \E_{t, \vx_t } g^2(t) \| \vs(t, \vx_t) - \nabla_\vx \log p_t^{\mathrm{SDE}}(\vx_t) \|^2$ for $\vx_t \sim p(\vx_t | \vx_0)$.
Both these errors could then be reduced to zero if $\vs(t, \vx) = \nabla_\vx \log p_t^{\mathrm{SDE}}(\vx)$ for all $t, \vx$.
However, for SDEs with linear drift (\eqref{eq:sde}), this can only happen if $p_t^{\mathrm{SDE}}$ is Gaussian for all $t$ (Proposition B.1 in \citet{lu2022maximum}).
This is because an SDE with linear drift cannot transform a non-Gaussian $p_0$ into a Gaussian in finite time $T$.

To unlock exact likelihood estimation in diffusion SDEs, non-linear drift is necessary, such as the one proposed in \citet{bartosh2024neural}.
There it is possible to have $p_T$ Gaussian for finite $T$ and $\vs(t, \vx) = \nabla_\vx \log p_t^{\mathrm{SDE}}(\vx)$ for all $t, \vx$, in which case both $r_0$ and $\mathrm{ELBO}(\vx_0)$ become exact (\autoref{th:approx-aug-rev-sde} and \autoref{prop:non-linear-elbo}).

\section{Related Work}
\citet{park2024random} show that the temperature in the diffusion samplers relates to the sample density and can result in cartoon-like samples. See \autoref{app:cartoons} for a discussion on cartoon generation.
\paragraph{Likelihood estimation for diffusion models.} As discussed in \autoref{sec:approx-dynamics}, there is a distinction between diffusion ODEs and SDEs. For diffusion SDEs, only lower bounds on likelihood are reported \citep{ho2020denoising, vahdat2021score, nichol2021improved, huang2021variational, kingma2021variational, kim2022maximum}. Exact likelihoods, on the other hand, are available for diffusion ODEs \citep{song2020score, song2021denoising, song2021maximum, dockhorn2021score} and some works explicitly optimize for ODE likelihood \citep{lu2022maximum, zheng2023improved, lai2023fp}. For a comprehensive survey, we refer to \citet{yang2023diffusion}. We provide a novel tool for estimating the likelihood of diffusion SDE samples. Concurrent work \citep{skreta2025the} independently discovered \autoref{th:rev-sde-likelihood} and \autoref{lem:gen-insta} without considering the forward process (\autoref{th:fwd-sde-logp}) or imperfect score functions (\autoref{sec:approx-dynamics}).

\paragraph{Typicality vs likelihood.} \citet{theis2015note} observed that likelihoods do not correlate with image quality.
Furthermore, deep generative models can assign higher likelihoods to out-of-distribution (OOD) data than training data \citep{choi2018waic, nalisnick2018deep, kirichenko2020normalizing} and perform poorly at OOD detection.
\citet{nalisnick2019detecting, choi2018waic} analyze this phenomenon with \textit{typicality}, arguing that typical samples do not have the highest likelihoods.
\citet{ben2024d} observed that explicit distortion of an image, like inserting a gray patch, increases the likelihood assigned by a flow-based model. Contrary to these reports, we explicitly study regions of highest likelihood and shed light on the probability landscape of diffusion models.

\section{Conclusion}

We provide novel tools for estimating the likelihood for Diffusion SDE samples. Additionally, we theoretically and empirically analyze the estimation error and discuss when exact likelihood estimation for diffusion SDEs might be possible.
These tools, combined with a theoretical mode-seeking analysis, allowed us to study high-density regions of diffusion models.
We made a surprising observation that unnatural and blurry images occupy the highest-density regions of diffusion models.
While \citet{karras2024guiding} argued that avoiding low-density regions is crucial for the success of diffusion models, our analysis reveals that high-density regions should also be avoided in high-quality image generation.
We discuss the limitations of this work in \autoref{app:limitations}.

This work not only enhances the understanding of diffusion model probability landscapes but also opens avenues for improved sample generation strategies.

\newpage


\subsubsection*{Acknowledgments}
This work was supported by the Finnish Center for Artificial Intelligence (FCAI) under Flagship R5 (award 15011052). VG also acknowledges the support from Saab-WASP (grant 411025), Academy of Finland (grant 342077), and the Jane and Aatos Erkko Foundation (grant 7001703). RK thanks Paulina Karczewska for her help with preparing figures.

\bibliography{iclr2025_conference}
\bibliographystyle{iclr2025_conference}

\newpage

\appendix
\section{Notation and assumptions}\label{app:assumptions}
\small
\begin{longtable}{p{.24\linewidth}  p{.6\linewidth}}
\caption{Summary of notation and abbreviations.} \\
\toprule
\textbf{Notation} & \textbf{Description} \\
\midrule
$D$ & dimensionality of the data \\ \midrule
$\| \vx\|^2 = \sum_i x_i^2$ & Squared Euclidean norm of $\vx \in \mathbb{R}^D$ \\ \midrule
$\nabla_\vx$, or $\frac{\partial}{\partial \vx}$ & gradient w.r.t $\vx$ \\ \midrule
$\div_\vx = \sum_i \frac{\partial}{\partial x_i}$ & divergence operator w.r.t $\vx$ \\ \midrule
$\mathbb{E}_{\vx \sim p}f(\vx)$ & expectation $= \int f(\vx)p(\vx)d\vx$ \\ \midrule
$\mI_D$ & $D$-dimensional identity matrix \\ \midrule
$D_{KL}(p || q) $ &Kullback-Leibler divergence $=\mathbb{E}_{\vx \sim p}\left(\log p(\vx) - \log q(\vx)\right)$ \\ \midrule
$f \in \mathcal{C}^k$ & function $f$ has continuous derivatives of up to order $k$ \\ \midrule
$\Delta_\vx = \sum_i \frac{\partial^2}{\partial x_i^2}$ & Laplace operator \\ \midrule[1.25pt]
$p_0$ & data distribution \\ \midrule
$p_t$ & marginal distribution of a process defined by Forward SDE \eqref{eq:sde} \\ \midrule
$p(\vx_t | \vx_0)=p_{t|0}(\vx_t | \vx_0)$ & Forward transition distribution. For linear drift SDE (\eqref{eq:sde}): $p(\vx_t|\vx_0) = \mathcal{N}(\vx_t|\alpha_t\vx_0, \sigma_t^2 \mI_D)$ \\ \midrule
$p(\vx_0| \vx_t)=p_{0|t}(\vx_0 | \vx_t)$ & Denoising distribution $p_{0|t}(\vx_0 | \vx_t) = p_{t|0}(\vx_t | \vx_0)p_0(\vx_0) / p_t(\vx_t)$ \\ \midrule
$\mathrm{SNR}(t) = \frac{\alpha_t^2}{\sigma_t^2}$ & signal to noise ratio \\ \midrule
$\lambda_t = \log \mathrm{SNR}(t)$ & $\log$ signal to noise ratio \\ \midrule
$\nabla_\vx \log p_t(\vx)$ & ``True" score function \\ \midrule
$\vs(t, \vx)$ & ``approximate" score function \\ \midrule
$p^{\mathrm{SDE}}_t$ & marginal distribution of a process defined by the approximate reverse SDE \eqref{eq:tractable-approx-rev-sde-logp-dynamics} \\ \midrule
$p^{\mathrm{ODE}}_t$ & marginal distribution of a process defined by the approximate ODE \eqref{eq:approx-pf-ode} \\ \midrule
$\rW$ & Wiener process running forward in time \\ \midrule
$\overline{\rW}$ & Wiener process running backwards in time \\ \bottomrule
\label{tab:notation}
\end{longtable}
\normalsize
In \eqref{eq:sde}, the drift term is linear in $\vx$, which corresponds to most commonly used SDEs, because it admits Gaussian transition kernels $p_{t|s}$ for $s < t$ \citep{song2020score, song2021maximum, kingma2021variational, kingma2024understanding}
and covers many common implementations of diffusion models \citep{ho2020denoising, song2021denoising, nichol2021improved, dhariwal2021diffusion, karras2024analyzing}.

However, more general SDEs have been proposed that do not assume linear drift \citep{zhang2021diffusion, bartosh2024neural}:
\begin{equation}\label{eq:non-linear-sde}
    d\vx_t = f(t, \vx_t)dt + g(t)d\rW_t
\end{equation}
with $f$ and $g$ satisfying assumptions below. We define the approximate reverse SDE in the general case
\begin{equation}\label{eq:approx-rev-non-linear-sde}
    d\vx_t = \left(f(t, \vx_t) - g^2(t) \vs(t, \vx_t)\right)dt + g(t)d\overline{\rW}_t.
\end{equation}
\textbf{In our theorems in \autoref{sec:exact_likelihood} and \autoref{sec:accuracy}, we do not assume the linearity of the drift and provide more general formulas. However, results in \autoref{sec:mode-estimation} only hold for linear drift SDE}.
We follow \citet{song2021maximum} and \citet{lu2022maximum} and make the following assumptions in our proofs to ensure existence of reverse-time SDEs and correctness of integration by parts.

\begin{enumerate}
    \item $p_0 \in \mathcal{C}^3$ and $\mathbb{E}_{\vx \sim p_0}[\|\vx\|^2] < \infty$.
    
    \item $\forall t \in [0, T] : f(\cdot, t) \in \mathcal{C}^2$. And $\exists C > 0, \forall \vx \in \mathbb{R}^D, t \in [0, T] : \|f(t, \vx)\|_2 \leq C(1 + \|\vx\|_2)$.
    
    \item $\exists C > 0, \forall \vx, \vy \in \mathbb{R}^D : \|f(t, \vx) - f(t, \vy)\|_2 \leq C \|\vx - \vy\|_2$.
    
    \item $g \in \mathcal{C}$ and $\forall t \in [0, T], |g(t)| > 0$.
    
    \item For any open bounded set $\mathcal{O}$, $\int_0^T \int_{\mathcal{O}} \left(\|p_t(\vx)\|^2 + D \cdot g(t)^2 \|\nabla p_t(x)\|^2 d\vx \right)dt < \infty$.
    
    \item $\exists C > 0, \forall \vx \in \mathbb{R}^D, t \in [0, T] : \|\nabla p_t(x)\|^2 \leq C(1 + \|\vx\|)$.
    
    \item $\exists C > 0, \forall \vx, \vy \in \mathbb{R}^D : \|\nabla \log p_t(\vx) - \nabla \log p_t(\vy)\| \leq C \|\vx - \vy\|$.
    
    \item $\exists C > 0, \forall \vx \in \mathbb{R}^D, t \in [0, T] : \|\vs(t, \vx)\| \leq C(1 + \|\vx\|)$.
    
    \item $\exists C > 0, \forall \vx, \vy \in \mathbb{R}^D : \|\vs(t, \vx) - \vs(t, \vy)\| \leq C \|\vx - \vy\|$.
    
    \item Novikov's condition: 
    $
    \mathbb{E}\left[\exp\left(\frac{1}{2} \int_0^T \|\nabla \log p_t(\vx) - \vs(t, \vx)\|^2 dt \right)\right] < \infty.
    $
    
    \item $\forall t \in [0, T], \exists k > 0 : p_t(\vx), p_t^{\mathrm{SDE}}(\vx), p_t^{\mathrm{ODE}}(\vx) \in O(e^{-\|\vx\|^k}) \text{ as } \|\vx\| \to \infty$.    
\end{enumerate}
Additionally, we assume
\begin{enumerate}[start=12]
    \item $\forall \vx_0 \in \mathbb{R}^D : \mathbb{E}_{\vx \sim \nu(\vx_0)}\left(\int_0^T g^2(t) \| \nabla_\vx \log p^{\mathrm{SDE}}_t(\vx_t) \|^2dt \right) < \infty$, where $\nu(\vx_0)$ is the path measure of \eqref{eq:sde} starting at $\vx_0$.
    \item $\mathbb{E}_{\vx \sim \nu^{\mathrm{SDE}}}\left(\int_0^T g^2(t) \| \nabla_\vx \log p^{\mathrm{SDE}}_t(\vx_t) - \vs(t, \vx_t) \|^2dt \right) < \infty$, where $\nu^{\mathrm{SDE}}$ is the path measure of \eqref{eq:tractable-approx-rev-sde-logp-dynamics}.
    \item $\forall \vx_0 \in \mathbb{R}^D, \forall t\in [0, T], \exists k > 0: p_{t|0}(\vx|\vx_0) \in O(e^{-\|\vx\|^k}) \text{ as } \|\vx\| \to \infty$. This trivially holds for the linear drift SDE (\eqref{eq:sde}), where $p_{t|0}(\vx|\vx_0)$ is Gaussian.
\end{enumerate}

\section{Fokker Planck Equation}\label{app:fpe}
A useful tool in some of the proofs is the Fokker-Planck equation \citep{fokker1914mittlere, planck1917satz, oksendal2003stochastic, sarkka2019applied}, a partial differential equation (PDE) governing the evolution of the marginal density of a diffusion process. 
For a process described in \autoref{eq:non-linear-sde}, the evolution of marginal density is described by the Fokker-Planck Equation:
\begin{equation}
    \frac{\partial}{\partial t} p_t(\vx) = -\div\left(f(t, \vx)p_t(\vx) \right) + \frac{1}{2}g^2(t)\Delta_\vx p_t(\vx),
\end{equation}
where $p_t$ is the marginal density at time $t$, which holds for all $t \in (0, T)$ and $\vx \in \mathbb{R}^D$. Equivalently, 
\begin{equation}\label{eq:fpe}
\begin{split}
    \frac{\partial}{\partial t}\log p_t(\vx) &= -\div\left(f(t, \vx) \right) + \frac{1}{2}g^2(t)\Delta_\vx \log p_t(\vx) \\
    &- \nabla_\vx \log p_t(\vx)^T\left(f(t, \vx) - \frac{1}{2}g^2(t)\nabla_\vx \log p_t(\vx) \right).
\end{split}
\end{equation}
\section{Itô's lemma}\label{app:ito-lemma}
The main tool for studying the dynamics of log-density of stochastic processes we will use in our proofs is Itô's lemma \citep{ito1951formula}, which states that for a stochastic process
\begin{equation}
    d\vx_t = f(t, \vx_t)dt + g(t)d\rW_t
\end{equation}
and a smooth function $h : \mathbb{R}\times \mathbb{R}^D \to \mathbb{R}$ it holds that
\begin{equation}
    dh(t, \vx_t) = \left( \frac{\partial}{\partial t}h(t, \vx_t) + \frac{\partial}{\partial \vx}h(t, \vx_t)^Tf(t, \vx_t) + \frac{1}{2}g^2(t)\Delta_\vx h(t, \vx_t) \right)dt + g(t)\frac{\partial}{\partial \vx}h(t, \vx_t)d\rW_t.
\end{equation}
A more general version of Itô's lemma holds, which does not assume isotropic diffusion, but we do not need it in our proofs.
\section{Proof of \autoref{th:fwd-sde-logp}}\label{app:fwd-sde-logp-proof}
\fwdsdelikelihood*
For the non-linear drift (\eqref{eq:non-linear-sde}), we have (the difference highlighted in \textcolor{blue}{blue})
\begin{equation*}
    F(t, \vx_t) = -\div_\vx \left(\textcolor{blue}{f(t, \vx_t)} - g^2(t)  \colorscore{\nabla_\vx \log p_t(\vx_t)} \right)  + \frac{1}{2}g^2(t) \| \colorscore{\nabla_\vx \log p_t(\vx_t)} \|^2
\end{equation*}
\begin{proof}
    We will apply Itô's lemma (\autoref{app:ito-lemma}) to $h(t, \vx)\coloneqq\log p_t(\vx)$.
    From \autoref{eq:fpe}, we have
    \begin{equation}\label{eq:logp-partial-t}
        \frac{\partial}{\partial t}h(t, \vx) = -\mathrm{div}_\vx f(t, \vx) + \frac{1}{2}g^2(t)\Delta_\vx \log p_t(\vx) - \nabla_\vx \log p_t(\vx)^T\left(f(t, \vx) - \frac{1}{2}g^2(t)\nabla_\vx \log p_t(\vx) \right)
    \end{equation}
    and
    \begin{equation}
    \begin{split}
        &\frac{\partial}{\partial t}h(t, \vx_t) + \frac{\partial}{\partial \vx}h(t, \vx_t)^Tf(t, \vx_t) + \frac{1}{2}g^2(t)\Delta_\vx h(t, \vx_t) \\
        & =  -\mathrm{div}_\vx f(t, \vx) + \frac{1}{2}g^2(t)\Delta_\vx \log p_t(\vx) - \nabla_\vx \log p_t(\vx)^T\left(\cancel{f(t, \vx)} - \frac{1}{2}g^2(t)\nabla_\vx \log p_t(\vx) \right) \\
        & + \cancel{ \nabla_\vx \log p_t(\vx)^Tf(t, \vx_t)} + \frac{1}{2}g^2(t)\Delta_\vx \log p_t(\vx_t) \\
        & = -\mathrm{div}_\vx f(t, \vx_t) +\frac{1}{2}g^2(t) \| \nabla_\vx \log p_t(\vx_t) \|^2 + g^2(t)\Delta_\vx \log p_t(\vx_t) \\
        & = -\mathrm{div}_\vx \left(f(t, \vx_t) - g^2(t)\nabla_\vx \log p_t(\vx_t) \right) + \frac{1}{2}g^2(t) \| \nabla_\vx \log p_t(\vx_t) \|^2.
    \end{split}
    \end{equation}
    Finally, we get
    \begin{equation}
    \begin{split}
        d\log p_t(\vx_t) =& \left( -\mathrm{div}_\vx \left(f(t, \vx_t) - g^2(t)\nabla_\vx \log p_t(\vx_t) \right) + \frac{1}{2}g^2(t) \| \nabla_\vx \log p_t(\vx_t) \|^2\right)dt \\
        & + \nabla_\vx \log p_t(\vx_t)^Td\rW_t.
    \end{split}
    \end{equation}
\end{proof}
\section{Proof of \autoref{th:rev-sde-likelihood}}\label{app:rev-sde-log-likelihood-proof}
\revsdelikelihood*
For the non-linear drift (\eqref{eq:non-linear-sde}), we have (the difference highlighted in \textcolor{blue}{blue})
    \begin{equation*}
        d\begin{bmatrix}
        \vx_t \\ \log p_t(\vx_t)
    \end{bmatrix} = \begin{bmatrix}
        \textcolor{blue}{f(t, \vx_t)} - g^2(t) \colorscore{\nabla_\vx \log p_t(\vx_t)} \\ -\textcolor{blue}{\div_\vx f(t, \vx_t)} - \frac{1}{2}g^2(t) \| \colorscore{\nabla_\vx \log p_t(\vx_t)} \|^2
    \end{bmatrix}dt + g(t)\begin{bmatrix}
        \mI_D \\ \colorscore{\nabla_\vx \log p_t(\vx_t)}^T
    \end{bmatrix}d\overline{\rW}_t.
    \end{equation*}
\begin{proof}
    The reverse SDE (\autoref{eq:rev-sde}) can be equivalently written, as
    \begin{equation}
        d\vx_t = \underbrace{\left(-f(T-t, \vx_t) + g^2(T-t)\nabla_\vx\log p_{T-t}(\vx_t)\right)}_{\eqqcolon \mu(t, \vx_t)}dt + g(T-t)d\rW_t
    \end{equation}
    for a forward running brownian motion $\rW$ and positive $dt$. We will apply Itô's lemma (\autoref{app:ito-lemma}) to $g(t, \vx) = \log p_{T-t}(\vx)$. 
    Since forward and reverse SDEs share marginals we can use \autoref{eq:logp-partial-t}:
    \begin{equation}
    \begin{split}
        \frac{\partial}{\partial t}g(t, \vx) &= \mathrm{div}_\vx(f(T-t, \vx)) - \frac{1}{2}g^2(T-t)\Delta_\vx \log p_{T-t}(\vx) \\
        & + \nabla_\vx \log p_{T-t}(\vx)^T\left(f(T-t, \vx) - \frac{1}{2}g^2(T-t)\nabla_\vx \log p_{T-t}(\vx) \right)
    \end{split}
    \end{equation}
    and
    \begin{equation}
        \begin{split}
            &\frac{\partial}{\partial t}g(t, \vx_t) + \frac{\partial}{\partial \vx}g(t, \vx_t)^T\mu(t, \vx_t) + \frac{1}{2}g^2(T-t)\Delta_\vx g(t, \vx_t)\\
            & = \mathrm{div}_\vx(f(T-t, \vx)) - \color{red}\cancel{\color{black}\frac{1}{2}g^2(T-t)\Delta_\vx \log p_{T-t}(\vx_t)} \\
            & + \nabla_\vx \log p_{T-t}(\vx_t)^T\left(\cancel{f(T-t, \vx_t)} - \frac{1}{2}g^2(T-t)\nabla_\vx \log p_{T-t}(\vx_t) \right) \\
            & - \nabla_\vx \log p_{T-t}(\vx_t)^T\left(\cancel{f(T-t, \vx_t)} - g^2(T-t)\nabla_\vx \log p_{T-t}(\vx_t) \right) \\
            & + \color{red}\cancel{\color{black}\frac{1}{2}g^2(T-t)\Delta_\vx \log p_{T-t}(\vx_t)} \\
            & = \mathrm{div}_\vx(f(T-t, \vx_t)) + \frac{1}{2}g^2(T-t)\|\nabla_\vx \log p_{T-t}(\vx_t) \|^2.
        \end{split}
    \end{equation}
    Remarkably, the terms involving the higher order derivatives: $\Delta_\vx \log p_{T-t}(\vx_t)$ cancel out. Thus, we have
    \begin{equation}
    \begin{split}
        d \log p_{T-t}(\vx_t) =& \left( \mathrm{div}_\vx(f(T-t, \vx_t)) + \frac{1}{2}g^2(T-t)\|\nabla_\vx \log p_{T-t}(\vx_t) \|^2\right)dt \\
        & + g(T-t)\nabla_\vx \log p_{T-t}(\vx_t)^Td\rW_t,
    \end{split}
    \end{equation}
    which can equivalently be written as
    \begin{equation}
         d \log p_t(\vx_t) = \left(-\mathrm{div}_\vx(f(t, \vx_t)) - \frac{1}{2}g^2(t)\|\nabla_\vx \log p_t(\vx_t) \|^2\right)dt + g(t)\nabla_\vx \log p_t(\vx_t)^Td\overline{\rW}_t,
    \end{equation}
    where $\overline{\rW}$ is running backwards in time and $dt$ is negative.
\end{proof}
\section{General SDEs}\label{app:general_sde}
\citet{karras2022elucidating} showed that there is a more general SDE formulation than \autoref{eq:non-linear-sde}, which can be interpreted as a continuum between the PF-ODE and SDE formulations.
Specifically, they showed that for any choice of $\beta : \mathbb{R}_+ \to \mathbb{R}_+$ the SDE
\begin{equation}\label{eq:general_forward_sde}
    d\vx_t = \left(f(t, \vx_t) - \left(\frac{1}{2} - \beta(t)\right)g^2(t)\nabla_\vx\log p_t(\vx_t) \right)dt + \sqrt{2\beta(t)}g(t)d\rW_t
\end{equation}
has the same marginals as \autoref{eq:non-linear-sde} and it has a reverse-time SDE
\begin{equation}\label{eq:general_reverse_sde}
        d\vx_t = \left(f(t, \vx_t) - \left(\frac{1}{2} + \beta(t)\right)g^2(t)\nabla_\vx\log p_t(\vx_t) \right)dt + \sqrt{2\beta(t)}g(t)d\overline{\rW}_t.
\end{equation}
One can therefore replace the original model ($\beta \equiv \frac{1}{2}$) with any choice of non-negative $\beta$.
We now derive the augmented dynamics of $\log p_t(\vx_t)$ for any $\beta$.
\subsection{General Forward Augmented Dynamics}
\begin{prop}
    For $\vx$ following \autoref{eq:general_forward_sde} we have
    \begin{equation}
    \begin{split}
        d \log p_t(\vx_t) &= \left( -\mathrm{div}_\vx f(t, \vx_t) + \left(\frac{1}{2} + \beta(t) \right)g^2(t)\Delta_\vx \log p_t(\vx_t) + \beta(t) g^2(t)\|\nabla_\vx \log p_t(\vx_t) \|^2 \right)dt \\
        & + \sqrt{2\beta(t)}g(t)\nabla_\vx \log p_t(\vx_t)^Td\rW_t.
    \end{split}
    \end{equation}
\end{prop}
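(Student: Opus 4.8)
The plan is to mimic the proofs of Theorems~\ref{th:fwd-sde-logp} and~\ref{th:rev-sde-likelihood}: apply Itô's lemma to $h(t,\vx)\coloneqq\log p_t(\vx)$ along a trajectory of \eqref{eq:general_forward_sde}, and then simplify the resulting drift using the Fokker--Planck equation \eqref{eq:fpe}. The observation that makes this work with no new ideas is that, by construction \citep{karras2022elucidating}, \eqref{eq:general_forward_sde} shares its marginals $p_t$ with the base forward SDE \eqref{eq:non-linear-sde}; hence $p_t$ still satisfies the same Fokker--Planck equation, and \eqref{eq:logp-partial-t} remains valid for $\partial_t \log p_t$.

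Concretely, first I would write \eqref{eq:general_forward_sde} as $d\vx_t = \mu(t,\vx_t)\,dt + \tilde g(t)\,d\rW_t$ with drift $\mu(t,\vx) = f(t,\vx) - \big(\tfrac12 - \beta(t)\big)g^2(t)\nabla_\vx \log p_t(\vx)$ and diffusion coefficient $\tilde g(t) = \sqrt{2\beta(t)}\,g(t)$. Itô's lemma (\autoref{app:ito-lemma}) then gives
\begin{equation*}
  d h(t,\vx_t) = \Big( \tfrac{\partial}{\partial t} h(t,\vx_t) + \nabla_\vx h(t,\vx_t)^T \mu(t,\vx_t) + \tfrac12 \tilde g^2(t) \Delta_\vx h(t,\vx_t) \Big)\,dt + \tilde g(t)\,\nabla_\vx h(t,\vx_t)^T d\rW_t .
\end{equation*}
Substituting $\partial_t h$ from \eqref{eq:logp-partial-t}, using $\tfrac12 \tilde g^2 = \beta g^2$, and expanding $\nabla_\vx h^T \mu = \nabla_\vx \log p_t^T f - (\tfrac12-\beta)g^2\|\nabla_\vx \log p_t\|^2$, I expect exactly three simplifications: the $\nabla_\vx \log p_t^T f$ term from $\partial_t h$ cancels the $\nabla_\vx \log p_t^T f$ term from $\nabla_\vx h^T\mu$; the Laplacian contributions $\tfrac12 g^2\Delta_\vx \log p_t$ (from $\partial_t h$) and $\beta g^2\Delta_\vx \log p_t$ (the Itô second-order term) combine to $(\tfrac12+\beta)g^2\Delta_\vx\log p_t$; and the squared-norm terms $\tfrac12 g^2\|\nabla_\vx\log p_t\|^2$ and $-(\tfrac12-\beta)g^2\|\nabla_\vx\log p_t\|^2$ combine to $\beta g^2\|\nabla_\vx\log p_t\|^2$. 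This produces precisely the claimed drift $-\div_\vx f + (\tfrac12+\beta)g^2\Delta_\vx\log p_t + \beta g^2\|\nabla_\vx\log p_t\|^2$ together with the diffusion term $\sqrt{2\beta}\,g\,\nabla_\vx\log p_t^T d\rW_t$.

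Finally I would record the sanity checks that $\beta\equiv\tfrac12$ recovers \autoref{th:fwd-sde-logp} (the drift collapses to $-\div_\vx(f - g^2\nabla_\vx\log p_t) + \tfrac12 g^2\|\nabla_\vx\log p_t\|^2$ and the diffusion to $g\,\nabla_\vx\log p_t^T d\rW_t$) and $\beta\equiv 0$ recovers the PF-ODE log-density evolution \eqref{eq:augmented-ode-dynamics}. I do not anticipate a genuine obstacle: the only two points needing care are (i) invoking the marginal-preservation property of \eqref{eq:general_forward_sde} so that the Fokker--Planck identity \eqref{eq:logp-partial-t} may be reused, and (ii) checking that the regularity and growth conditions of \autoref{app:assumptions} (Lipschitzness of $f$ and of $\nabla_\vx\log p_t$, the integrability hypotheses, now with the rescaled diffusion $\sqrt{2\beta}\,g$) still license both the existence of a strong solution and the application of Itô's lemma. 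The term bookkeeping above is the only ``work'', and it is routine.
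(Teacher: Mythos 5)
Your proof is correct and follows exactly the same route as the paper's: invoke marginal-preservation of \eqref{eq:general_forward_sde} to reuse the Fokker--Planck identity \eqref{eq:logp-partial-t}, apply Itô's lemma with drift $\mu = f - (\tfrac12-\beta)g^2\nabla_\vx\log p_t$ and diffusion $\sqrt{2\beta}\,g$, and do the three cancellations you list. The sanity checks ($\beta\equiv\tfrac12$ recovering \autoref{th:fwd-sde-logp}, $\beta\equiv 0$ recovering the PF-ODE density evolution) are a nice addition but the core argument is identical.
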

Note that since $\beta(t) \geq 0$, the higher order term $\Delta_\vx \log p_t(\vx_t)$ is always non-zero for any $\beta$.
\begin{proof}
    We proceed in the same way as the proof of \autoref{th:fwd-sde-logp} and apply Itô's lemma (\autoref{app:ito-lemma}) to $h(t, \vx) = \log p_t(\vx)$.
    Since \autoref{eq:general_forward_sde} shares marginals with \autoref{eq:non-linear-sde} we can reuse the derivation of $\frac{\partial}{\partial t}h(t, \vx)$ from \autoref{eq:logp-partial-t}:
    \begin{equation*}
        \frac{\partial}{\partial t}h(t, \vx) = -\mathrm{div}_\vx f(t, \vx) + \frac{1}{2}g^2(t)\Delta_\vx \log p_t(\vx) - \nabla_\vx \log p_t(\vx)^T\left(f(t, \vx) - \frac{1}{2}g^2(t)\nabla_\vx \log p_t(\vx) \right).
    \end{equation*}
    Therefore for $\vx$ following \autoref{eq:general_forward_sde}:
    \begin{equation}
    \begin{split}
        d \log p_t(\vx_t) =& \left(\frac{\partial}{\partial t}h(t, \vx_t) + \nabla_\vx \log p_t(\vx_t)^T\left( f(t, \vx_t) - \left( \frac{1}{2} - \beta(t) \right)g^2(t)\nabla_\vx \log p_t(\vx_t)\right) \right. \\
        & + \beta(t)g^2(t)\Delta\log p_t(\vx_t) \left. \right)dt + \sqrt{2\beta(t)}g(t)\nabla_\vx \log p_t(\vx_t)^Td\rW_t \\
        =& \left( -\mathrm{div}_\vx f(t, \vx_t) + \left(\frac{1}{2} + \beta(t) \right)g^2(t)\Delta_\vx \log p_t(\vx_t) + \beta(t)g^2(t) \|\nabla_\vx \log p_t(\vx_t) \|^2 \right)dt \\
        & + \sqrt{2\beta(t)}g(t)\nabla_\vx \log p_t(\vx_t)^Td\rW_t.
    \end{split}
    \end{equation}
\end{proof}
\subsection{General Reverse Augmented Dynamics}
\begin{prop}
    For $\vx$ following \autoref{eq:general_reverse_sde}, we have
    \begin{equation}\label{eq:general-rev-sde-logp-dynamics}
    \begin{split}
        d\log p_t(\vx_t) =& -\left(\mathrm{div}_\vx f(t, \vx_t) + \left(\beta(t) - \frac{1}{2}\right)g^2(t)\Delta_\vx \log p_{t}(\vx_t) + \beta(t)g^2(t) \|\nabla_\vx\log p_{t}(\vx_t)\|^2\right)dt \\
        & + \sqrt{2\beta(t)}g(t)\nabla_\vx\log p_{t}(\vx_t)^Td\overline{\rW}_t
    \end{split}
    \end{equation}
\end{prop}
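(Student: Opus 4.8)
The plan is to reproduce the argument of \autoref{app:rev-sde-log-likelihood-proof} almost verbatim; the only genuinely new feature is that the Laplacian term no longer cancels completely. First I would pass to forward time: since \autoref{eq:general_reverse_sde} runs backwards, set $\tilde{\vx}_t \coloneqq \vx_{T-t}$, which solves the forward-running SDE
\begin{equation*}
    d\tilde{\vx}_t = \underbrace{\left(-f(T-t, \tilde{\vx}_t) + \left(\tfrac{1}{2}+\beta(T-t)\right)g^2(T-t)\nabla_\vx\log p_{T-t}(\tilde{\vx}_t)\right)}_{\eqqcolon\, \mu(t, \tilde{\vx}_t)}dt + \sqrt{2\beta(T-t)}\,g(T-t)\,d\rW_t
\end{equation*}
for a forward Wiener process $\rW$ and positive $dt$, exactly as in the $\beta\equiv\tfrac12$ case. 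Here I use that \autoref{eq:general_forward_sde} and \autoref{eq:general_reverse_sde} share the marginals $p_t$ with \autoref{eq:non-linear-sde} \citep{karras2022elucidating}, so the Fokker--Planck identity already available for the original SDE can be reused unchanged.

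Next I would apply Itô's lemma (\autoref{app:ito-lemma}) to $h(t, \vx)\coloneqq\log p_{T-t}(\vx)$ along $\tilde{\vx}$. The partial derivative $\partial_t h(t, \vx) = -\left[\partial_s\log p_s(\vx)\right]_{s=T-t}$ is supplied by \autoref{eq:fpe}; the gradient term is $\nabla_\vx h^T\mu$; and the second-order Itô correction is $\tfrac{1}{2}\big(\sqrt{2\beta(T-t)}\,g(T-t)\big)^2\Delta_\vx h = \beta(T-t)g^2(T-t)\,\Delta_\vx\log p_{T-t}$. Summing the three, the $\nabla_\vx\log p_{T-t}^T f$ contributions cancel as before, the squared-norm terms combine to $\beta g^2\|\nabla_\vx\log p_{T-t}\|^2$, and --- the one new point --- the two Laplacian contributions $-\tfrac12 g^2\Delta_\vx\log p_{T-t}$ (from $\partial_t h$ via Fokker--Planck) and $+\beta g^2\Delta_\vx\log p_{T-t}$ (from the Itô correction) combine to $(\beta-\tfrac12)g^2\Delta_\vx\log p_{T-t}$ rather than vanishing. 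This yields $dh(t,\tilde{\vx}_t)$ with drift $\div_\vx f(T-t,\cdot) + (\beta(T-t)-\tfrac12)g^2(T-t)\Delta_\vx\log p_{T-t} + \beta(T-t)g^2(T-t)\|\nabla_\vx\log p_{T-t}\|^2$ and diffusion $\sqrt{2\beta(T-t)}\,g(T-t)\,\nabla_\vx\log p_{T-t}^T d\rW_t$.

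Finally I would undo the time reversal, rewriting the statement in terms of $\vx_t$ and the backward Wiener process $\overline{\rW}$ (with negative $dt$), which flips the sign of the drift and produces exactly \autoref{eq:general-rev-sde-logp-dynamics}. I do not expect a real obstacle beyond careful bookkeeping: the one thing to watch is that the coefficient $\tfrac12+\beta$ appearing in the reverse drift and the variance rate $2\beta$ enter the Itô expansion through different terms, so the incomplete cancellation of $\Delta_\vx\log p_{T-t}$ must be tracked with the correct coefficient $\beta-\tfrac12$. As consistency checks, specializing to $\beta\equiv\tfrac12$ recovers \autoref{th:rev-sde-likelihood}, and $\beta\equiv0$ recovers the deterministic log-density dynamics of the PF-ODE in \autoref{eq:augmented-ode-dynamics}.
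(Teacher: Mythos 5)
Your proposal is correct and follows the same route as the paper's own proof: pass to forward time, apply Itô's lemma to $h(t,\vx)=\log p_{T-t}(\vx)$ with $\partial_t h$ supplied by the Fokker--Planck equation for $p_t$ (which is valid for all $\beta$ since the marginals are shared), and then reverse time again. The bookkeeping you describe is exactly right: the $\nabla\log p^T f$ terms cancel, the norm terms give $\beta g^2\|\nabla\log p\|^2$, and the Itô correction $\beta g^2\Delta\log p$ only partially offsets the $-\tfrac12 g^2\Delta\log p$ from Fokker--Planck, leaving the $(\beta-\tfrac12)g^2\Delta\log p$ term.
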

Note that $\beta \equiv \frac{1}{2}$ (corresponding to \autoref{eq:rev-sde}) is the only choice for which the higher order term involving $\Delta_\vx \log p_t(\vx_t)$ disappears.
\begin{proof}
    Similarly to the proof of \autoref{th:rev-sde-likelihood} rewrite the general reverse SDE with positive $dt$ and $\rW$ going forward in time
    \begin{equation}
    \begin{split}
        d\vx_t =& -\left( f(T-t, \vx_t) - \left(\frac{1}{2} + \beta(T-t)\right)g^2(T-t)\nabla_\vx\log p_{T-t}(\vx_t)\right)dt \\
        & + \sqrt{2\beta(T-t)}g(T-t)d\rW_t
    \end{split}
    \end{equation}
    We apply Itô's lemma to $g(t, x) = \log p_{T-t}(\vx)$:
    \begin{equation}
    \begin{split}
        d g(t, \vx_t) =& \frac{\partial}{\partial t}g(t, \vx_t)dt \\
        & - \nabla_\vx g(t, \vx_t)^T \left(f(T-t, \vx_t)
        - \left(\frac{1}{2} + \beta(T-t)\right)g^2(T-t)\nabla_\vx\log p_{T-t}(\vx_t) \right)dt \\
        & + \beta(T-t)g^2(T-t)\Delta_\vx g(t, \vx_t)dt \\
        & + \sqrt{2\beta(T-t)}g(T-t)\nabla_\vx\log p_{T-t}(\vx_t)^Td\rW_t \\
        =& \left(\mathrm{div}_\vx f(T-t, \vx_t) + \left(\beta(T-t) - \frac{1}{2}\right)g^2(T-t)\Delta_\vx \log p_{T-t}(\vx_t) \right)dt \\
        & + \beta(T-t) g^2(T-t)\|\nabla_\vx\log p_{T-t}(\vx_t)\|^2dt \\
        & + \sqrt{2\beta(T-t)}g(T-t)\nabla_\vx\log p_{T-t}(\vx_t)^Td\rW_t,
    \end{split}
    \end{equation}
    which we rewrite equivalently with $dt<0$ and $\overline{\rW}$ running backward in time to obtain \autoref{eq:general-rev-sde-logp-dynamics}.
\end{proof}
\section{Approximate model dynamics}\label{app:approximate-dynamics}
Analogously to \autoref{th:fwd-sde-logp} and \autoref{th:rev-sde-likelihood} we can derive the dynamics of $\log p_t^{\mathrm{SDE}}(\vx)$.
\begin{theorem}[Approximate augmented forward SDE]\label{th:approx-aug-fwd-sde}
    Let $\vx$ be a random process defined by \autoref{eq:non-linear-sde}. Then
    \begin{equation}\label{eq:approx-fwd-logp-sde}
        d \log p_t^{\mathrm{SDE}}(\vx_t) = G(t, \vx_t)dt + \nabla_\vx \log p_t^{\mathrm{SDE}}(\vx)^T d\rW_t,
    \end{equation}
    where
    \begin{equation}
        G(t, \vx) = -\mathrm{div}_\vx \left(f(t, \vx) - g^2(t) \vs(t, \vx)\right) + \frac{1}{2}g^2(t)\|\vs(t, \vx)\|^2  - \frac{1}{2}g^2(t) \| \nabla_\vx \log p_t^{\mathrm{SDE}}(\vx) - \vs(t, \vx) \|^2
    \end{equation}
\end{theorem}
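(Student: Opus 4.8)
The plan is to imitate the proof of \autoref{th:fwd-sde-logp}: apply Itô's lemma (\autoref{app:ito-lemma}) to the time-dependent scalar field $h(t,\vx)\coloneqq\log p_t^{\mathrm{SDE}}(\vx)$ evaluated along a trajectory of the forward process \eqref{eq:non-linear-sde}. Itô's lemma reads off the diffusion term $g(t)\nabla_\vx\log p_t^{\mathrm{SDE}}(\vx_t)^{T}d\rW_t$ at once, so everything reduces to identifying the drift $\partial_t h+\nabla_\vx h^{T}f+\frac{1}{2}g^2\Delta_\vx h$; since $\nabla_\vx h$ and $\Delta_\vx h$ are directly the log-score and its Laplacian, the only missing ingredient is an expression for $\partial_t\log p_t^{\mathrm{SDE}}$, i.e.\ the Fokker--Planck equation obeyed by $p_t^{\mathrm{SDE}}$.

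Deriving that PDE is the crux. Unlike $p_t$, the density $p_t^{\mathrm{SDE}}$ is the marginal of the approximate \emph{reverse} SDE \eqref{eq:approx-rev-non-linear-sde}, which is integrated backward in time. As in the proof of \autoref{th:rev-sde-likelihood}, I rewrite \eqref{eq:approx-rev-non-linear-sde} as a bona fide forward-time diffusion in the reparametrised time $\tau=T-t$, with drift $-f(T-\tau,\cdot)+g^2(T-\tau)\vs(T-\tau,\cdot)$ and diffusion $g(T-\tau)$, whose marginal density at time $\tau$ equals $p_{T-\tau}^{\mathrm{SDE}}$. Applying the standard Fokker--Planck equation (\autoref{app:fpe}) to this process and undoing the substitution gives
\begin{equation*}
    \frac{\partial}{\partial t}p_t^{\mathrm{SDE}}(\vx)=-\mathrm{div}_\vx\!\Big(\big(f(t,\vx)-g^2(t)\vs(t,\vx)\big)p_t^{\mathrm{SDE}}(\vx)\Big)-\tfrac{1}{2}g^2(t)\Delta_\vx p_t^{\mathrm{SDE}}(\vx),
\end{equation*}
and dividing by $p_t^{\mathrm{SDE}}$, using $\nabla p=p\nabla\log p$ and $\Delta p=p\big(\Delta\log p+\|\nabla\log p\|^2\big)$, yields
\begin{equation*}
    \frac{\partial}{\partial t}\log p_t^{\mathrm{SDE}}=-\mathrm{div}_\vx\!\big(f-g^2\vs\big)-\big(f-g^2\vs\big)^{T}\nabla_\vx\log p_t^{\mathrm{SDE}}-\tfrac{1}{2}g^2\Big(\|\nabla_\vx\log p_t^{\mathrm{SDE}}\|^2+\Delta_\vx\log p_t^{\mathrm{SDE}}\Big).
\end{equation*}

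The last step is a routine cancellation. Plugging the above into the Itô drift $\partial_t h+\nabla_\vx h^{T}f+\frac{1}{2}g^2\Delta_\vx h$, the two occurrences of $\frac{1}{2}g^2\Delta_\vx\log p_t^{\mathrm{SDE}}$ cancel (exactly as in \autoref{th:fwd-sde-logp}), and the $f^{T}\nabla_\vx\log p_t^{\mathrm{SDE}}$ terms cancel, leaving $-\mathrm{div}_\vx(f-g^2\vs)+g^2\vs^{T}\nabla_\vx\log p_t^{\mathrm{SDE}}-\frac{1}{2}g^2\|\nabla_\vx\log p_t^{\mathrm{SDE}}\|^2$. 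Completing the square, $g^2\vs^{T}\nabla_\vx\log p_t^{\mathrm{SDE}}-\frac{1}{2}g^2\|\nabla_\vx\log p_t^{\mathrm{SDE}}\|^2=\frac{1}{2}g^2\|\vs\|^2-\frac{1}{2}g^2\|\nabla_\vx\log p_t^{\mathrm{SDE}}-\vs\|^2$, which reproduces $G(t,\vx)$ exactly. As a sanity check, taking $\vs=\nabla_\vx\log p_t^{\mathrm{SDE}}$ kills the last term, forces $p_t^{\mathrm{SDE}}=p_t$, and recovers $F$ from \autoref{th:fwd-sde-logp}; conversely, replacing $\nabla_\vx\log p_t^{\mathrm{SDE}}$ by $\vs$ everywhere and dropping the intractable $\frac{1}{2}g^2\|\nabla_\vx\log p_t^{\mathrm{SDE}}-\vs\|^2$ term recovers the auxiliary dynamics of \autoref{th:tractable-approx-fwd-aug-sde}.

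I expect the main obstacle to be the time-reversal bookkeeping in the PDE step: one must obtain the \emph{negative} sign on $\frac{1}{2}g^2\Delta_\vx p_t^{\mathrm{SDE}}$ (opposite to the usual forward Fokker--Planck equation), since $p_t^{\mathrm{SDE}}$ is propagated from $t=T$ towards $t=0$; a sign slip there would destroy the cancellation. Secondary care is needed to make the formal steps rigorous — differentiating the marginal under the integral, validity of the Fokker--Planck equation, and applicability of Itô's lemma to $h=\log p_t^{\mathrm{SDE}}$ — all of which go through under the regularity and growth hypotheses collected in \autoref{app:assumptions} (linear growth and Lipschitzness of $f$, $\vs$, and $\nabla\log p$, the tail bounds on $p_t^{\mathrm{SDE}}$, and Novikov's condition), exactly as in the proofs of \autoref{th:fwd-sde-logp} and \autoref{th:rev-sde-likelihood}.
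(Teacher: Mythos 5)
Your proof is correct, and it reaches the paper's drift identity by a genuinely different route at the key step, namely how one obtains $\partial_t\log p_t^{\mathrm{SDE}}$. The paper invokes a result of \citet{lu2022maximum} identifying a \emph{forward-time} SDE with drift $f(t,\vx)+g^2(t)\big(\nabla_\vx\log p_t^{\mathrm{SDE}}(\vx)-\vs(t,\vx)\big)$ and diffusion $g(t)$ whose marginal at time $t$ equals $p_t^{\mathrm{SDE}}$, and then reads off $\partial_t\log p_t^{\mathrm{SDE}}$ from the ordinary forward Fokker--Planck equation (\eqref{eq:fpe}) for that process. You instead derive the very same identity directly from the approximate reverse SDE (\eqref{eq:approx-rev-non-linear-sde}) by reparametrising $\tau=T-t$, applying the forward Fokker--Planck equation in $\tau$, and substituting back, which produces the $-\tfrac{1}{2}g^2(t)\Delta_\vx p_t^{\mathrm{SDE}}$ term with reversed sign that you correctly flag as the one place a slip would be fatal. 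The two routes are equivalent (one easily checks they yield the same $\partial_t$-identity), but yours is self-contained, avoiding the external reference to the ``corresponding forward SDE,'' and it recycles exactly the time-reversal bookkeeping the paper already uses in the proof of \autoref{th:rev-sde-likelihood}; the paper's route instead buys a cleaner intermediate object (a genuine forward-time diffusion, so no sign worries) at the cost of a citation. Everything downstream --- the It\^o drift $\partial_t h+\nabla_\vx h^{T}f+\tfrac{1}{2}g^2\Delta_\vx h$, the cancellation of the two $\tfrac{1}{2}g^2\Delta_\vx\log p_t^{\mathrm{SDE}}$ and two $f^{T}\nabla_\vx\log p_t^{\mathrm{SDE}}$ terms, and the completion of the square --- is correct and identical to the paper's computation. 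One small observation in your favour: as printed, the theorem's diffusion term $\nabla_\vx\log p_t^{\mathrm{SDE}}(\vx)^{T}d\rW_t$ drops a factor of $g(t)$; your $g(t)\nabla_\vx\log p_t^{\mathrm{SDE}}(\vx_t)^{T}d\rW_t$ is the form demanded by It\^o's lemma and is consistent with \autoref{th:fwd-sde-logp} and \autoref{th:rev-sde-likelihood}.
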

\begin{proof}
    \citet{lu2022maximum} showed that the corresponding forward SDE to \autoref{eq:approx-rev-non-linear-sde} is given by
    \begin{equation}\label{eq:rev-approx-rev-sde}
        d\vx_t = \left(f(t, \vx_t) + g^2(t) \left( \nabla_\vx \log p_t^{\mathrm{SDE}}(\vx_t) - \vs(t, \vx_t) \right) \right)dt + g(t) d\rW_t.
    \end{equation}
    We will apply Itô's lemma (\autoref{app:ito-lemma}) to $h(t, \vx) = \log p_t^{\mathrm{SDE}}(\vx)$ for $\vx$ following \autoref{eq:non-linear-sde} (not \autoref{eq:rev-approx-rev-sde}, which is intractable due to presence of $ \nabla_\vx \log p_t^{\mathrm{SDE}}$).
    $\frac{\partial}{\partial t}h(t, \vx)$ can be evaluated using \autoref{eq:fpe}
    \begin{equation}\label{eq:partial-h-approx-fwd}
    \begin{split}
        &\frac{\partial}{\partial t}h(t, \vx) = -\mathrm{div}_\vx f(t, \vx) - g^2(t) \left( \Delta_\vx \log p_t^{\mathrm{SDE}}(\vx) - \div_\vx \vs(t, \vx) \right) + \frac{1}{2}g^2(t)\Delta_\vx \log p_t^{\mathrm{SDE}}(\vx) \\
        & - \nabla_\vx \log p_t^{\mathrm{SDE}}(\vx)^T \left( f(t, \vx) + g^2(t) \left( \nabla_\vx \log p_t^{\mathrm{SDE}}(\vx) - \vs(t, \vx) \right) - \frac{1}{2}g^2(t) \nabla_\vx \log p_t^{\mathrm{SDE}}(\vx)\right) \\
        & =  -\mathrm{div}_\vx f(t, \vx) - \frac{1}{2}g^2(t) \Delta_\vx \log p_t^{\mathrm{SDE}}(\vx) + g^2(t)\div_\vx \vs(t, \vx) \\
        & -  \nabla_\vx \log p_t^{\mathrm{SDE}}(\vx)^T \left( f(t, \vx_t) - g^2(t) \vs(t, \vx) + \frac{1}{2}g^2(t) \nabla_\vx \log p_t^{\mathrm{SDE}}(\vx)\right)
    \end{split}
    \end{equation}
    Therefore, we have
    \begin{equation}
    \begin{split}
        &\frac{\partial}{\partial t}h(t, \vx) + \nabla_\vx h(t, \vx)f(t, \vx) + \frac{1}{2}g^2(t)\Delta_\vx h(t, \vx)  \\
        & =  -\mathrm{div}_\vx f(t, \vx) + g^2(t)\div_\vx \vs(t, \vx) \\
        & -  \nabla_\vx \log p_t^{\mathrm{SDE}}(\vx)^T \left( - g^2(t) \vs(t, \vx) + \frac{1}{2}g^2(t) \nabla_\vx \log p_t^{\mathrm{SDE}}(\vx)\right) \\
        & = -\mathrm{div}_\vx\left( f(t, \vx) - g^2(t)\vs(t, \vx)\right) + \frac{1}{2}g^2(t)\|\vs(t, \vx)\|^2 - \frac{1}{2}g^2(t) \| \nabla_\vx \log p_t^{\mathrm{SDE}}(\vx) - \vs(t, \vx) \|^2.
    \end{split}
    \end{equation}
    Thus for $\vx$ following \autoref{eq:sde}, we have
    \begin{equation}
        d \log p_t^{\mathrm{SDE}}(\vx_t) = G(t, \vx_t)dt + \nabla_\vx \log p_t^{\mathrm{SDE}}(\vx_t)^T d\rW_t,
    \end{equation}
    where
    \begin{equation}
        G(t, \vx) = -\mathrm{div}_\vx \left(f(t, \vx) - g^2(t) \vs(t, \vx)\right) + \frac{1}{2}g^2(t)\|\vs(t, \vx)\|^2  - \frac{1}{2}g^2(t) \| \nabla_\vx \log p_t^{\mathrm{SDE}}(\vx) - \vs(t, \vx) \|^2
    \end{equation}
\end{proof}
Interestingly, \autoref{th:approx-aug-fwd-sde} can be used to derive a lower bound for the likelihood of an individual data point $\vx_0$ \citep{kingma2021variational, song2021maximum}.
\begin{prop}[ELBO for non-linear SDE]\label{prop:non-linear-elbo}
    For any $\vx_0 \in \mathbb{R}^D$ and $p_t^{\mathrm{SDE}}$ marginal distribution of a process defined by some $p_T^{\mathrm{SDE}}$ and \eqref{eq:approx-rev-non-linear-sde} for $t<T$, we have
\begin{equation}
        \log p^{\mathrm{SDE}}_0(\vx_0) = \underbrace{\frac{T}{2}\mathbb{E}_{t , \vx_t } g^2(t) \|\vs(t, \vx_t) - \nabla_\vx \log p^{\mathrm{SDE}}_t(\vx_t)\|^2}_{\geq 0}  + \mathrm{ELBO}(\vx_0),
\end{equation}
where $t \sim \mathcal{U}(0, T)$, $\vx_t \sim p_{t|0}(\vx_t|\vx_0)$ and
\begin{equation}
    \mathrm{ELBO}(\vx_0) =  \mathbb{E}_{\vx_T \sim p_{T|0}(\vx_T|\vx_0)}[\log p^{\mathrm{SDE}}_T(\vx_T)] +T\mathbb{E}_{t, \vx_t } L(t, \vx_t)
\end{equation}
and $L(t, \vx) = -\frac{1}{2}g^2(t) \|\vs(t, \vx)\|^2 + L_i(t, \vx)$, where one may choose any of the following $L_1, L_2, L_3$ (one could also have different definitions depending on $t$):
\begin{align}
    L_1(t, \vx) &= \div_\vx \left(f(t, \vx) - g^2(t) \vs(t, \vx)\right) \\
    L_2(t, \vx) &= - \left(f(t, \vx_t) - g^2(t) \vs(t, \vx_t)\right)^T \nabla_{\vx_t} \log p_{t|0}(\vx_t|\vx_0) \\
    L_3(t, \vx) &= \div_\vx (f(t, \vx)) + g^2(t) \vs(t, \vx_t)^T \nabla_{\vx_t} \log p_{t|0}(\vx_t|\vx_0)
\end{align} 
\end{prop}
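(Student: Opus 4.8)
The plan is to integrate the stochastic dynamics of $\log p_t^{\mathrm{SDE}}$ established in \autoref{th:approx-aug-fwd-sde} along the forward process \eqref{eq:non-linear-sde} started at the deterministic point $\vx_0$, take expectations so the martingale term drops out, and rearrange. By \autoref{th:approx-aug-fwd-sde}, for $\vx$ solving \eqref{eq:non-linear-sde} with initial condition $\vx_0$ we have $d\log p_t^{\mathrm{SDE}}(\vx_t) = G(t,\vx_t)\,dt + \nabla_\vx \log p_t^{\mathrm{SDE}}(\vx_t)^T d\rW_t$, so integrating on $[0,T]$ gives
\begin{equation*}
\log p_T^{\mathrm{SDE}}(\vx_T) - \log p_0^{\mathrm{SDE}}(\vx_0) = \int_0^T G(t,\vx_t)\,dt + \int_0^T \nabla_\vx \log p_t^{\mathrm{SDE}}(\vx_t)^T d\rW_t.
\end{equation*}
The square-integrability assumption on the forward path measure $\nu(\vx_0)$ (Assumption~12 in \autoref{app:assumptions}) makes the Itô integral a true martingale, hence of zero expectation; taking $\mathbb{E}$ over $\nu(\vx_0)$ and rewriting $\int_0^T(\cdot)\,dt = T\,\mathbb{E}_{t\sim\mathcal{U}(0,T)}(\cdot)$ yields
\begin{equation*}
\log p_0^{\mathrm{SDE}}(\vx_0) = \mathbb{E}_{\vx_T\sim p_{T|0}(\vx_T|\vx_0)}\big[\log p_T^{\mathrm{SDE}}(\vx_T)\big] - T\,\mathbb{E}_{t,\vx_t}\,G(t,\vx_t),
\end{equation*}
with $\vx_t\sim p_{t|0}(\vx_t|\vx_0)$.

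Substituting the explicit form of $G$ from \autoref{th:approx-aug-fwd-sde}, the contribution $-\div_\vx\big(f(t,\vx)-g^2(t)\vs(t,\vx)\big) + \tfrac12 g^2(t)\|\vs(t,\vx)\|^2$ inside $-G$ is, by definition, exactly $L(t,\vx) = -\tfrac12 g^2(t)\|\vs(t,\vx)\|^2 + L_1(t,\vx)$, while the remaining term $+\tfrac12 g^2(t)\|\nabla_\vx\log p_t^{\mathrm{SDE}}(\vx)-\vs(t,\vx)\|^2$ becomes the advertised gap $\frac{T}{2}\mathbb{E}_{t,\vx_t}g^2(t)\|\vs(t,\vx_t)-\nabla_\vx\log p_t^{\mathrm{SDE}}(\vx_t)\|^2$. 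Non-negativity of this term is immediate, being an expectation of $g^2(t)\geq 0$ times a squared norm. This establishes the identity for the choice $L = L_1$.

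To get the equivalent forms $L_2$ and $L_3$, it suffices to show the expectations agree, i.e. $\mathbb{E}_{\vx_t}L_1(t,\vx_t)=\mathbb{E}_{\vx_t}L_2(t,\vx_t)=\mathbb{E}_{\vx_t}L_3(t,\vx_t)$ under $\vx_t\sim p_{t|0}(\cdot|\vx_0)$. This follows from the integration-by-parts identity $\mathbb{E}_{\vx\sim q}[\div_\vx h(\vx)] = -\mathbb{E}_{\vx\sim q}[h(\vx)^T\nabla_\vx\log q(\vx)]$, valid whenever $h$ has at most linear growth and $q$ decays fast enough for the boundary term to vanish (Assumptions~2, 8 and 14 in \autoref{app:assumptions}), applied with $q = p_{t|0}(\cdot|\vx_0)$: taking $h = f - g^2\vs$ converts $L_1$ into $L_2$, while applying the identity only to the $-g^2\vs$ summand and leaving $\div_\vx f$ untouched converts $L_1$ into $L_3$.

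I expect the only real obstacle to be analytic rather than algebraic: rigorously justifying that the Itô integral has vanishing expectation (a genuine, not merely local, martingale) and that the integration-by-parts boundary terms vanish. Both reduce to the linear-growth and tail-decay hypotheses collected in \autoref{app:assumptions} (notably Assumptions~6--12 and 14), which are precisely the standard conditions used by \citet{song2021maximum, lu2022maximum}; once invoked, the rest is the short rearrangement above.
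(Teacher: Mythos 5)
Your proof is correct and follows essentially the same route as the paper: integrate the augmented forward SDE dynamics of \autoref{th:approx-aug-fwd-sde} on $[0,T]$, take expectation over the forward path measure so the It\^{o} integral vanishes (Assumption~12), apply Fubini to rewrite the time integral as $T\,\mathbb{E}_{t\sim\mathcal{U}(0,T)}$, identify $L_1$ and the non-negative gap inside $G$, and use integration by parts against $p_{t|0}(\cdot|\vx_0)$ to obtain $L_2$ and $L_3$. (The only minor slip is the missing $g(t)$ factor in front of $\nabla_\vx\log p_t^{\mathrm{SDE}}(\vx_t)^T d\rW_t$, inherited from the same omission in the displayed statement of \autoref{th:approx-aug-fwd-sde}; this does not affect the argument since the martingale property and Assumption~12 are insensitive to that prefactor.)
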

\begin{proof}

    Using \autoref{eq:approx-fwd-logp-sde}:
\begin{equation}\label{eq:approx-logp-exact}
    \log p^{\mathrm{SDE}}_0(\vx_0) = \log p^{\mathrm{SDE}}_T(\vx_T) - \int_0^T G(t, \vx_t)dt - \int_0^T g(t)\nabla_\vx \log p^{\mathrm{SDE}}_t(\vx_t)^Td\rW_t,
\end{equation}
for $\vx$ being a random trajectory following \autoref{eq:non-linear-sde} starting at $\vx_0$.
Using the definition of $G(t, \vx)$:
\begin{equation}\label{eq:logp-sde-exact}
\begin{split}
     \log p^{\mathrm{SDE}}_0(\vx_0) =& \log p^{\mathrm{SDE}}_T(\vx_T) \\
     & - \int_0^T\left( -\mathrm{div}_\vx \left(f(t, \vx_t) - g^2(t) \vs(t, \vx_t)\right) + \frac{1}{2}g^2(t)\|\vs(t, \vx_t)\|^2 \right)dt \\
     & + \frac{1}{2}\int_0^Tg^2(t)\|\vs(t, \vx_t) -\nabla_\vx \log p^{\mathrm{SDE}}_t(\vx_t) \|^2 dt \\
     & - \int_0^T g(t)\nabla_\vx \log p^{\mathrm{SDE}}_t(\vx_t)^Td\rW_t.
\end{split}
\end{equation}
We can take the expectation of both sides of \autoref{eq:logp-sde-exact} w.r.t. $\vx  \sim \nu(\vx  | \vx_0)$, where $\nu(\vx  | \vx_0)$ is a path measure of $\vx$ starting at $\vx_0$.
Note that the LHS of \autoref{eq:logp-sde-exact} is constant w.r.t $\nu(\vx  | \vx_0)$ and thus it is equal to its expectation.
\begin{equation}\label{eq:term-breakdown}
\begin{split}
    \underbrace{\mathbb{E} [\log p^{\mathrm{SDE}}_0(\vx_0)]}_{=\log p^{\mathrm{SDE}}_0(\vx_0)} =& \underbrace{\mathbb{E} [\log p^{\mathrm{SDE}}_T(\vx_T)]}_{\text{``first term"}} \\
    & - \underbrace{\mathbb{E}\left[ \int_0^T \left( -\mathrm{div}_\vx \left(f(t, \vx_t) - g^2(t) \vs(t, \vx_t)\right) + \frac{1}{2}g^2(t)\|\vs(t, \vx_t)\|^2 \right)dt \right]}_{\text{``second term"}} \\
    & + \underbrace{\mathbb{E}\left[ \frac{1}{2}\int_0^T g^2(t)\|\vs(t, \vx_t) - \nabla_\vx \log p^{\mathrm{SDE}}_t(\vx_t)\|^2dt \right]}_{\text{``third term"}}\\
    & - \underbrace{\mathbb{E}\left[ \int_0^T g(t)\nabla_\vx \log p_t(\vx_t)^Td\rW_t \right]}_{\text{``fourth term"}}.
\end{split}
\end{equation}
\paragraph{First term.} Since the expectation is taken w.r.t $\nu(\vx  | \vx_0)$, we have
\begin{equation}
    \mathbb{E}_{\vx \sim \nu(\vx  | \vx_0)}[\log p^{\mathrm{SDE}}_T(\vx_T)] = \mathbb{E}_{\vx_T \sim p_{T|0}(\vx_T|\vx_0)}[\log p^{\mathrm{SDE}}_T(\vx_T)],
\end{equation}
where $p_{T|0}$ is the forward transition probability of \eqref{eq:non-linear-sde}.
\paragraph{Second term.}
Using Fubini's theorem we have
\begin{equation}
\begin{split}
    \mathbb{E}_{\vx \sim \nu(\vx  | \vx_0)}\left[ \int_0^T F(t, \vx_t)dt \right] &= \int_0^T \left( \mathbb{E}_{\vx_t \sim \nu(\vx_t  | \vx_0)} F(t, \vx_t) \right)dt \\
    & = \int_0^T \left( \mathbb{E}_{\vx_t \sim p_{t|0}(\vx_t|\vx_0)} F(t, \vx_t) \right)dt.
\end{split}
\end{equation}
After substituting for $F$, we get
\begin{equation}
    \mathbb{E}_{\vx_t \sim p_{t|0}(\vx_t|\vx_0)} F(t, \vx_t)  = \mathbb{E}_{\vx_t \sim p_{t|0}(\vx_t|\vx_0)} \left( -\mathrm{div}_\vx \left(f(t, \vx_t) - g^2(t) \vs(t, \vx_t)\right) + \frac{1}{2}g^2(t)\|\vs(t, \vx_t)\|^2 \right).
\end{equation}
Note that for any $t$ the divergence term under the expectation can equivalently be written in one of three ways (integration by parts; assumptions 8 and 14 in \autoref{app:assumptions}):
\begin{equation}
\begin{split}
    &-\mathbb{E}_{\vx_t \sim p_{t|0}(\vx_t|\vx_0)} \div_\vx \left(f(t, \vx_t) - g^2(t) \vs(t, \vx_t)\right) \\
    & \stackrel{(i)}{=} \mathbb{E}_{\vx_t \sim p_{t|0}(\vx_t|\vx_0)} \left[\left(f(t, \vx_t) - g^2(t) \vs(t, \vx_t)\right)^T \nabla_{\vx_t} \log p_{t|0}(\vx_t|\vx_0)\right] \\
    & \stackrel{(ii)}{=}  \mathbb{E}_{\vx_t \sim p_{t|0}(\vx_t|\vx_0)} \left[-\div_\vx f(t, \vx_t) - g^2(t) \vs(t, \vx_t)^T \nabla_{\vx_t} \log p_{t|0}(\vx_t|\vx_0)\right],
\end{split}
\end{equation}
which holds due to applying integration by parts either
\begin{itemize}
    \item $(i):$ to $f(t, \vx_t) - g^2(t) \vs(t, \vx_t)$ and $p_{t|0}(\vx_t|\vx_0)$, or
    \item $(ii):$ to $\vs(t, \vx_t)$ and $p_{t|0}(\vx_t|\vx_0)$.
\end{itemize}
\paragraph{Third term.}
\begin{equation}
\begin{split}
    &\mathbb{E}\left[ \frac{1}{2}\int_0^T g^2(t)\|\vs(t, \vx_t) - \nabla_\vx \log p^{\mathrm{SDE}}_t(\vx_t)\|^2dt \right] \\
    & =\frac{1}{2}\int_0^T g^2(t)\left(\mathbb{E}_{\vx_t \sim p_{t|0}(\vx_t|\vx_0)} \|\vs(t, \vx_t) - \nabla_\vx \log p^{\mathrm{SDE}}_t(\vx_t)\|^2\right)dt
\end{split}
\end{equation}
\paragraph{Fourth term.}
Using Assumption 12 (\autoref{app:assumptions})
and the fact that $g(t)\nabla_\vx \log p^{\mathrm{SDE}}_t(\vx_t)$ is $\rW$ adapted, we have 
\begin{equation}
    \mathbb{E}\left[ \int_0^T g(t)\nabla_\vx \log p^{\mathrm{SDE}}_t(\vx_t)^Td\rW_t \right] = 0.
\end{equation}
Combining all four terms yields the claim.
\end{proof}
\begin{corollary}[ELBO for Linear SDE]\label{cor:elbo}
    For an SDE with linear drift (\eqref{eq:sde}), for any $\vx_0 \in \mathbb{R}^D$, assuming $p^{\mathrm{SDE}}_T = \mathcal{N}(\mathbf{0}, \sigma_T^2\mI_D)$ we have
\begin{equation}
\log p^{\mathrm{SDE}}_0(\vx_0) = \underbrace{\frac{T}{2}\mathbb{E}_{t, \veps}g^2(t) \|\vs(t, \vx_t) - \nabla_\vx \log p^{\mathrm{SDE}}_t(\vx_t)\|^2}_{\geq 0} + \mathrm{ELBO}(\vx_0)
\end{equation}
    where $t \sim \mathcal{U}(0, T)$, $\veps \sim \mathcal{N}(\mathbf{0}, \mI_D)$, $\vx_t = \alpha_t\vx_0 + \sigma_t \veps$ and 
    \begin{equation}
        \mathrm{ELBO}(\vx_0) = C - \frac{e^{\lambda_{\text{min}}}}{2} \| \vx_0 \|^2 - \frac{T}{2} \mathbb{E}_{t, \veps} \left( -\frac{d\lambda_t}{dt} \right) \| \sigma_t\vs(t, \alpha_t\vx_0 + \sigma_t\veps) + \veps \|^2
    \end{equation}
    and $C= -\frac{D}{2} \left(1 + \log (2\pi \sigma_0^2)\right)$.
\end{corollary}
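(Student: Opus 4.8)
The plan is to obtain the statement as a direct specialization of Proposition~\ref{prop:non-linear-elbo} (ELBO for non-linear SDE) to the linear-drift SDE \eqref{eq:sde}, where $f(t,\vx)=f(t)\vx$ with $f(t)=\frac{d\log\alpha_t}{dt}$, $g^2(t)=-\frac{d\lambda_t}{dt}\sigma_t^2$, the forward kernel is Gaussian, $p_{t|0}(\vx_t|\vx_0)=\mathcal{N}(\vx_t;\alpha_t\vx_0,\sigma_t^2\mI_D)$, and $p_T^{\mathrm{SDE}}=\mathcal{N}(\mathbf 0,\sigma_T^2\mI_D)$. The weighted denoising score matching gap $\frac{T}{2}\E_{t,\vx_t}g^2(t)\|\vs(t,\vx_t)-\nabla_\vx\log p_t^{\mathrm{SDE}}(\vx_t)\|^2$ already has the claimed form once we write $\vx_t=\alpha_t\vx_0+\sigma_t\veps$ with $\veps\sim\mathcal{N}(\mathbf 0,\mI_D)$ (since $p_{t|0}(\cdot|\vx_0)$ is exactly that Gaussian), so all the work is in rewriting $\E_{\vx_T\sim p_{T|0}}[\log p_T^{\mathrm{SDE}}(\vx_T)]+T\,\E_{t,\vx_t}L(t,\vx_t)$ into $C-\tfrac{e^{\lambda_{\min}}}{2}\|\vx_0\|^2-\tfrac{T}{2}\E_{t,\veps}(-\tfrac{d\lambda_t}{dt})\|\sigma_t\vs(t,\vx_t)+\veps\|^2$.

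First I would choose the representative $L=L_3$, i.e. $L(t,\vx_t)=-\tfrac12 g^2(t)\|\vs(t,\vx_t)\|^2+\div_\vx f(t,\vx_t)+g^2(t)\vs(t,\vx_t)^T\nabla_{\vx_t}\log p_{t|0}(\vx_t|\vx_0)$. For the linear drift $\div_\vx f(t,\vx_t)=f(t)D$, and with $\vx_t=\alpha_t\vx_0+\sigma_t\veps$ we have $\nabla_{\vx_t}\log p_{t|0}(\vx_t|\vx_0)=-(\vx_t-\alpha_t\vx_0)/\sigma_t^2=-\veps/\sigma_t$. Using $-\tfrac{d\lambda_t}{dt}=g^2(t)/\sigma_t^2$ and expanding $(-\tfrac{d\lambda_t}{dt})\|\sigma_t\vs+\veps\|^2=g^2(t)\|\vs\|^2+\tfrac{2g^2(t)}{\sigma_t}\vs^T\veps+\tfrac{g^2(t)}{\sigma_t^2}\|\veps\|^2$, the score-dependent part $-\tfrac12 g^2\|\vs\|^2-\tfrac{g^2}{\sigma_t}\vs^T\veps$ of $L$ equals $-\tfrac12(-\tfrac{d\lambda_t}{dt})\|\sigma_t\vs+\veps\|^2+\tfrac12\tfrac{g^2(t)}{\sigma_t^2}\|\veps\|^2$. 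Taking $\E_{t\sim\mathcal U(0,T),\veps}$, multiplying by $T$, and using $\E_\veps\|\veps\|^2=D$ gives $T\E_{t,\veps}L=-\tfrac{T}{2}\E_{t,\veps}(-\tfrac{d\lambda_t}{dt})\|\sigma_t\vs+\veps\|^2+\tfrac D2\int_0^T\tfrac{g^2(t)}{\sigma_t^2}dt+D\int_0^T f(t)\,dt$.

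Next I would evaluate the leftover deterministic pieces. Since $f(t)=\tfrac{d\log\alpha_t}{dt}$, $\int_0^T f(t)\,dt=\log\alpha_T-\log\alpha_0$; since $g^2(t)/\sigma_t^2=-\tfrac{d\lambda_t}{dt}$, $\int_0^T\tfrac{g^2(t)}{\sigma_t^2}dt=\lambda_0-\lambda_T$. For the prior term, $\log p_T^{\mathrm{SDE}}(\vx_T)=-\tfrac D2\log(2\pi\sigma_T^2)-\tfrac{\|\vx_T\|^2}{2\sigma_T^2}$ and $\E_\veps\|\alpha_T\vx_0+\sigma_T\veps\|^2=\alpha_T^2\|\vx_0\|^2+D\sigma_T^2$, so $\E_{\vx_T}[\log p_T^{\mathrm{SDE}}(\vx_T)]=-\tfrac D2(1+\log(2\pi\sigma_T^2))-\tfrac{\alpha_T^2}{2\sigma_T^2}\|\vx_0\|^2$. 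Because $\mathrm{SNR}$ is decreasing, $\lambda_T=\lambda_{\min}$ (so $\alpha_T^2/\sigma_T^2=e^{\lambda_{\min}}$) and $\lambda_0=\lambda_{\max}$, producing the $-\tfrac{e^{\lambda_{\min}}}{2}\|\vx_0\|^2$ term exactly. Finally I would collect the $\vx_0$-independent constants $-\tfrac D2(1+\log(2\pi\sigma_T^2))+\tfrac D2(\lambda_{\max}-\lambda_{\min})+D\log(\alpha_T/\alpha_0)$ and, substituting $\lambda_t=\log(\alpha_t^2/\sigma_t^2)$, verify that the $\alpha_T,\sigma_T$ dependence collapses to $-\tfrac D2(1+\log(2\pi\sigma_0^2))=C$.

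The main obstacle is not any single step but the careful accounting of constants: the $\|\veps\|^2$ moment produced by the denoising term, the divergence-of-drift integral, and the normalization of the time-$T$ prior must cancel in just the right way, via the identities $\int_0^T(-\tfrac{d\lambda_t}{dt})dt=\lambda_0-\lambda_T$, $\int_0^T\tfrac{d\log\alpha_t}{dt}dt=\log(\alpha_T/\alpha_0)$ and $e^{\lambda_t}=\alpha_t^2/\sigma_t^2$, so that the $\sigma_T$ in the prior turns into the $\sigma_0$ appearing in $C$; I would double-check the signs here before anything else, and also confirm that using $L_1$ or $L_2$ in place of $L_3$ leads to the same answer (an extra term $\E_\veps[f(t)\|\veps\|^2]=f(t)D$ appears and must be reconciled). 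The remaining computations are routine Gaussian moments.
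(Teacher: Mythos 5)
Your proposal is correct and follows essentially the same route as the paper: specialize Proposition~\ref{prop:non-linear-elbo} to the linear-drift case, pick $L_3$, substitute $\nabla_{\vx_t}\log p_{t|0}(\vx_t|\vx_0)=-\veps/\sigma_t$, compute the Gaussian prior expectation to get $-\tfrac D2(1+\log(2\pi\sigma_T^2))-\tfrac{e^{\lambda_{\min}}}{2}\|\vx_0\|^2$, and show that the leftover constants collapse to $C$. The only cosmetic difference is that the paper completes the square on $L$ first, writes $f(t)D+\tfrac12 g^2(t)\|\nabla\log p_{t|0}\|^2$ as $D(f(t)-\tfrac12\tfrac{d\lambda_t}{dt})=D\tfrac{d\log\sigma_t}{dt}$ in one stroke, and integrates that directly, whereas you expand the claimed $(-\tfrac{d\lambda_t}{dt})\|\sigma_t\vs+\veps\|^2$ and bookkeep $\int f$ and $\int(-\dot\lambda)$ as separate integrals before recombining; both are the same algebra in a different order and both reach the same $C$.
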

\begin{proof}
In the linear SDE case (\eqref{eq:sde}) we have $p_{t|0}(\vx_t|\vx_0) = \mathcal{N}(\vx_t|\alpha_t\vx_0, \sigma_t^2\mI_D)$. Using \autoref{prop:non-linear-elbo}, we have
\begin{equation}
    \mathrm{ELBO}(\vx_0) =  \mathbb{E}_{\vx_T \sim p_{T|0}(\vx_T|\vx_0)}[\log p^{\mathrm{SDE}}_T(\vx_T)] +T\mathbb{E}_{t, \vx_t} L(t, \vx_t)
\end{equation}
and we choose 
\begin{equation}
\begin{split}
    L(t, \vx) =& -\frac{1}{2}g^2(t) \|\vs(t, \vx)\|^2 + L_3(t, \vx) \\
    = & -\frac{1}{2}g^2(t) \|\vs(t, \vx)\|^2 + \div_\vx (f(t, \vx)) + g^2(t) \vs(t, \vx)^T \nabla_{\vx} \log p_{t|0}(\vx|\vx_0) \\
    = & f(t)D - \frac{1}{2}g^2(t) \| \vs(t, \vx) - \nabla_{\vx} \log p_{t|0}(\vx|\vx_0) \|^2 + \frac{1}{2}g^2(t) \| \nabla_{\vx} \log p_{t|0}(\vx|\vx_0) \|^2
\end{split}
\end{equation}
Since $\nabla_{\vx} \log p_{t|0}(\vx|\vx_0)=\frac{\alpha_t \vx_0 - \vx}{\sigma_t^2}$ and $\vx_t \sim p_{t|0}(\vx_t|\vx_0)$ is equivalent to $\vx_t = \alpha_t \vx_0 + \sigma_t \veps$ for $\veps \sim \mathcal{N}(\mathbf{0}, \mI_D)$, we have $\nabla_{\vx} \log p_{t|0}(\vx_t|\vx_0) = \frac{-\veps}{\sigma_t}$ and 
\begin{equation}
\begin{split}
    \mathrm{ELBO}(\vx_0) =& -\frac{T}{2}\E_{t, \veps} g^2(t) \left\| \vs(t, \alpha_t\vx_0 + \sigma_t \veps) + \frac{\veps}{\sigma_t}\right\|^2 \\
    & +  \mathbb{E}_{\vx_T \sim p_{T|0}(\vx_T|\vx_0)}[\log p^{\mathrm{SDE}}_T(\vx_T)] + DT\mathbb{E}_{t, \vx_t}f(t) + \frac{T}{2}\mathbb{E}_{t, \veps}g^2(t) \left\| \frac{\veps}{\sigma_t} \right\|^2 \\
    =& -\frac{T}{2}\E_{t, \veps}  \left( -\frac{d\lambda_t}{dt} \right) \left\| \sigma_t\vs(t, \alpha_t\vx_0 + \sigma_t \veps) + \veps\right\|^2 \\
    & +  \underbrace{\mathbb{E}_{\veps}[\log p^{\mathrm{SDE}}_T(\alpha_T\vx_0 + \sigma_T\veps)]}_{\text{``first term"}} + \underbrace{DT \E_t \left( f(t) - \frac{1}{2} \frac{d\lambda_t}{dt} \right)}_{\text{``second term"}}
\end{split}
\end{equation}
\paragraph{First term}
\begin{equation}
\begin{split}
    \mathbb{E}_{\veps}[\log p^{\mathrm{SDE}}_T(\alpha_T\vx_0 + \sigma_T\veps)] =& -\frac{D}{2} \log (2\pi \sigma_T^2) - \frac{1}{2\sigma_T^2}\mathbb{E}_{\veps}\| \alpha_T\vx_0 + \sigma_T\veps\|^2 \\
    =& -\frac{D}{2} \log (2\pi \sigma_T^2) -\frac{1}{2\sigma_T^2} \left( \alpha_T^2 \| \vx_0\|^2 + \sigma_T^2 D \right) \\
    = & -\frac{D}{2} \left( 1 + \log (2\pi \sigma_T^2) \right) - \frac{e^{\lambda_{\mathrm{min}}}}{2}\|\vx_0\|^2
\end{split}
\end{equation}
\paragraph{Second term}
\begin{equation}
    DT \E_t \left( f(t) - \frac{1}{2} \frac{d\lambda_t}{dt} \right) = D \int_0^T \frac{d \log \sigma_t}{dt}dt = D \left(\log \sigma_T - \log \sigma_0 \right)
\end{equation}
Combining all the terms yields the claim.
\end{proof}
We can now use \autoref{th:approx-aug-fwd-sde} to prove \autoref{th:tractable-approx-fwd-aug-sde}.
\approxfwdaugsde*
Furthermore, $\rY_{\vx_0}$ can be written as $\rY_{\vx_0}=\rY_1 + \rY_2$, where
\begin{equation*}
    \rY_1 = \frac{1}{2}\int_0^Tg^2(t) \| \nabla_\vx \log p_t^{\mathrm{SDE}}(\vx_t) - \vs(t, \vx_t) \|^2dt
\end{equation*}
and
\begin{equation*}
    \mathbb{E}\rY_2 = 0; \mathrm{Var}(\rY_2) = \int_0^T g^2(t) \mathbb{E}_{\vx_t \sim p(\vx_t|\vx_0)} \| \vs(t, \vx_t) - \nabla_\vs \log p_t^{\mathrm{SDE}}(\vx_t) \|^2dt.
\end{equation*}
For the non-linear drift (\eqref{eq:non-linear-sde}), we have (the difference highlighted in \textcolor{blue}{blue})
    \begin{equation*}
        d\begin{bmatrix}
        \vx_t \\ \omega_t
    \end{bmatrix} =  \begin{bmatrix}
        \textcolor{blue}{f(t, \vx_t)} \\ -\textcolor{blue}{\div_\vx f(t, \vx_t)} + g^2(t)\Big( \frac{1}{2} \|\colornet{\vs(t, \vx)}\|^2 + \colordelta{\div_\vx} \colornet{\vs(t, \vx)}\Big)
    \end{bmatrix}dt + g(t)\begin{bmatrix}
        \mI_D \\ \colornet{\vs(t, \vx_t)}^T
    \end{bmatrix}d\rW_t.
    \end{equation*}
\begin{proof}
    Using \autoref{th:approx-aug-fwd-sde} we have
    \begin{equation}
    \begin{split}
        \log p^{\mathrm{SDE}}_0(\vx_0) &= \log p^{\mathrm{SDE}}_T(\vx_T) - \int_0^T G(t, \vx_t)dt - \int_0^T g(t)\nabla_\vx \log p^{\mathrm{SDE}}_t(\vx_t)^Td\rW_t, \\
        & = \log p^{\mathrm{SDE}}_T(\vx_T) - \int_0^T d\omega_t + \rY_1 + \rY_2,
    \end{split}
    \end{equation}
    where
    \begin{equation}
        \rY_1 = \frac{1}{2}\int_0^Tg^2(t) \| \nabla_\vx \log p_t^{\mathrm{SDE}}(\vx_t) - \vs(t, \vx_t) \|^2dt
    \end{equation}
    and
    \begin{equation}
        \rY_2 = \int_0^T g(t) \left(\vs(t, \vx_t) - \nabla_\vx \log p^{\mathrm{SDE}}_t(\vx_t) \right)d\rW_t.
    \end{equation}
    Since $\int_0^T d\omega_t = \omega_T - \omega_0=\omega_T$, we have
    \begin{equation}
        \log p^{\mathrm{SDE}}_0(\vx_0) = \log p^{\mathrm{SDE}}_T(\vx_T) - \omega_T + \rY
    \end{equation}
    for $\rY = \rY_1 + \rY_2$.
\end{proof}
Similarly to \autoref{th:approx-aug-fwd-sde} we can derive the dynamics of $\log p_t^{\mathrm{SDE}}(\vx_t)$ under the approximate reverse SDE (\autoref{eq:approx-rev-non-linear-sde}).
\begin{theorem}[Approximate augmented reverse SDE]\label{th:approx-aug-rev-sde}
    Let $\vx$ be a random process following \autoref{eq:approx-rev-non-linear-sde}, then
    \begin{equation}
    d \begin{bmatrix}
        \vx_t \\  \log p_t^{\mathrm{SDE}}(\vx_t)
    \end{bmatrix} = \begin{bmatrix}
        f(t, \vx_t) - g^2(t)\vs(t, \vx_t) \\  \Tilde{F}(t, \vx_t)
    \end{bmatrix}dt + g(t) \begin{bmatrix}
        \mI_d \\ \nabla_\vx \log p_t^{\mathrm{SDE}}(\vx_t)^T
    \end{bmatrix}d\overline{\rW}_t,
    \end{equation}
    where 
    \begin{equation}
        \Tilde{F}(t, \vx) =  -\div_\vx f(t, \vx) - g^2(t)\underbrace{\left( \Delta_\vx \log p_t^{\mathrm{SDE}}(\vx) - \div_\vx \vs(t, \vx)\right)}_{=0 \text{ when } \vs(t, x) = \nabla_\vx \log p_t^{\mathrm{SDE}}(\vx)} - \frac{1}{2}g^2(t)\| \nabla_\vx \log p_t^{\mathrm{SDE}}(\vx)\|^2.
    \end{equation}
\end{theorem}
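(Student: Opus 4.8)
The plan is to replay the proof of \autoref{th:rev-sde-likelihood} with the true score replaced by $\vs$ throughout, recycling the Fokker--Planck computation already carried out for the approximate forward dynamics in \eqref{eq:partial-h-approx-fwd}. First I would rewrite the approximate reverse SDE \eqref{eq:approx-rev-non-linear-sde} with a positive time increment and a forward-running Wiener process,
\begin{equation*}
    d\vx_t = -\bigl(f(T-t, \vx_t) - g^2(T-t)\vs(T-t, \vx_t)\bigr)\,dt + g(T-t)\,d\rW_t ,
\end{equation*}
and apply Itô's lemma (\autoref{app:ito-lemma}) to $h(t, \vx) = \log p_{T-t}^{\mathrm{SDE}}(\vx)$, so that $dh$ acquires the drift $\partial_t h + \nabla_\vx h^{T}\mu + \tfrac12 g^2(T-t)\Delta_\vx h$ and the diffusion $g(T-t)\nabla_\vx h^{T}\,d\rW_t$, where $\mu$ is the drift above.

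The key step is evaluating $\partial_t h(t,\vx) = -\partial_s \log p_s^{\mathrm{SDE}}(\vx)\big|_{s=T-t}$. Since $p_s^{\mathrm{SDE}}$ is the marginal law of the \emph{implied} forward SDE \eqref{eq:rev-approx-rev-sde}, its log-density obeys the Fokker--Planck identity \eqref{eq:partial-h-approx-fwd}, which I would substitute directly. Collecting the three drift contributions, the terms $(\nabla_\vx \log p_t^{\mathrm{SDE}})^{T}\bigl(f - g^2\vs\bigr)$ coming from $\partial_t h$ and from $\nabla_\vx h^{T}\mu$ cancel, exactly as in the exact-score case, leaving $\tfrac12 g^2\|\nabla_\vx \log p_t^{\mathrm{SDE}}\|^2$. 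The Laplacian bookkeeping, however, differs: in \autoref{th:rev-sde-likelihood} the two copies of $\tfrac12 g^2\Delta_\vx \log p_t$ cancelled completely, whereas here the drift of \eqref{eq:rev-approx-rev-sde} carries the extra term $g^2(\nabla_\vx \log p_t^{\mathrm{SDE}} - \vs)$, whose divergence contributes an additional $-g^2(\Delta_\vx \log p_t^{\mathrm{SDE}} - \div_\vx \vs)$; combining everything, the surviving second-order term is precisely $-g^2(\Delta_\vx \log p_t^{\mathrm{SDE}} - \div_\vx \vs)$, which vanishes precisely when $\vs = \nabla_\vx \log p_t^{\mathrm{SDE}}$. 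Finally I would undo the time reversal (negative $dt$, backward Wiener process $\overline{\rW}$) to arrive at the stated augmented dynamics with $\tilde F(t,\vx) = -\div_\vx f(t,\vx) - g^2(t)\bigl(\Delta_\vx \log p_t^{\mathrm{SDE}}(\vx) - \div_\vx \vs(t,\vx)\bigr) - \tfrac12 g^2(t)\|\nabla_\vx \log p_t^{\mathrm{SDE}}(\vx)\|^2$.

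The main obstacle is purely bookkeeping: because $\partial_s \log p_s^{\mathrm{SDE}}$ is governed by the Fokker--Planck equation of the intractable process \eqref{eq:rev-approx-rev-sde}, the computation temporarily involves several occurrences of $\nabla_\vx \log p_t^{\mathrm{SDE}}$ and $\Delta_\vx \log p_t^{\mathrm{SDE}}$, and one must check that (i) every $\nabla_\vx \log p_t^{\mathrm{SDE}}$ term in the drift either cancels against the Itô contributions or is absorbed into the diffusion coefficient $g(t)(\nabla_\vx \log p_t^{\mathrm{SDE}})^{T}\,d\overline{\rW}_t$, and (ii) the Laplacian terms collapse exactly to $\Delta_\vx \log p_t^{\mathrm{SDE}} - \div_\vx \vs$, keeping careful track of the sign flips introduced by the substitution $s = T-t$. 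This is a more delicate replay of \autoref{app:rev-sde-log-likelihood-proof} rather than a genuinely new difficulty, and no assumptions beyond those listed in \autoref{app:assumptions} are needed.
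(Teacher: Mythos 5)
Your proposal is correct and follows the paper's own proof essentially verbatim: time-reverse \eqref{eq:approx-rev-non-linear-sde} to a forward SDE, apply Itô's lemma to $h(t,\vx)=\log p^{\mathrm{SDE}}_{T-t}(\vx)$, substitute the Fokker--Planck identity \eqref{eq:partial-h-approx-fwd} for $\partial_t h$, observe that the first-order $\bigl(\nabla_\vx\log p^{\mathrm{SDE}}\bigr)^{T}(f-g^2\vs)$ contributions cancel while the Laplacian terms accumulate to $g^2\bigl(\Delta_\vx\log p^{\mathrm{SDE}}-\div_\vx\vs\bigr)$, and undo the time reversal. Your description of the surviving terms and their provenance matches the paper's derivation exactly.
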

\begin{proof}
    The approximate reverse SDE can equivalently be written as
    \begin{equation}
        d \vx_t = -\left(f(T-t, \vx_t) - g^2(T-t)\vs(T-t, \vx_t)\right)dt + g(T-t)d\rW_t
    \end{equation}
    for $dt > 0$ and $\rW$ running forward in time. We will apply Itô's lemma (\autoref{app:ito-lemma}) to $h(t, \vx) = \log p_{T-t}^{\mathrm{SDE}}(\vx)$.
    From \autoref{eq:partial-h-approx-fwd} we have:
    \begin{equation}
        \begin{split}
        &\frac{\partial}{\partial t}h(t, \vx)  =  \div_\vx f(T-t, \vx) + \frac{1}{2}g^2(T-t) \Delta_\vx \log p_{T-t}^{\mathrm{SDE}}(\vx) - g^2(T-t)\div_\vx \vs(T-t, \vx) \\
        & +  \nabla_\vx \log p_{T-t}^{\mathrm{SDE}}(\vx)^T \left( f(T-t, \vx) - g^2(T-t) \vs(T-t, \vx) + \frac{1}{2}g^2(T-t) \nabla_\vx \log p_{T-t}^{\mathrm{SDE}}(\vx)\right)
        \end{split}
    \end{equation}
    Therefore the drift of $h(t, \vx_t)$ is given by
    \begin{equation}
    \begin{split}
        &\frac{\partial}{\partial t}h(t, \vx_t) + \nabla_\vx h(t, \vx_t)^T\left(-f(T-t, \vx_t) + g^2(T-t)\vs(T-t, \vx_t)\right) + \frac{1}{2}g^2(T-t)\Delta_\vx h(t, \vx_t) \\
        &= \div_\vx f(T-t, \vx_t) + g^2(T-t) \Delta_\vx \log p_{T-t}^{\mathrm{SDE}}(\vx_t) - g^2(T-t)\div_\vx \vs(T-t, \vx_t)  \\
        & + \frac{1}{2}g^2(T-t)\| \nabla_\vx \log p_{T-t}^{\mathrm{SDE}}(\vx_t)\|^2
    \end{split}
    \end{equation}
    and therefore for $\vx$ following the approximate reverse SDE (\autoref{eq:approx-rev-non-linear-sde}), we have
    \begin{equation}
        d \log p_t^{\mathrm{SDE}}(\vx_t) = \Tilde{F}(t, \vx_t)dt + g(t)\nabla_\vx \log p_t^{\mathrm{SDE}}(\vx_t)^T d\overline{\rW}_t,
    \end{equation}
    where $dt < 0$, $\overline{\rW}$ is running backwards in time  and
    \begin{equation}
        \Tilde{F}(t, \vx) =  -\div_\vx f(t, \vx) - g^2(t) \Delta_\vx \log p_t^{\mathrm{SDE}}(\vx) + g^2(t)\div_\vx \vs(t, \vx) - \frac{1}{2}g^2(t)\| \nabla_\vx \log p_t^{\mathrm{SDE}}(\vx)\|^2.
    \end{equation}
\end{proof}
\autoref{th:approx-aug-rev-sde} defines the exact dynamics of $\log p^{\mathrm{SDE}}_t(\vx_t)$.
However, $d\log p^{\mathrm{SDE}}_t(\vx_t)$ depends on $\nabla_\vx \log p^{\mathrm{SDE}}_t(\vx_t)$, which we cannot access in practice.
We only have access to the approximation $\vs(t, \vx)$.
We now show that replacing the true  $\nabla_\vx \log p^{\mathrm{SDE}}_t(\vx_t)$ with $\vs$ no longer provides exact likelihood estimates, but an ``upper bound in expectation".
\approxrevaugsde*
Furthermore, $\rX$ can be written as $\rX = \rX_1 + \rX_2$, where
\begin{equation*}
    \rX_1 = \int_0^T g^2(t)\left( \div_\vx \vs(t, \vx_t) + \frac{1}{2}\| \vs(t, \vx_t)\|^2 - \Delta_\vx \log p_t^{\mathrm{SDE}}(\vx_t) - \frac{1}{2}\| \nabla_\vx \log p_t^{\mathrm{SDE}}(\vx_t) \|^2  \right)dt
\end{equation*}
and
\begin{equation*}
    \mathbb{E}\rX_2 = 0; \mathrm{Var}(\rX_2) = \int_0^T g^2(t) \mathbb{E}_{\vx_t \sim p_t^{\mathrm{SDE}}(\vx_t)} \| \vs(t, \vx_t) - \nabla_\vx \log p_t^{\mathrm{SDE}}(\vx_t)\|^2 dt
\end{equation*}
For the non-linear drift (\eqref{eq:non-linear-sde}), we have (the difference highlighted in \textcolor{blue}{blue})
    \begin{equation*}
        d\begin{bmatrix}
        \vx_t \\ r_t
    \end{bmatrix} = \begin{bmatrix}
        \textcolor{blue}{f(t, \vx_t)} - g^2(t) \colornet{\vs(t, \vx_t)} \\ -\textcolor{blue}{\div_\vx f(t, \vx_t)} - \frac{1}{2}g^2(t) \|\colornet{\vs(t, \vx_t)}\|^2
    \end{bmatrix}dt + g(t)\begin{bmatrix}
        \mI_D \\ \colornet{\vs(t, \vx_t)}^T
    \end{bmatrix}d\overline{\rW}_t,
    \end{equation*}
\begin{proof}
\begin{equation}
    \log p_0^{\mathrm{SDE}}(\vx_0) = \log p_T^{\mathrm{SDE}}(\vx_T) - \int_0^T d \log p_t^{\mathrm{SDE}}(\vx_t)
\end{equation}
From \autoref{th:approx-aug-rev-sde} we have
\begin{equation}
\begin{split}
     \int_0^T d \log p_t^{\mathrm{SDE}}(\vx_t) = &\int_0^T \Tilde{F}(t, \vx_t)dt + \int_0^T g(t)\nabla_\vx \log p_t^{\mathrm{SDE}}(\vx_t)^T d\overline{\rW}_t \\
     =& \int_0^T\left( -\div_\vx f(t, \vx_t) - \frac{1}{2}g^2(t) \| \vs(t, \vx_t) \|^2 \right)dt \\
    & + \int_0^T g(t)\vs(t, \vx_t)^T d\overline{\rW}_t + \rX_1 + \rX_2,
\end{split}
\end{equation}
where
\begin{equation}
    \rX_1 = \int_0^T g^2(t)\left( \div_\vx \vs(t, \vx_t) + \frac{1}{2}\| \vs(t, \vx_t)\|^2 - \Delta_\vx \log p_t^{\mathrm{SDE}}(\vx_t) - \frac{1}{2}\| \nabla_\vx \log p_t^{\mathrm{SDE}}(\vx_t) \|^2  \right)dt
\end{equation}
and
\begin{equation}
\rX_2= \int_0^T g(t) \left(\nabla_\vx \log p_t^{\mathrm{SDE}}(\vx_t) - \vs(t, \vx_t)  \right)^T d\overline{\rW}_t
\end{equation}
Note that from Assumption 13 (\autoref{app:assumptions}) we have
\begin{equation}
    \mathbb{E} \rX_2 = 0.
\end{equation}
Furthermore, using Fubini's theorem, we have
\begin{equation}\label{eq:exp-x}
\begin{split}
    \mathbb{E}\rX_1 =& \int_0^T g^2(t) \mathbb{E}_{\vx_t \sim p_t^{\mathrm{SDE}}(\vx_t)} \left( \div_\vx \vs(t, \vx_t) + \frac{1}{2}\| \vs(t, \vx_t)\|^2 - \Delta_\vx \log p_t^{\mathrm{SDE}}(\vx_t) \right. \\
    & \left.- \frac{1}{2}\| \nabla_\vx \log p_t^{\mathrm{SDE}}(\vx_t) \|^2  \right)dt.
\end{split}
\end{equation}
Now rewrite
\begin{equation}
\begin{split}
    &\mathbb{E}_{\vx_t \sim p_t^{\mathrm{SDE}}(\vx_t)} \left( \div_\vx \vs(t, \vx_t) - \Delta_\vx \log p_t^{\mathrm{SDE}}(\vx_t)\right) \\
    & = \mathbb{E}_{\vx_t \sim p_t^{\mathrm{SDE}}(\vx_t)}  \div_\vx \left( \vs(t, \vx_t) - \nabla_\vx \log p_t^{\mathrm{SDE}}(\vx_t) \right) \\
    & \stackrel{(i)}{=} \mathbb{E}_{\vx_t \sim p_t^{\mathrm{SDE}}(\vx_t)} \left( -\vs(t, \vx_t) + \nabla_\vx \log p_t^{\mathrm{SDE}}(\vx_t) \right)^T\nabla_\vx \log p_t^{\mathrm{SDE}}(\vx_t) \\ 
    & =  \mathbb{E}_{\vx_t \sim p_t^{\mathrm{SDE}}(\vx_t)} \left( -\vs(t, \vx_t)^T\nabla_\vx \log p_t^{\mathrm{SDE}}(\vx_t) + \|\nabla_\vx \log p_t^{\mathrm{SDE}}(\vx_t)\|^2 \right),
\end{split}
\end{equation}
where $(i)$ is applying integration by parts (Assumptions 8 and 11 in \autoref{app:assumptions}). Substituting back to \autoref{eq:exp-x}, we get
\begin{equation}
\begin{split}
    \mathbb{E}\rX_1 &= \int_0^T \frac{1}{2}g^2(t) \mathbb{E}_{\vx_t \sim p_t^{\mathrm{SDE}}(\vx_t)} \left( \| \vs(t, \vx_t)\|^2 -2\vs(t, \vx_t)^T\nabla_\vx \log p_t^{\mathrm{SDE}}(\vx_t) +\| \nabla_\vx \log p_t^{\mathrm{SDE}}(\vx_t) \|^2  \right)dt \\
    &= \int_0^T \frac{1}{2}g^2(t) \mathbb{E}_{\vx_t \sim p_t^{\mathrm{SDE}}(\vx_t)} \| \vs(t, \vx_t) - \nabla_\vx \log p_t^{\mathrm{SDE}}(\vx_t)\|^2 dt.
\end{split}
\end{equation}
Therefore
\begin{equation}
\begin{split}
    \log p_0^{\mathrm{SDE}}(\vx_0) =& \log p_T^{\mathrm{SDE}}(\vx_T) - \int_0^T \left( -\div_\vx f(t, \vx_t) - \frac{1}{2}g^2(t)\vs(t, \vx_t)\right)dt \\
    & - \int_0^T g(t)\vs(t, \vx_t)^T d\overline{\rW}_t - \rX,
\end{split}
\end{equation}
where
\begin{equation}
    \mathbb{E}\rX = \int_0^T \frac{1}{2}g^2(t) \mathbb{E}_{\vx_t \sim p_t^{\mathrm{SDE}}(\vx_t)} \| \vs(t, \vx_t) - \nabla_\vx \log p_t^{\mathrm{SDE}}(\vx_t)\|^2 dt \geq 0.
\end{equation}
\end{proof}
\section{Proof of \autoref{th:mode-ode}}\label{app:mode-ode-proof}
\textbf{In the following sections, we assume the linear drift SDE} \autoref{eq:sde}. In \autoref{th:mode-ode} we explicitly assume Gaussian forward transition densities, which are only guaranteed in the linear drift SDE.
\hdrestimationthm*
Note that without assuming invertibility of $\mA$, \autoref{eq:mode-ode} becomes
\begin{equation}
     \mA(s, \vy_s)\left(\dot{\vy}_s - f(s)\vy_s + g^2(s) \nabla_{\vy} \log p_s(\vy_s) \right) = -\frac{1}{2}g^2(s)\nabla_{\vy}\Delta_{\vy}\log p_s(\vy_s)
\end{equation}
\begin{proof}
    We begin by noting that for linear SDE (\eqref{eq:sde}) $p_{t|s}$ is Gaussian for $s<t$ and therefore
    \begin{equation}\label{eq:denoising_log_prob}
    \begin{split}
        \log p_{s|t}(\vy_s|\vx_t) &= \log p_{t|s}(\vx_t|\vy_s) + \log p_s(\vy_s) - \log p_t(\vx_t) \\
        & = C - \frac{\| \vx_t - \Tilde{f}(s) \vy_s \|^2}{2\Tilde{g}^2(s)} + \log p_s(\vy_s) - \log p_t(\vx_t),
    \end{split}
    \end{equation}
    where $\Tilde{f}(s)=\frac{\alpha_t}{\alpha_s}$ and $\Tilde{g}^2(s) = \sigma_t^2 - \Tilde{f}^2(s)\sigma_s^2$ (See Appendix A.1 in \citet{kingma2021variational}). Since $p_{s|t}(\vy_s|\vx_t) = \max_{\vx_s} p_{s|t}(\vx_s|\vx_t)$, it must hold that
    \begin{equation}
        \nabla_{\vy_s} \log p_{s|t}(\vy_s|\vx_t) = 0 \hspace{5pt} \text{for all} \hspace{5pt} s<t.
    \end{equation}
    Therefore
    \begin{equation}
        \frac{d}{ds} \left( \nabla_{\vy_s} \log p_{s|t}(\vy_s|\vx_t) \right) = 0 \hspace{5pt} \text{for all} \hspace{5pt} s<t.
    \end{equation}
    From \autoref{eq:denoising_log_prob} we have
    \begin{equation}\label{eq:denoising_score_zero}
        \nabla_{\vy_s} \log p_{s|t}(\vy_s|\vx_t) = \nabla_{\vy_s} \log p_s(\vy_s) + \frac{\Tilde{f}(s)}{\Tilde{g}^2(s)}\left(\vx_t - \Tilde{f}(s)\vy_s \right) = 0
    \end{equation}
    and thus
    \begin{equation}
        \frac{d}{ds} \left( \nabla_{\vy_s} \log p_s(\vy_s) - \psi(s)\vy_s + \phi(s)\vx_t \right) = 0,
    \end{equation}
    where $\psi(s) = \frac{\Tilde{f}^2(s)}{\Tilde{g}^2(s)}$ and $\phi(s) = \frac{\Tilde{f}(s)}{\Tilde{g}^2(s)}$. Note that
    \begin{equation}
        \frac{d}{ds} \nabla_{\vy_s} \log p_s(\vy_s) = \frac{\partial}{\partial s} \nabla_{\vy_s} \log p_s(\vy_s) + \nabla^2_{\vy_s} \log p_s(\vy_s) \dot{\vy_s}
    \end{equation}
    and we can use \autoref{eq:fpe} to re-write the first term
    \begin{equation}
    \begin{split}
        &\frac{\partial}{\partial s} \nabla_{\vy_s} \log p_s(\vy_s) = \nabla_{\vy_s} \frac{\partial}{\partial s} \log p_s(\vy_s) \\
        &  = \nabla_{\vy_s} \left( -f(s)D + \frac{1}{2}g^2(s)\Delta_{\vy_s} \log p_s(\vy_s) - \nabla_{\vy_s} \log p_s(\vy_s)^T(f(s)\vy_s - \frac{1}{2}g^2(s)\nabla_{\vy_s}\log p_s (\vy_s) \right) \\
        & = \frac{1}{2}g^2(s)\nabla_{\vy_s}\Delta_{\vy_s}\log p_s(\vy_s)- f(s)\nabla^2_{\vy} \log p_s(\vy_s)\vy_s \\
        & - f(s)\nabla_{\vy} \log p_s(\vy_s) + g^2(s)\nabla^2_{\vy} \log p_s(\vy_s)\nabla_{\vy} \log p_s(\vy_s) \\
        &= \frac{1}{2}g^2(s)\nabla_{\vy_s}\Delta_{\vy_s}\log p_s(\vy_s)- f(s)\nabla_{\vy} \log p_s(\vy_s) \\
        &- \nabla^2_{\vy} \log p_s(\vy_s)\left(f(s)\vy_s - g^2(s) \nabla_{\vy} \log p_s(\vy_s)\right)
    \end{split}
    \end{equation}
    and thus
    \begin{equation}\label{eq:score_dynamics}
    \begin{split}
        \frac{d}{ds} \nabla_{\vy_s} \log p_s(\vy_s) &= \frac{1}{2}g^2(s)\nabla_{\vy_s}\Delta_{\vy_s}\log p_s(\vy_s) - f(s)\nabla_{\vy} \log p_s(\vy_s) \\
        &+ \nabla^2_{\vy} \log p_s(\vy_s)\left(\dot{\vy_s} - f(s)\vy_s + g^2(s) \nabla_{\vy} \log p_s(\vy_s)\right)
    \end{split}
    \end{equation}
    For the remaining terms we first note
    \begin{equation}
        \Tilde{f}(s) = \frac{\alpha_t}{\alpha_s} = \exp \{ \log \alpha_t - \log \alpha_s\} = \exp \{ \int_s^t \frac{d}{du}\log \alpha_u\} = \exp \{ \int_s^t f(u)\}
    \end{equation}
    and in particular $\frac{d}{ds}\log \Tilde{f}(s)=-f(s)$. Similarly
    \begin{equation}
    \begin{split}
        \Tilde{g}^2(s) &= \sigma_t^2 - \Tilde{f}^2(s)\sigma_s^2 = \alpha_t^2 \left(\frac{\sigma_t^2}{\alpha_t^2} - \frac{\sigma_s^2}{\alpha_s^2}\right) = \alpha_t^2 \left(e^{-\lambda_t} - e^{-\lambda_s}\right) = \alpha_t^2 \int_s^t \frac{d}{du}e^{-\lambda_u}du \\
        & = \alpha_t^2 \int_s^t \left(-\frac{d\lambda_u}{du}\right)e^{-\lambda_u}du = \alpha_t^2 \int_s^t \left(-\frac{d\lambda_u}{du}\right)\frac{\sigma_u^2}{\alpha_u^2}du = \alpha_t^2 \int_s^t \frac{g^2(u)}{\alpha_u^2}du \\
        & = \int_s^t \Tilde{f}^2(u)g^2(u)du 
             \end{split}
    \end{equation}
    and in particular $\frac{d}{ds}\log \Tilde{g}^2(s) =\frac{1}{\Tilde{g}^2(s)}\frac{d}{ds} \Tilde{g}^2(s)=-\psi(s)g^2(s)$. Therefore
    \begin{equation}
    \begin{split}
        \frac{d}{ds} \left(-\psi(s)\vy_s + \phi(s)\vx_t \right) &= -\psi'(s)\vy_s - \psi(s)\dot{\vy_s} + \phi'(s)\vx_t \\
        & = - \psi(s)\dot{\vy_s} + \phi'(s)\vx_t - \left(\phi'(s)\Tilde{f}(s) - f(s)\psi(s) \right)\vy_s \\
        & = -\psi(s) \left(\dot{\vy_s} - f(s)\vy_s\right) + \phi'(s)\left(\vx_t - \Tilde{f}(s)\vy_s\right) \\
        & = -\psi(s) \left(\dot{\vy_s} - f(s)\vy_s\right) + \phi(s)\frac{d}{ds}\left(\log \phi(s)\right)\left(\vx_t - \Tilde{f}(s)\vy_s\right).
    \end{split}
    \end{equation}
    From \autoref{eq:denoising_score_zero}, we have
    \begin{equation}
        \phi(s)\left(\vx_t - \Tilde{f}(s)\vy_s\right) = -\nabla_{\vy}\log p_s(\vy_t)
    \end{equation}
    and 
    \begin{equation}
        \frac{d}{ds}\left(\log \phi(s)\right) = \frac{d}{ds}\log \Tilde{f}(s) - \frac{d}{ds}\log \Tilde{g}^2(s)=-f(s)+\psi(s)g^2(s).
    \end{equation}
    Thus
    \begin{equation}
    \begin{split}
        \frac{d}{ds} \left(-\psi(s)\vy_s + \phi(s)\vx_t \right) &=  -\psi(s) \left(\dot{\vy}_s - f(s)\vy_s\right)+ \left(f(s)-\psi(s)g^2(s)\right)\nabla_\vy \log p_s(\vy_s)\\
        & = -\psi(s) \left(\dot{\vy}_s - f(s)\vy_s + g^2(s)\nabla_\vy \log p_s(\vy_s)\right) + f(s)\nabla_\vy \log p_s(\vy_s).
    \end{split}
    \end{equation}
    Putting it all together, we have
    \begin{equation}
    \begin{split}
         0&=\frac{d}{ds} \left( \nabla_{\vy} \log p(\vy_s|\vx_t) \right) \\
         &= \frac{1}{2}g^2(s)\nabla_{\vy}\Delta_{\vy}\log p_s(\vy_s) - f(s)\nabla_{\vy} \log p_s(\vy_s) \\
         &+ \nabla^2_{\vy} \log p_s(\vy_s)\left(\dot{\vy}_s - f(s)\vy_s + g^2(s) \nabla_{\vy} \log p_s(\vy_s)\right) \\
         &-\psi(s) \left(\dot{\vy}_s - f(s)\vy_s + g^2(s)\nabla_\vy \log p_s(\vy_s)\right) + f(s)\nabla_\vy \log p_s(\vy_s) \\
         & = \frac{1}{2}g^2(s)\nabla_{\vy}\Delta_{\vy}\log p_s(\vy_s) + \left(\nabla^2_{\vy} \log p_s(\vy_s) - \psi(s)\mI_D\right)\left(\dot{\vy}_s - f(s)\vy_s + g^2(s) \nabla_{\vy} \log p_s(\vy_s) \right),
    \end{split}
    \end{equation}
    or equivalently for $\mA(s, \vy) = \nabla^2_{\vy} \log p_s(\vy) - \psi(s)\mI_D$
    \begin{equation}
        \dot{\vy}_s =f(s)\vy_s - g^2(s) \nabla_{\vy} \log p_s(\vy_s) - \frac{1}{2}g^2(s)\mA(s, \vy_s)^{-1}\nabla_{\vy}\Delta_{\vy}\log p_s(\vy_s)
    \end{equation}
    if $\mA(s, \vy_s)$ is invertible for all $s<t$ and
    \begin{equation}
    \begin{split}
        \psi(s) &= \frac{\Tilde{f}^2(s)}{\Tilde{g}^2(s)} = \frac{\alpha_t^2}{\alpha_s^2\left( \sigma_t^2 - \Tilde{f}^2(s)\sigma_s^2\right)} = \frac{\alpha_t^2}{\alpha_s^2 \sigma_t^2 - \alpha_t^2\sigma_s^2} = \frac{1}{\alpha_s^2}\frac{1}{\frac{\sigma_t^2}{\alpha_t^2} - \frac{\sigma_s^2}{\alpha_s^2}} \\
        & = \frac{1}{\alpha_s^2}\frac{1}{e^{-\lambda_t} - e^{-\lambda_s}} = \frac{e^{\lambda_s}}{\alpha_s^2}\frac{e^{\lambda_t}}{e^{\lambda_s} - e^{\lambda_t}} = \frac{1}{\sigma_s^2}\frac{e^{\lambda_t}}{e^{\lambda_s} - e^{\lambda_t}}.
    \end{split}
    \end{equation}
\end{proof}
\section{Mode-seeking ODE in the Gaussian case}\label{app:gaussian-mode-ode}
We will prove the claims from \autoref{rem:gaussian-ode}. We recall it for completeness.
\gaussianode*
\begin{proof}
    We first note that when $p_0$ Gaussian and the SDE is linear (\eqref{eq:sde}) then $p_s$ are Gaussian $\forall s$. In particular $\nabla_\vx\Delta_\vx \log p_s (\vx)=0$ for al $s \in [0, T]$ and $\vx \in \mathbb{R}^D$. Therefore \eqref{eq:mode-ode} becomes \eqref{eq:gaussian-ode}. We will now study $\vy_s$ following \eqref{eq:gaussian-ode}. Recalling \autoref{eq:score_dynamics}:
    \begin{equation}
    \begin{split}
        \frac{d}{ds} \nabla_{\vy} \log p_s(\vy_s) &= \cancelto{0}{\frac{1}{2}g^2(s)\nabla_{\vy}\Delta_{\vy}\log p_s(\vy_s)} - f(s)\nabla_{\vy} \log p_s(\vy_s) \\
        &+ \nabla^2_{\vy} \log p_s(\vy_s)\left(\dot{\vy_s} - f(s)\vy_s + g^2(s) \nabla_{\vy} \log p_s(\vy_s)\right) \\
        & = - f(s)\nabla_{\vy} \log p_s(\vy_s)+ \nabla^2_{\vy} \log p_s(\vy_s)\left(\dot{\vy_s} - f(s)\vy_s + g^2(s) \nabla_{\vy} \log p_s(\vy_s)\right).
    \end{split}
    \end{equation}
    If we then assume \autoref{eq:gaussian-ode}, we have
    \begin{equation}
        \frac{d}{ds} \nabla_{\vy} \log p_s(\vy_s) = - f(s)\nabla_{\vy} \log p_s(\vy_s)
    \end{equation}
    and to simplify further, using the fact that $f(s) = \frac{d}{ds}\log \alpha_s$
    \begin{equation}
        \frac{d}{ds}\left( \alpha_s \nabla_{\vy} \log p_s(\vy_s) \right) = \frac{d}{ds}\alpha_s \nabla_{\vy} \log p_s(\vy_s) + \alpha_s \frac{d}{ds}\nabla_{\vy} \log p_s(\vy_s) = 0.
    \end{equation}
    Hence, for $\vy_s$ satisfying \autoref{eq:gaussian-ode}, we have 
    \begin{equation}
         \alpha_s \nabla_{\vy} \log p_s(\vy_s) = \alpha_t \nabla_{\vy} \log p_t(\vy_t) \hspace{5pt} \text{for all} \hspace{5pt} s<t.
    \end{equation}
    We can thus rewrite
    \begin{equation}
    \begin{split}
        \dot{\vy}_s &= f(s)\vy_s - g^2(s)\nabla_{\vy} \log p_s(\vy_s) \\
        & = f(s)\vy_s - \frac{g^2(s)}{\alpha_s}\alpha_s\nabla_{\vy} \log p_s(\vy_s) \\
        & = f(s)\vy_s - \frac{g^2(s)}{\alpha_s}\alpha_t\nabla_{\vy} \log p_t(\vy_t).
    \end{split}
    \end{equation}
    Furthermore
    \begin{equation}
    \begin{split}        
        \frac{d}{ds}\left(\frac{\vy_s}{\alpha_s}\right) =& \frac{\dot{\vy}_s\alpha_s - \vy_s\alpha_s'}{\alpha_s^2} = \frac{\left(\cancel{\frac{\alpha_s'}{\alpha_s}\vy_s} - \frac{g^2(s)}{\alpha_s}\alpha_t\nabla_{\vy} \log p_t(\vy_t) \right)\alpha_s - \cancel{\vy_s\alpha_s'}}{\alpha_s^2} \\
        &= -\frac{g^2(s)}{\alpha_s^2}\alpha_t\nabla_{\vy} \log p_t(\vy_t) = \frac{d\lambda_s}{ds}e^{-\lambda_s} \alpha_t\nabla_{\vy} \log p_t(\vy_t)
    \end{split}
    \end{equation}
    and we can solve
    \begin{equation}
    \begin{split}
        \frac{\vy_t}{\alpha_t} - \frac{\vy_0}{\alpha_0} &= \int_0^t \left(\frac{d\lambda_s}{ds}e^{-\lambda_s} \alpha_t\nabla_{\vy} \log p_t(\vy_t) \right)ds = \left(\int_0^t \frac{d\lambda_s}{ds}e^{-\lambda_s}ds\right)\alpha_t\nabla_{\vy} \log p_t(\vy_t)  \\
        & = \left(\int_{\lambda_{\text{max}}}^{\lambda_t} e^{-\lambda}d\lambda\right)\alpha_t\nabla_{\vy} \log p_t(\vy_t) = \left( e^{-\lambda_{\text{max}}} - e^{-\lambda_t} \right) \alpha_t\nabla_{\vy} \log p_t(\vy_t).
    \end{split}
    \end{equation}
    Leveraging that $\alpha_0=1$, we get
    \begin{equation}
    \begin{split}
        \vy_0 =& \frac{\vy_t}{\alpha_t} + e^{-\lambda_t}\alpha_t \log p_t(\vy_t) - e^{-\lambda_{\text{max}}}\alpha_t\nabla_{\vy} \log p_t(\vy_t) \\
        =& \frac{\vy_t + \sigma_t^2 \nabla_\vx \log p_t(\vx_t)}{\alpha_t} - e^{-\lambda_{\text{max}}}\alpha_t\nabla_{\vx} \log p_t(\vx_t)\\
        =& \mathbb{E}\left[\vx_0 | \vx_t \right] - e^{-\lambda_{\text{max}}}\alpha_t\nabla_{\vx} \log p_t(\vx_t) \\
        = & \argmax_{\vx_0} p(\vx_0|\vx_t) + \mathcal{O}(e^{-\lambda_{\text{max}}}).
    \end{split}
    \end{equation}
\end{proof}
\section{Non-smooth mode-tracking curve}\label{app:non-smooth-mode-traj}
\begin{figure}[h!]
    \centering
    \includegraphics[width=0.99\linewidth]{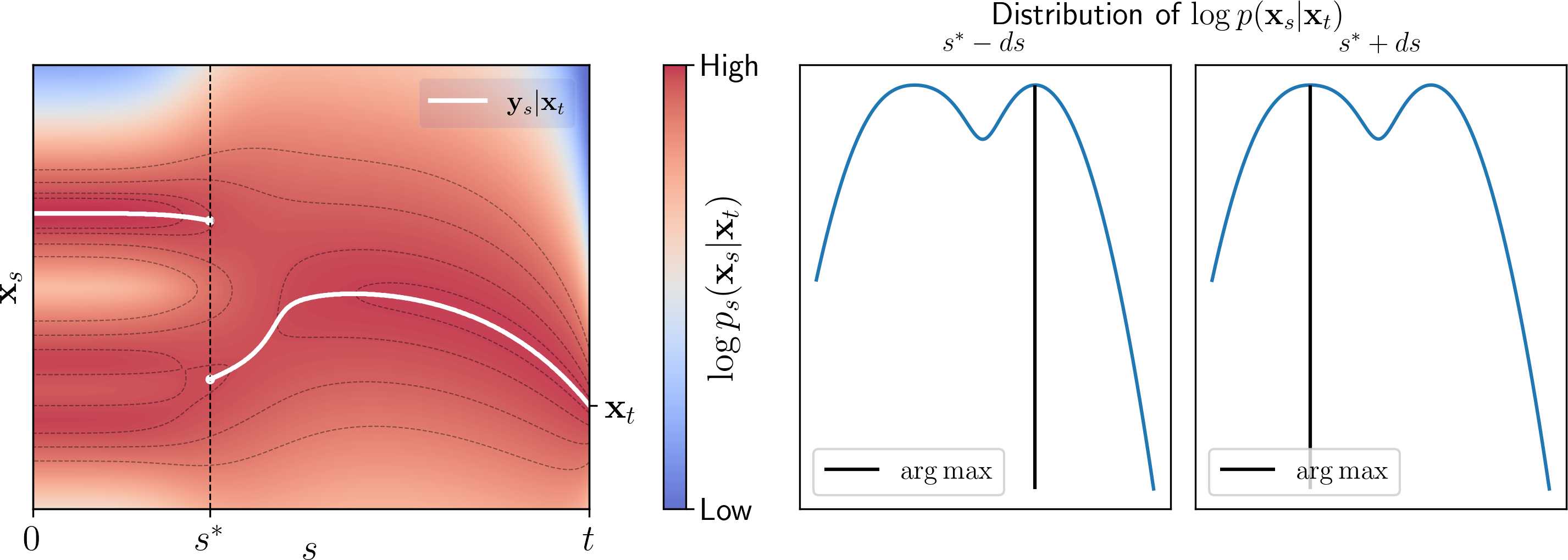}
    \caption{\textbf{The mode tracking curve need not be continuous.} Left: mode-tracking curve with a discontinuous jump at $s^*$. Right: distribution of $\log p_{s|t}(\vx_s|\vx_t)$ for values around $s^*$. At $s=s^*$ the $\argmax$ changes discontinuously.}
    \label{fig:non-smooth-curve}
\end{figure}
An important assumption in \autoref{th:mode-ode} is that for a fixed $t \in (0, T]$ and a noisy point $\vx_t \in \mathbb{R}^D$, there exists a smooth curve $s \mapsto \vy_s$ such that
\begin{equation}
    p_{s|t}(\vy_s|\vx_t) = \max_{\vx_s} p_{s|t}(\vx_s|\vx_t).
\end{equation}
It is an assumption that need not hold.
To demonstrate we define the data distribution as 1D mixture of 3 gaussians $p=\sum_{i=1}^3w_i\mathcal{N}(\mu_i, \sigma^2)$, where $\mu_1=-2.5$, $\mu_2=-1.5$, $\mu_3=1$ and $\sigma^2 = 0.1$ and weights $w_1=w_2=0.274$ and $w_3=0.45$.
We model the distrbution with a VP-SDE \citep{song2020score}, where $\sigma_t^2 = \frac{1}{1 + e^{\lambda_t}} = 1 - \alpha_t^2$.
We then choose $\vx_t=-2.5$ and $t$ such that $\lambda_t=-8$ and visualize $\log p(\vx_s|\vx_t)$ for all $s<t$ and all $\vs_t \in [-4, 3.5]$  and the mode-tracking curve $\vy_s|\vx_t$ in white (\autoref{fig:non-smooth-curve} left).

The mode-tracking curve exhibits a discontinuous jump at $s^*$ such that $\lambda_{s^*} \approx 1.28$.
The distribution $p_{s|t}(\vx_s|\vx_t)$ has the mode at $\vx_s \approx 0.86$ for $s = s^* - ds$ and at  $\vx_s \approx -1.81$ for $s = s^* + ds$  (\autoref{fig:non-smooth-curve} right).
\section{Cost of mode-tracking}\label{app:mode-tracking-cost}
Evaluation of the drift of \autoref{eq:mode-ode} requires evaluating
\begin{equation*}
    \underbrace{\mA(s, \vy_s)^{-1}}_{\text{Hessian factor}}\underbrace{\nabla_{\vy} \Delta_{\vy}\log p_s(\vy_s)}_{\text{Laplacian factor}},
\end{equation*}
where $\mA(s, \vy) = \left(\nabla_\vy^2 \log p_s(\vy) - \psi(s)\mI_D \right)$, $\psi(s) = \frac{1}{\sigma_s^2} \frac{e^{\lambda_{t}}}{e^{\lambda_s} - e^{\lambda_t}}$, and $\Delta_{\vy} = \sum_i \frac{\partial^2}{\partial y_i^2}$ is the Laplace operator. We will discuss the factors separately assuming that we use a model $\vs_\theta(t, \vx) \approx \nabla_\vx \log p_t(\vx)$.
\paragraph{Hessian factor}
To evaluate $\mA(s, \vy)$, we need to estimate $\nabla_\vy^2 \log p_s(\vy)$, which is the Jacobian matrix of the score function w.r.t. spatial argument $\vy$. This can be done using automatic differentiation and it requires $D$ Jacobian-vector products (JVPs), where $D$ is the dimensionality of the data and each JVP is roughly twice as expensive as score function evaluation \citep{meng2021estimating}. In summary, evaluating $\mA(s, \vy_s)^{-1}$ requires roughly $2D$ score function evaluations plus the inversion of a $D \times D$ matrix, which is $\mathcal{O}(D^3)$.

\paragraph{Laplacian factor} Evaluation of $\nabla_{\vy} \Delta_{\vy}\log p_s(\vy_s) = \nabla_{\vy} \div_\vy \nabla_y\log p_s(\vy_s)$ requires evaluating the gradient of the divergence of the score function. Exact evaluation of the divergence would again require $D$ JVPs \citep{meng2021estimating}. However, one might approximate it with a single JVP using the Hutchinson's trick \citep{hutchinson1989stochastic, grathwohl2018ffjord}. One can thus approximate  $\nabla_{\vy} \Delta_{\vy}\log p_s(\vy_s) = \nabla_{\vy} \div_\vy \nabla_y\log p_s(\vy_s)$ using a single JVP followed by a backward pass.

In summary, the bottleneck of the evaluation of $\mA(s, \vy_s)^{-1}\nabla_{\vy} \Delta_{\vy}\log p_s(\vy_s)$ is the evaluation of $\mA(s, \vy_s)^{-1}$, which scales worse than linearly with the dimension of the data. For example, for CIFAR10 data, the evaluation of each step of \autoref{eq:mode-ode} would be at least 6000x more expensive than the evaluation of the score function. For 256x256 images it would be roughly 400000x more expensive.
\section{Proof of \autoref{lem:gen-insta}}\label{app:gen-insta-proof}
\begin{proof}
    From \autoref{eq:fpe}, we have that
    \begin{equation*}
        \frac{\partial}{\partial t} \log p_t(\vz) = -\div_\vz f_1(t, \vz) - \nabla_\vz \log p_t(\vz)^T f_1(t, \vz).
    \end{equation*}
    Therefore
    \begin{align*}
        \frac{d}{dt}\log p_t(\vz_t) & = \frac{\partial}{\partial t} \log p_t(\vz_t) + \nabla_\vx  \log p_t(\vz_t)^T \frac{d}{dt}\vz_t \\
        & = -\div_\vz f_1(t, \vz_t) + \nabla_\vx \log p_t(\vz_t)^T \left( f_2(t, \vz_t) - f_1(t, \vz_t) \right).
    \end{align*}
\end{proof}
\section{CIFAR models hyperparameters}\label{app:hyperparams}
In \autoref{sec:bias-bound} we train diffusion models on CIFAR10 data. Specifically, these models are Variance Preserving (VP) SDEs with a linear log-SNR noise schedule and $\veps$-parametrization (where the model is directly conditioned on $\lambda=\log \mathrm{SNR}(t)$ as opposed to $t$, as suggested by \citet{kingma2024understanding}). $\veps_\theta$ is parametrized as a UNET using the implementation from \url{docs.kidger.site/equinox/examples/unet/} with hyperparameters: \texttt{is\_biggan=True}, \texttt{dim\_mults=(1, 2, 2, 2)}, \texttt{hidden\_size=128}, \texttt{heads=8}, \texttt{dim\_head=16}, \texttt{dropout\_rate=0.1}, \texttt{num\_res\_blocks=4}, \texttt{attn\_resolutions=[16]}; trained for 2M steps, 128 batch size, and the adaptive noise schedule from \citet{kingma2024understanding} with EMA weight 0.99.

The two model variants are:
\begin{itemize}
    \item CIFAR10-ML - trained with maximum likelihood (ML), i.e. unweighted ELBO;
    \item CIFAR10-SQ - optimized for \textbf{S}ample \textbf{Q}uality, i.e. trained with weighted ELBO with $w(\lambda)=\mathrm{sigmoid}(-\lambda + 2)$ as recommended by \citet{kingma2024understanding}.
\end{itemize}
\section{Quantitative analysis of likelihoods of samples generated with algorithm \ref{alg:high-p-sampling}}\label{app:quantitative-hd-sampling}
In \autoref{tab:quantitative} we provide the values of $\mathbb{E} [-\log p_0(\vx_0)]$ (in bits-per-dim) for different models and sampling strategies. In all cases $\log p_0(\vx_0)$ was estimated using the PF-ODE (\autoref{eq:augmented-ode-dynamics}) to ensure a fair comparison. The values are mean ± one standard deviation. We see that HD sampling (algorithm \ref{alg:high-p-sampling}) generates samples with higher density (lower NLL) than regular samples across different models and values of the threshold parameter $t$. Note that for different models, values of the threshold parameter $t$ in HD sampling is in different ranges. This is due to the fact that different models use different SDEs and different noise schedules.

The models used are
\begin{itemize}
    \item CIFAR10 - Models from \autoref{sec:bias-bound} with hyperparameters as defined in \autoref{app:hyperparams}. Used 1024 samples for each sampling strategy.
    \item ImageNet64 - Checkpoint provided by \citet{karras2022elucidating}, i.e. Variance Exploding (VE) SDE with a noise schedule satisfying $\sigma=t$. "Original" sampling strategy is the stochastic Heun sampler proposed by the authors. Used 192 samples for each sampling strategy with default hyperparameters.
    \item FFHQ256 and Church256 - Checkpoints provided by \citet{song2020score}, i.e. VE SDE with exponential noise schedule. "Original" sampling strategy is the Predictor-Corrector sampler recommended by the authors with default hyperparameters. Used 192 samples for each sampling strategy.
\end{itemize}
\begin{table}[h!]
\centering
\renewcommand{\arraystretch}{1.3}
\setlength{\tabcolsep}{5pt}
\begin{tabular}{llc}
\toprule
\textbf{Model} & \textbf{Sampling} & \textbf{NLL (bpd)} \\ 
\midrule
CIFAR10-ML & PF-ODE & 4.17 {\color{gray} ± 0.49} \\ 
           & Rev-SDE & 4.44 {\color{gray} ± 0.42} \\ 
           & Rev-SDE (\autoref{th:rev-sde-likelihood}) & 4.30* {\color{gray} ± 0.41} \\ 
           & HD($t=0.3T$) & 2.65 {\color{gray} ± 0.62} \\ 
           & HD($t=0.45T$) & 1.57 {\color{gray} ± 0.44} \\ 
           & HD($t=0.5T$) & 1.25 {\color{gray} ± 0.38} \\ 
           & HD($t=0.55T$) & 0.98 {\color{gray} ± 0.36} \\ 
           & HD($t=0.7T$) & 0.24 {\color{gray} ± 0.27} \\ 
\midrule
CIFAR10-SQ & PF-ODE & 4.55 {\color{gray} ± 0.46} \\ 
           & Rev-SDE & 4.23 {\color{gray} ± 0.42} \\ 
           & Rev-SDE (\autoref{th:rev-sde-likelihood}) & 4.16* {\color{gray} ± 0.42} \\ 
           & HD($t=0.3T$) & 2.74 {\color{gray} ± 0.61} \\ 
           & HD($t=0.45T$) & 1.61 {\color{gray} ± 0.39} \\ 
           & HD($t=0.5T$) & 1.37 {\color{gray} ± 0.34} \\ 
           & HD($t=0.55T$) & 1.15 {\color{gray} ± 0.32} \\ 
           & HD($t=0.7T$) & 0.46 {\color{gray} ± 0.30} \\ 
\midrule
ImageNet64 & Original & 3.16 {\color{gray} ± 0.81} \\ 
\citep{karras2022elucidating}                   & HD($t=0.0125T$) & -2.10 {\color{gray} ± 0.33} \\ 
                   & HD($t=0.05T$) & -2.74 {\color{gray} ± 0.38} \\ 
\midrule
FFHQ256  & Original & 1.01 {\color{gray} ± 0.41} \\ 
\citep{song2020score}                & HD($t=0.5T$) & -2.58 {\color{gray} ± 0.65} \\ 
                & HD($t=0.3T$) & -4.01 {\color{gray} ± 0.96} \\ 
\midrule
Church256  & Original & 0.77 {\color{gray} ± 0.23} \\ 
  \citep{song2020score}                & HD($t=0.5T$) & -0.66 {\color{gray} ± 0.34} \\ 
                  & HD($t=0.3T$) & -1.15 {\color{gray} ± 0.15} \\ 
\bottomrule
\end{tabular}
\caption{Comparison of NLL (in bits-per-dim) for different models and sampling methods. ``*" denotes that the likelihood was estimated with \autoref{th:rev-sde-likelihood}.}
\label{tab:quantitative}
\end{table}
\section{Stable Diffusion Samples}\label{app:stable_diffusion}
In \autoref{fig:stable-diffusion} we provide a comparison of regular samples and high-density samples generated with algorithm \ref{alg:high-p-sampling} using the Stable Diffusion v2.1 model \citep{rombach2021highresolution} for multiple values of the threshold parameter $t$. Interestingly, even though the diffusion process happens in the latent space, we see similar behavior to pixel-space diffusion models discussed in \autoref{sec:diff-prob-landscape}. Specifically, the high-density samples exhibit cartoon-like features or are blurry images, depending on the threshold parameter $t$ value.
\begin{figure}
    \centering
    \includegraphics[width=0.97\linewidth]{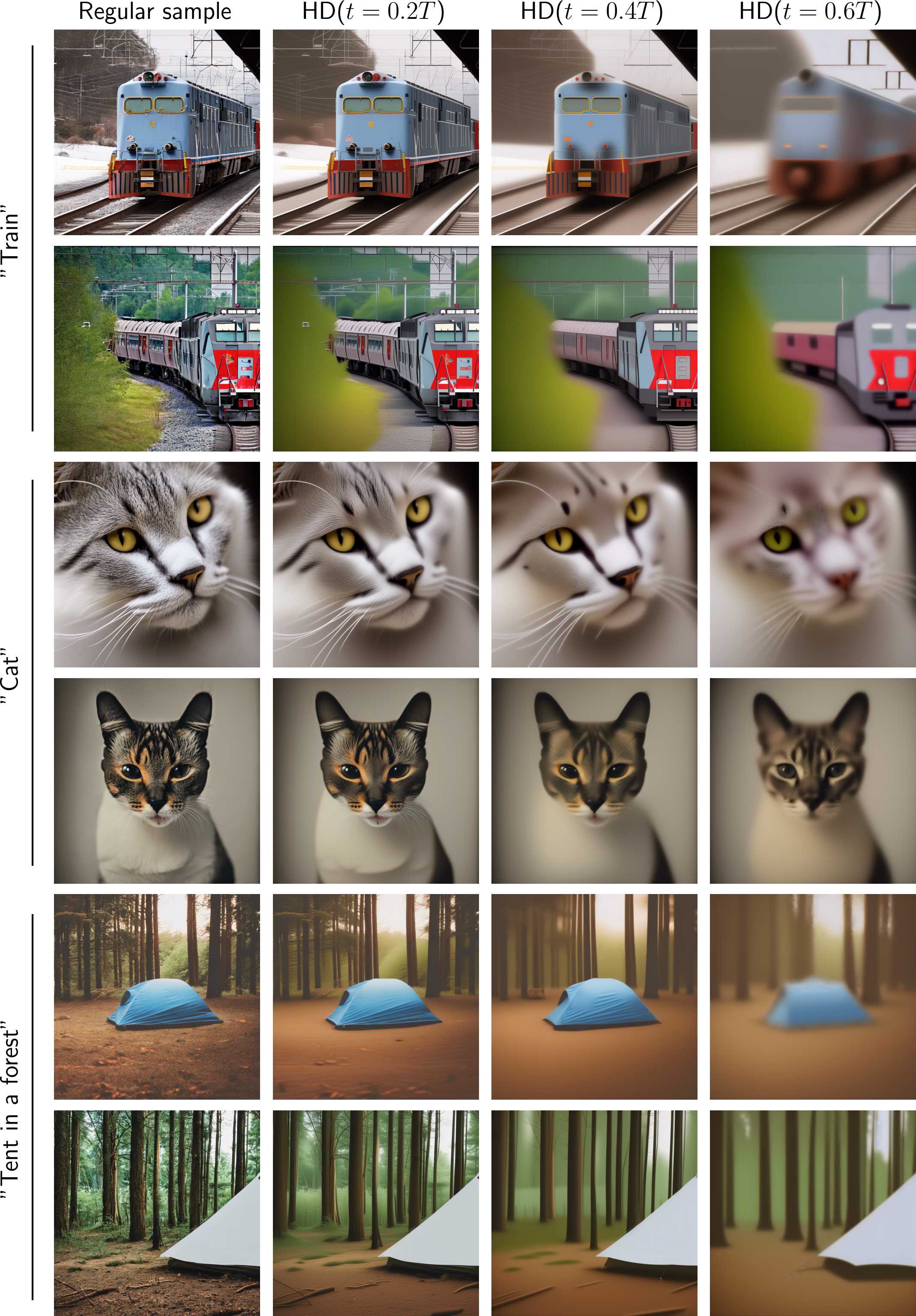}
    \caption{Regular vs High-Density samples on the Stable Diffusion v2.1 model. We added the prefix "A photo of" to each prompt to obtain more realistic images.}
    \label{fig:stable-diffusion}
\end{figure}
\section{Why do cartoons and blurry images occupy high-density regions?}\label{app:explanation}
\emph{Local intrinsic dimension} (LID) is a measure of an image's complexity and can be interpreted as ``the number of local factors of variation" \citep{kamkari2024geometric}. For image data, PNG compression size is commonly used as a proxy for LID; that is, higher PNG compression size indicates higher LID \citep{kamkari2024geometric}.

In \autoref{fig:logp-vs-png-ffhq}, we observed a strong correlation between the model's log-likelihood estimation and the image's file size after PNG compression. This relationship can be attributed to a deeper connection between LID and the likelihood of data with varying levels of added noise \citep{tempczyk2022lidl,kamkari2024geometric}. Since diffusion models estimate densities across varying noise levels, their likelihood estimates naturally correlate with LID, providing an intuitive explanation for the observed relationship.

This finding suggests that the diffusion model’s (negative) log-likelihood estimates can be interpreted as a measure of the number of local factors of variation. This insight helps explain why simple images, such as cartoons or blurry images, often exhibit higher likelihoods than complex, high-detail images.

Consider the samples generated with Stable Diffusion v2.1 in \autoref{fig:stable-diffusion-lid}. After zooming in, one can observe that high-density samples exhibit significantly less local detail and thus a lower intrinsic dimension.
\begin{figure}
    \centering
    \includegraphics[width=0.95\linewidth]{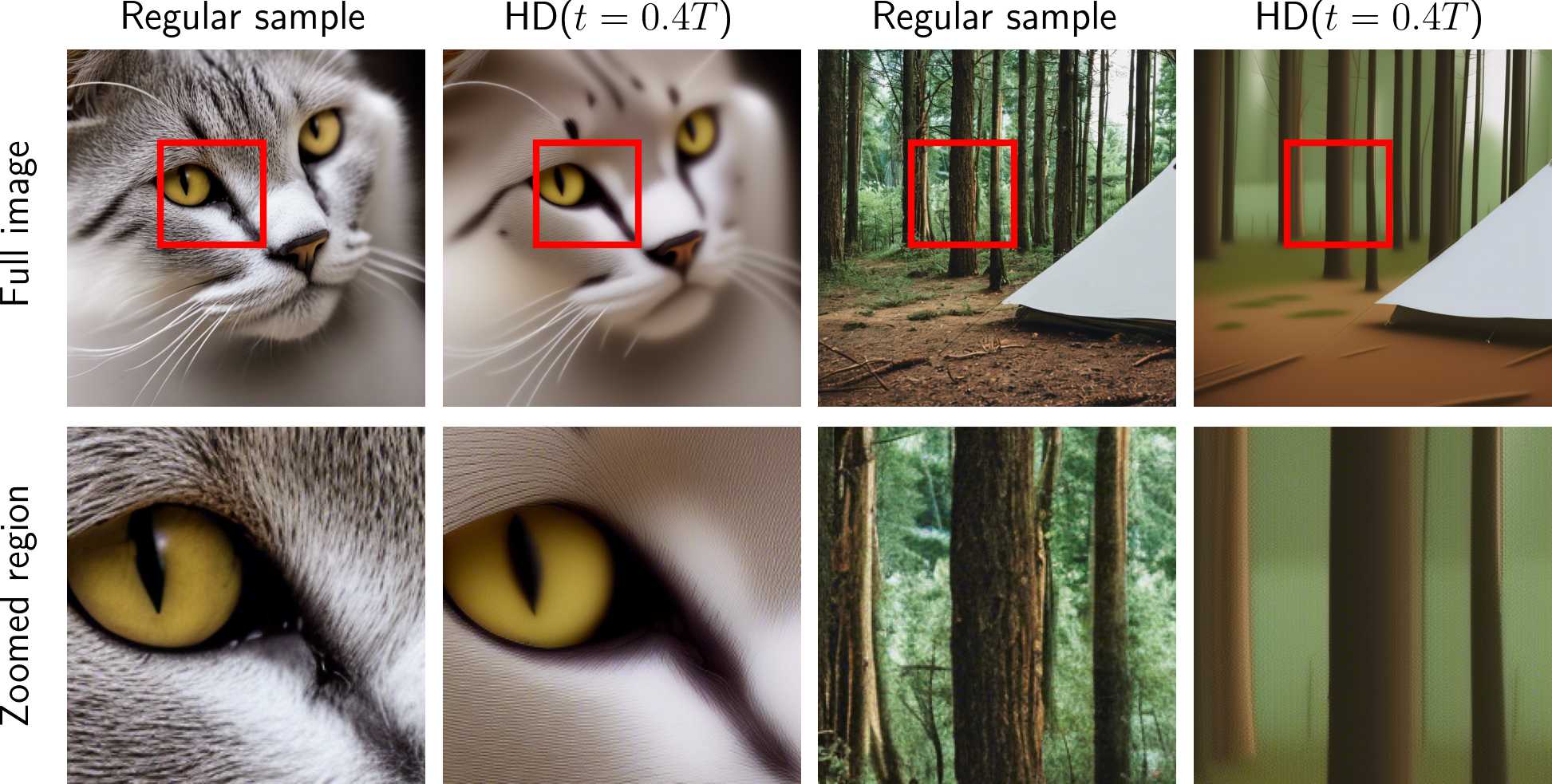}
    \caption{High-density samples have much less detail than regular samples and thus a lower local intrinsic dimension.}
    \label{fig:stable-diffusion-lid}
\end{figure}
\section{Cartoon generation}\label{app:cartoons}
Recently, \citet{zhao2023null} proposed to alter the generation in guided diffusion models to bias the sampler to produce cartoon-like images. Specifically, the proposed method modifies classifier-free guidance \citep{ho2022classifier} by replacing the intermediate noisy model input corresponding to the null-guidance with the so-called ``noise disturbance".

The main difference between our results, \citet{zhao2023null}, and other cartoon-generation methods such as \citet{chen2020animegan}, is that our aim was not to build a cartoon generator. Our goal was to study high-density regions of diffusion models and we developed a method (algorithm \ref{alg:high-p-sampling}) to efficiently generate points from such regions. The fact that these samples turned out to exhibit cartoon-like features was a surprising discovery that we believe is of interest to a wider research community. Especially in light of a recent report that inspired our study, which attributes the success of guided diffusion models to their ability to avoid low-density regions \citep{karras2024guiding}. We show that targeting the highest possible densities is not desirable either in high-quality image generation tasks.
\section{Limitations}\label{app:limitations}
\paragraph{Stochastic likelihood tracking}
While the likelihood estimation methods introduced in \autoref{sec:exact_likelihood} and \autoref{sec:accuracy} apply to any diffusion SDE, regardless of how the score function is parametrized, there are inherent limitations regarding stochastic sampling. Our novel method for likelihood estimation introduced in \autoref{th:rev-sde-likelihood} is beneficial as compared to the PF-ODE (\autoref{eq:augmented-ode-dynamics}) as it does not require estimating any higher-order derivatives and is \emph{free} when doing stochastic sampling. However, stochastic sampling usually requires more iterations than deterministic sampling \citep{song2020score}. Therefore, even though it is roughly twice as expensive to evaluate each step in the augmented PF-ODE (\autoref{eq:augmented-ode-dynamics}) as compared to the augmented reverse SDE (\autoref{eq:rev-sde-logp-dynamics}), PF-ODE may require fewer total steps.

\paragraph{Mode-tracking} In \autoref{th:mode-ode} we derived an exact ODE, which follows the mode exactly. However, there are three limitations:
\begin{itemize}
    \item The result only holds for SDEs with a linear drift as the proof relies on Gaussian forward transition probabilities;
    \item Finding the exact mode is only guaranteed when a smooth mode-tracking curve exists and it is difficult to verify in practice and does not always hold (\autoref{app:non-smooth-mode-traj});
    \item The ODE is prohibitively expensive, especially in higher dimensions. See \autoref{app:mode-tracking-cost}.
\end{itemize}
\end{document}